\newcommand{\norm}[1]{\left\lVert#1\right\rVert}
\DeclareMathOperator{\Tr}{Tr}
\DeclareMathOperator{\LL}{LL}
\DeclareMathOperator{\sgn}{sgn}
\DeclareMathOperator{\IS}{IS}
\DeclareMathOperator{\Var}{Var}
\DeclareMathOperator{\DFT}{DFT}
\DeclareMathOperator{\IDFT}{IDFT}
\DeclareMathOperator{\Cov}{Cov}
\DeclareMathOperator{\KL}{KL}
\DeclareMathOperator{\EX}{\mathbb{E}}
\newtheorem{definition}{Definition}
\newtheorem{proposition}{Proposition}
\newtheorem{theorem}{Theorem} 
\begin{document}

\begin{titlepage}

    \newcommand{\HRule}{\rule{\linewidth}{0.5mm}} 

    \center 




    \textsc{\Large MSc Artificial Intelligence}\\[0.2cm]

    \textsc{\Large Master Thesis}\\[0.5cm]




    \HRule \\[0.4cm]

    { \huge \bfseries Poolformer: Recurrent Networks with Pooling for Long-Sequence Modeling}\\[0.4cm] 

    \HRule \\[0.5cm]




    by\\[0.2cm]

    \textsc{\Large Daniel Gallo Fernández}\\[0.2cm] 

    15110664\\[1cm]




    {\Large June 29, 2025}\\[1cm] 

    36 ECTS\\ %

    January 2025 - June 2025\\[1cm]%




    \begin{minipage}[t]{0.4\textwidth}

        \begin{flushleft} \large

            \emph{Supervisor:} \\

            Dr \textsc{James Townsend} 

        \end{flushleft}

    \end{minipage}

    ~

    \begin{minipage}[t]{0.4\textwidth}

        \begin{flushright} \large

            \emph{Examiner:} \\

            Dr \textsc{Jan-Willem van de Meent}\\

            \vspace{0.5cm}

            \emph{Second reader:} \\

            Dr \textsc{James Townsend}\\

        \end{flushright}

    \end{minipage}\\[2cm]




    \includegraphics[width=10cm]{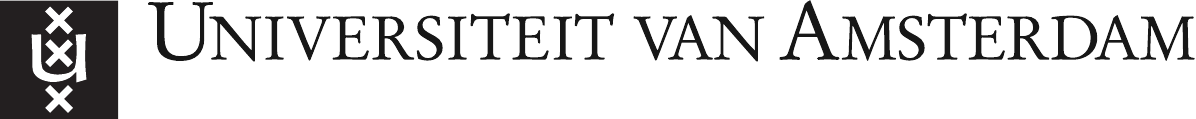}


    \vfill 

\end{titlepage}

\pagenumbering{roman}

\tableofcontents

\begin{abstract}
    Sequence-to-sequence models have become central in Artificial Intelligence, particularly following the introduction of the transformer architecture. While initially developed for Natural Language Processing, these models have demonstrated utility across domains, including Computer Vision. Such models require mechanisms to exchange information along the time dimension, typically using recurrent or self-attention layers. However, self-attention scales quadratically with sequence length, limiting its practicality for very long sequences.

    We introduce Poolformer, a sequence-to-sequence model that replaces self-attention with recurrent layers and incorporates pooling operations to reduce sequence length. Poolformer is defined recursively using SkipBlocks, which contain residual blocks, a down-pooling layer, a nested SkipBlock, an up-pooling layer, and additional residual blocks. We conduct extensive experiments to support our architectural choices.

    Our results show that pooling greatly accelerates training, improves perceptual metrics (FID and IS), and prevents overfitting. Our experiments also suggest that long-range dependencies are handled by deep layers, while shallow layers take care of short-term features.

    Evaluated on raw audio, which naturally features long sequence lengths, Poolformer outperforms state-of-the-art models such as SaShiMi \cite{SaShiMi} and Mamba \cite{mamba}. Future directions include applications to text and vision, as well as multi-modal scenarios, where a Poolformer-based LLM could effectively process dense representations of images and videos.
\end{abstract}

\pagenumbering{arabic}

\chapter{Introduction}
Sequence-to-sequence models have become central in Artificial Intelligence, particularly following the introduction of the transformer architecture. While initially developed for Natural Language Processing, these models have demonstrated utility across domains. A notable example is Computer Vision, where vision transformers are increasingly displacing traditional convolutional neural networks.

Sequence-to-sequence models require mechanisms to exchange information along the time dimension, typically using recurrent or self-attention layers. Recurrent layers need to compress the past into a fixed-size state, while self-attention uses a state that grows linearly with the sequence length. Denoting the sequence length as $S$, the time complexity of recurrent layers is $O(S)$, but this cannot be parallelized easily. Self-attention, on the other hand, is $O(S^2)$ but it is easily parallelizable. The quadratic complexity of self-attention becomes a problem when the sequence length is significantly larger than the feature dimension, which is the case that we focus on. We propose Poolformer, a sequence-to-sequence model that replaces self-attention layers with recurrent layers, and that makes use of pooling to reduce the sequence length. We evaluate our model on raw audio, which naturally features long sequence lengths.

Chapter 2 contains all the theoretical background. It starts by reviewing signal processing concepts, such as $\mu$-law-encoding (used to quantize audio before feeding it to a model), the fast Fourier transform (used to perform convolutions in linearithmic time), and mel-spectrograms (used to compute the FID and IS metrics). It continues explaining metrics used to assess the quality and diversity of generative models. We then review different kinds of sequence-to-sequence models: recurrent neural networks, self-attention, and state-space models. Finally, we explain how to parallelize recurrent neural networks using convolutions and associative scans.

Chapter 3 presents our proposed Poolformer model, that is defined recursively using SkipBlocks, which contain residual blocks, a down-pooling layer, a nested SkipBlock, an up-pooling layer, and additional residual blocks. We also explain different design decisions such as skip-connection placement, pooling and gating.

Chapter 4 presents our experiments on raw audio data. We conduct extensive experiments to support our architectural choices from Chapter 3. Our results show that pooling greatly accelerates training, improves perceptual metrics (FID and IS), and prevents overfitting. They also suggest that long-range dependencies are handled by deep layers, while shallow layers take care of short-term features. In addition, Poolformer outperforms state-of-the-art models such as SaShiMi \cite{SaShiMi} and Mamba \cite{mamba}.

Chapter 5 contains the conclusions, and future research directions, that include applications to text and vision, as well as multi-modal scenarios, where a Poolformer-based LLM could effectively process dense representations of images and videos.

\chapter{Theoretical background}
This chapter starts by reviewing signal processing concepts, such as $\mu$-law-encoding (used to quantize audio before feeding it to a model), the fast Fourier transform (used to perform convolutions in linearithmic time), and mel-spectrograms (used to compute the FID and IS metrics). It continues explaining metrics used to assess the quality and diversity of generative models. We then review different kinds of sequence-to-sequence models: recurrent neural networks, self-attention, and state-space models. Finally, we explain how to parallelize recurrent neural networks using convolutions and associative scans.
\section{Signal Processing}

\subsection{Mu-law encoding}
Most WAV files contain uncompressed audio with 16 bits per sample. If we feed this raw into a sequence-to-sequence model, we would need a vocabulary size of $2^{16} = 65 536$. Even though this is not massive for today's standards, it makes experiments slower. In addition, most of the related literature \cite{SaShiMi, mamba} uses a vocabulary size of $2^{8}$. For this reason, we use 8-bit samples, and defer 16-bit samples for future work.

To quantize the samples, we use $\mu$-law encoding, that expands small amplitudes, and squashes large ones. This makes sense because most natural audio signals have a higher density of information at lower amplitudes, while large amplitudes are less frequent. By applying $\mu$-law encoding, we allocate more quantization levels to small amplitudes (where human perception is more sensitive), and fewer levels to large amplitudes.

\begin{definition}[$\mu$-law encoding]
    \begin{equation}
        F(x)      = \sgn(x) \dfrac{\ln(1 + \mu |x|)}{\ln(1 + \mu)}, \quad  -1 \leq x \leq 1.
    \end{equation}
\end{definition}

\begin{definition}[$\mu$-law decoding]
    \begin{equation}
        F^{-1}(y)  = \sgn(y) \dfrac{(1 + \mu)^{|y|} - 1}{\mu}, \quad  -1 \leq y \leq 1.
    \end{equation}
\end{definition}

We first map a 16-bit integer to the continuous interval $[-1, 1]$, then apply $F$, and finally quantize the result to an 8-bit integer. We can undo this process by using $F^{-1}$, though there will be some information loss due to quantization. This is visualized in Figure \ref{fig:mu-law}.

\begin{figure}
    \centering
    \begin{subfigure}{0.49\textwidth}
        \centering
        \includegraphics[width=\textwidth]{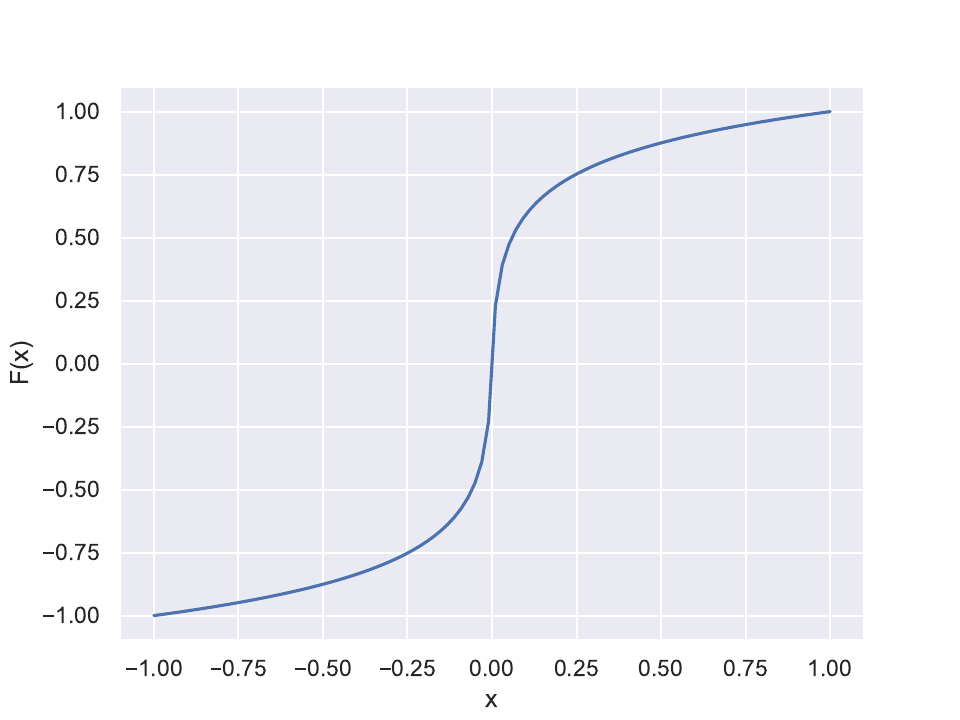}
        \caption{$\mu$-law encoding function}
    \end{subfigure}
    \hfill
    \begin{subfigure}{0.49\textwidth}
        \centering
        \includegraphics[width=\textwidth]{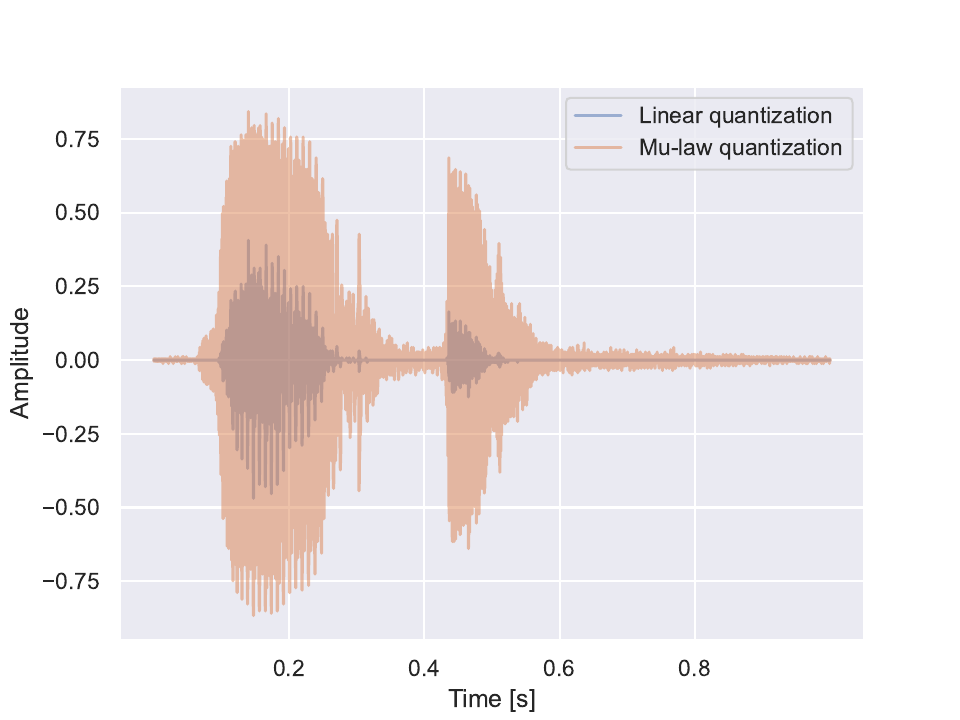}
        \caption{Comparison of different quantization schemes}
    \end{subfigure}
    \caption{\textbf{$\mu$-law encoding visualization}. The $\mu$-law function (left) compresses large amplitudes and expands small ones. When applied to a signal (right), it produces a ``stretched'' waveform.}
    \label{fig:mu-law}
\end{figure}

\subsection{Fourier transform}

It is often useful to transform raw audio waveforms to the frequency domain, as many audio properties are more easily analyzed in this representation. Our goal will be to transform audio signals (one-dimensional sequences) to mel-spectrograms, which have two dimensions: time and frequency.

\begin{definition}[Fourier transform]
    Let $f \colon \mathbb{R} \to \mathbb{C}$ be Lebesgue integrable, then
    \begin{equation}
        \hat{f}(\omega) = \int_{-\infty}^{\infty} f(x) \exp(-2 \pi i \omega x) \, dx.
    \end{equation}
\end{definition}

\begin{definition}[Inverse Fourier transform]
    Let $\hat{f} \colon \mathbb{R} \to \mathbb{C}$ be Lebesgue integrable, then
    \begin{equation}
        f(x) = \int_{-\infty}^{\infty} \hat{f}(\omega) \exp(2 \pi i \omega x) \, d\omega.
    \end{equation}
\end{definition}

The Fourier transform reveals the amplitude and phase of each frequency component present in the signal. While the Fourier transform is a powerful theoretical tool, practical applications, especially in digital signal processing, deal with discrete-time signals and require computationally feasible algorithms. This leads to the discrete Fourier transform (DFT) and its efficient implementation, the fast Fourier transform (FFT).

\subsection{Discrete Fourier Transform}
\begin{definition}[Discrete Fourier transform]
    Given $f \in \mathbb{C}^n$,
    \begin{equation}
        \DFT(f)_k = \hat{f}_k = \sum_{j = 0}^{n - 1} f_j \exp\left(-i k j \frac{2 \pi}{n}\right).
    \end{equation}
\end{definition}

\begin{definition}[Discrete inverse Fourier transform]
    Given $\hat{f} \in \mathbb{C}^n$,
    \begin{equation}
        \IDFT(\hat{f})_k = \frac{1}{n} \sum_{j = 0}^{n - 1} \hat{f}_j \exp\left(i k j \frac{2 \pi}{n}\right).
    \end{equation}
\end{definition}

\begin{theorem}
    \label{thm:inverse_fourier}
    The inverse discrete Fourier transform can be expressed in terms of the discrete Fourier transform.
    \begin{align}
        \IDFT\left(\hat{f}\right) = \frac{1}{n} \DFT\left(\hat{f}^*\right)^*.
    \end{align}
\end{theorem}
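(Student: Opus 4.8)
The plan is to prove the identity componentwise, i.e. to show that for every index $k \in \{0, 1, \dots, n-1\}$ we have $\IDFT(\hat{f})_k = \frac{1}{n} \overline{\DFT(\hat{f}^*)_k}$, where $\overline{\,\cdot\,}$ denotes complex conjugation and $\hat{f}^*$ is the entrywise conjugate of $\hat{f}$. Since both sides of the claimed equation are vectors in $\mathbb{C}^n$, equality of all components is equivalent to the stated identity.

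First I would write out $\DFT(\hat{f}^*)_k$ directly from the definition of the discrete Fourier transform applied to the vector $\hat{f}^*$, namely $\DFT(\hat{f}^*)_k = \sum_{j=0}^{n-1} \overline{\hat{f}_j}\, \exp\!\left(-ikj\tfrac{2\pi}{n}\right)$. Then I would take the complex conjugate of this expression. Using that conjugation is additive and multiplicative, that $\overline{\overline{\hat{f}_j}} = \hat{f}_j$, and that $\overline{\exp(-ikj\tfrac{2\pi}{n})} = \exp(ikj\tfrac{2\pi}{n})$ because $kj\tfrac{2\pi}{n}$ is real, this yields $\overline{\DFT(\hat{f}^*)_k} = \sum_{j=0}^{n-1} \hat{f}_j\, \exp\!\left(ikj\tfrac{2\pi}{n}\right)$.

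Finally I would compare this with the definition of the inverse discrete Fourier transform, $\IDFT(\hat{f})_k = \frac{1}{n}\sum_{j=0}^{n-1} \hat{f}_j\, \exp\!\left(ikj\tfrac{2\pi}{n}\right)$, observe that the two sums are identical, and conclude $\IDFT(\hat{f})_k = \frac{1}{n}\overline{\DFT(\hat{f}^*)_k}$ for each $k$, which is exactly the asserted equation in vector form.

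There is no real obstacle here: the only thing to be careful about is the bookkeeping of which quantities are conjugated (the entries of $\hat f$ versus the exponential kernel) and the fact that the real exponent makes the kernel's conjugate flip sign in the exponent, turning the ``forward'' kernel into the ``inverse'' kernel. The factor $\frac{1}{n}$ simply comes along from the definition of $\IDFT$ and is untouched by the conjugation since it is real and positive.
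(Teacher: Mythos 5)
Your proposal is correct and follows essentially the same route as the paper: expand $\DFT(\hat f^*)_k$ componentwise, conjugate the sum so the double conjugation restores $\hat f_j$ and flips the sign in the exponent, and recognize the definition of $\IDFT(\hat f)_k$. No gaps.
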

\begin{proof}
    \begin{align}
        \left(\frac{1}{n} \text{DFT}(\hat{f}^*)^* \right)_k & = \frac{1}{n} \left(\sum_{j = 0}^{n - 1} \hat{f}_j^* \exp\left(-i k j \frac{2 \pi}{n}\right) \right)^* \\
                                                            & =  \frac{1}{n} \sum_{j = 0}^{n - 1} \hat{f}_j \exp\left(i k j \frac{2 \pi}{n}\right)                   \\
                                                            & = \IDFT\left(\hat{f}\right)_k.
    \end{align}
\end{proof}
Note that the discrete Fourier transform is a linear operator, so the operation can be expressed as
\begin{align}
    \label{eq:dft}
    \hat{f} & = F_n f \notag \\
    \begin{bmatrix}
        \hat{f}_0 \\
        \hat{f}_1 \\
        \hat{f}_2 \\
        \vdots    \\
        \hat{f}_{n-1}
    \end{bmatrix}
            & =
    \begin{bmatrix}
        1      & 1            & 1               & \cdots & 1                   \\
        1      & \omega^1     & \omega^2        & \cdots & \omega^{n-1}        \\
        1      & \omega^2     & \omega^4        & \cdots & \omega^{2(n-1)}     \\
        \vdots & \vdots       & \vdots          & \ddots & \vdots              \\
        1      & \omega^{n-1} & \omega^{2(n-1)} & \cdots & \omega^{(n-1)(n-1)}
    \end{bmatrix}
    \begin{bmatrix}
        f_0    \\
        f_1    \\
        f_2    \\
        \vdots \\
        f_{n-1}
    \end{bmatrix},
\end{align}
where $\omega = \exp\left(-2\pi i / n\right)$ is the primitive $n$th root of unity. Note that this makes $F_n$ a complex matrix. This operation is $O(n^2)$ when implemented naively, but can be brought down to $O(n \log n)$ using a divide-and-conquer approach: the fast Fourier transform (FFT).

\begin{figure}
    \centering
    \includegraphics[width=0.40\linewidth]{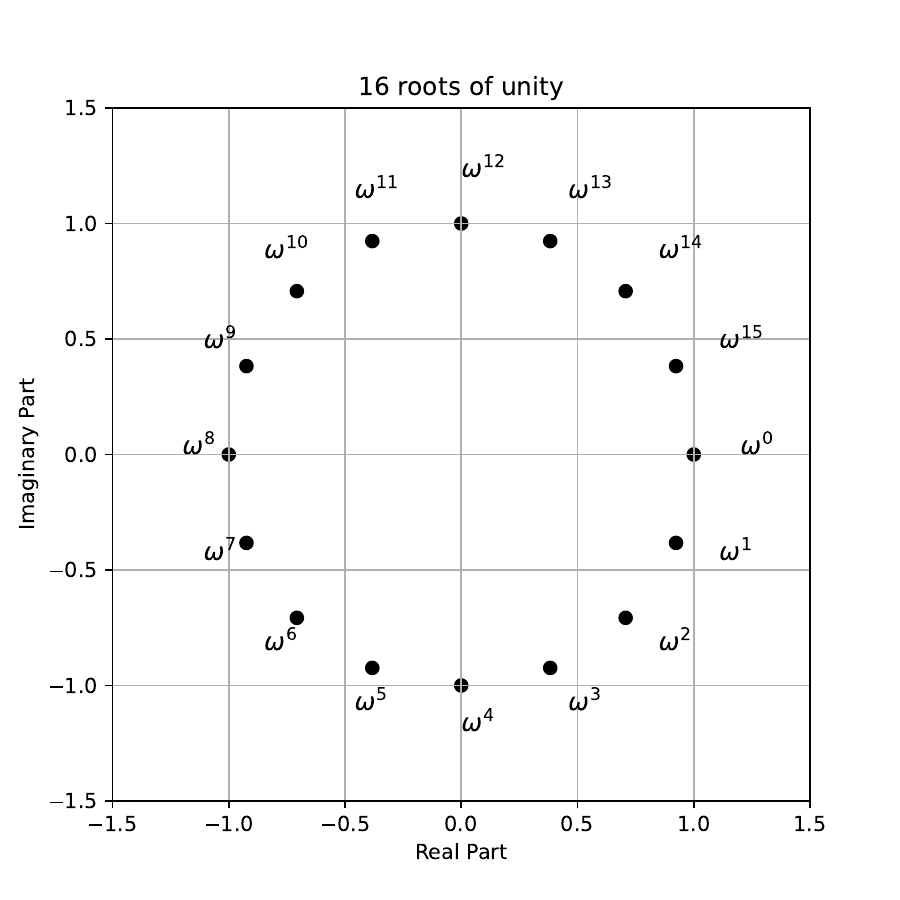}
    \includegraphics[width=0.55\linewidth]{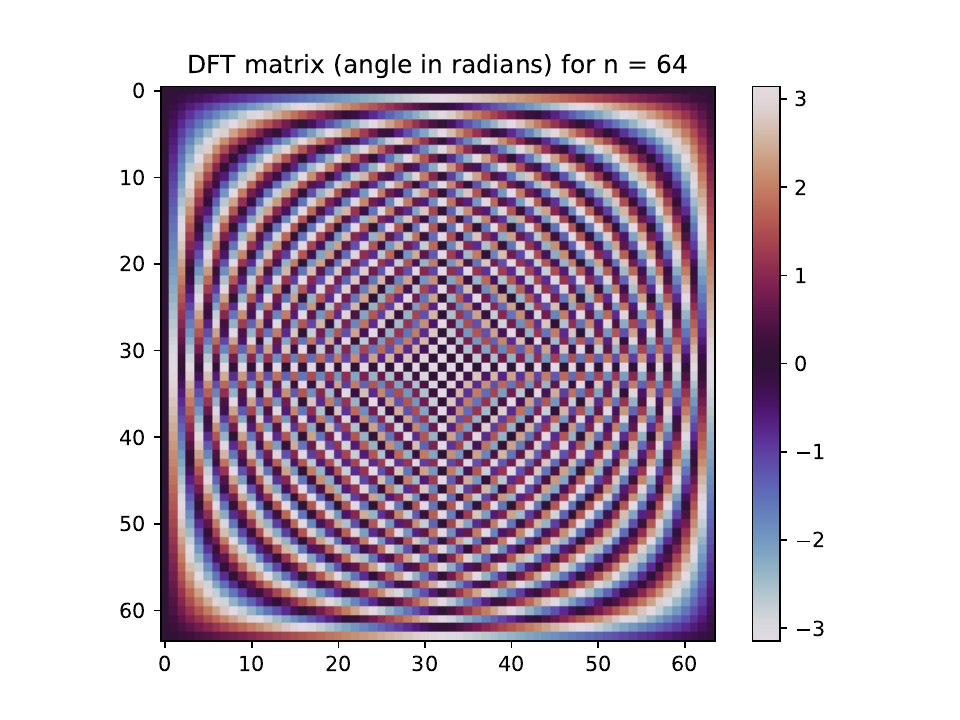}
    \caption{Symmetries of the discrete Fourier transform. (Left) Roots of unity in the complex plane. (Right) Structure of the DFT matrix illustrating periodicity and symmetry.}
    \label{fig:dft}
\end{figure}

To get an intuition, let us examine the case $n = 4$ first. Equation \eqref{eq:dft} expresses $\hat{f}$ in terms of $f$ and powers of $\omega = -i$. We want to ``divide'' this problem, and express it in terms of the DFT for $n = 2$, that uses powers of $\omega^2 = -1$. Thus,

\begin{align}
    \begin{bmatrix}
        \hat{f}_0 \\
        \hat{f}_1 \\
        \hat{f}_2 \\
        \hat{f}_{3}
    \end{bmatrix}
     & =
    \begin{bmatrix}
        1 & 1          & 1          & 1          \\
        1 & \omega^1   & \omega^2   & \omega^{3} \\
        1 & \omega^2   & \omega^4   & \omega^{6} \\
        1 & \omega^{3} & \omega^{6} & \omega^{9}
    \end{bmatrix}
    \begin{bmatrix}
        f_0 \\
        f_1 \\
        f_2 \\
        f_{3}
    \end{bmatrix}              \\
     & =
    \begin{bmatrix}
        \omega^0 & \omega^0 & \omega^0   & \omega^0   \\
        \omega^0 & \omega^1 & \omega^2   & \omega^{3} \\
        \omega^0 & \omega^2 & \omega^0   & \omega^{2} \\
        \omega^0 & \omega^3 & \omega^{2} & \omega^{1}
    \end{bmatrix}
    \begin{bmatrix}
        f_0 \\
        f_1 \\
        f_2 \\
        f_3
    \end{bmatrix}
     & \text{($\omega^4 = 1$)}.
\end{align}

Observe that $F_4$ exhibits a very clear structure. Row $i$ starts with $\omega^0 = 1$, which gets multiplied by $\omega^i$ repeatedly. The key step is realizing that if we take every other element from each row, we will get $F_2$. This makes a lot of sense, because we are just walking the unit circle clockwise twice as fast. We can express this as follows:

\begin{align}
    \begin{bmatrix}
        \hat{f}_0 \\
        \hat{f}_1 \\
        \hat{f}_2 \\
        \hat{f}_3
    \end{bmatrix}
     & =
    \begin{bmatrix}
        \omega^0 & \omega^0 & \omega^0 & \omega^0 \\
        \omega^0 & \omega^2 & \omega^1 & \omega^3 \\
        \omega^0 & \omega^0 & \omega^2 & \omega^2 \\
        \omega^0 & \omega^2 & \omega^3 & \omega^1 \\
    \end{bmatrix}
    \begin{bmatrix}
        f_0 \\
        f_2 \\
        f_1 \\
        f_3 \\
    \end{bmatrix} \\
     & =
    \begin{bmatrix}
        F_2 & \begin{array}{cc} \omega^0 & \omega^0 \\ \omega^1 & \omega^3 \end{array} \\
        F_2 & \begin{array}{cc} \omega^2 & \omega^2 \\ \omega^3 & \omega^1 \end{array}
    \end{bmatrix}.
\end{align}

What about the remaining columns? They will be like $F_2$, but each row will be offset by increasing powers of $\omega$: $\omega^0$, $\omega^1$, $\omega^2$, and $\omega^3$. Because $\omega^2 = -1$, we can re-write this sequence as $\omega^0$, $\omega^1$, $-\omega^0$, and $-\omega^1$. Finally,  we obtain that

\begin{align}
    \begin{bmatrix}
        \hat{f}_0 \\
        \hat{f}_1 \\
        \hat{f}_2 \\
        \hat{f}_{3}
    \end{bmatrix}
    =
    \begin{bmatrix}
        F_2 & D_2 F_2  \\
        F_2 & -D_2 F_2
    \end{bmatrix}
    \begin{bmatrix}
        f_0 \\
        f_2 \\
        f_1 \\
        f_3 \\
    \end{bmatrix},
\end{align}
where $D_2$ just multiplies each row by increasing factors of $\omega$:
\begin{equation}
    D_2 =
    \begin{bmatrix}
        1 & 0      \\
        0 & \omega
    \end{bmatrix}.
\end{equation}
Notice that there was nothing special about $n = 4$, and we can apply the same reasoning to conclude that if $n$ is a power of two,
\begin{align}
    \hat{f}
     & =\begin{bmatrix}
            F_{n/2} & D_{n/2} F_{n/2}  \\
            F_{n/2} & -D_{n/2} F_{n/2}
        \end{bmatrix}
    \begin{bmatrix}
        f_\text{even} \\
        f_\text{odd}  \\
    \end{bmatrix}                 \\
     & =
    \begin{bmatrix}
        F_{n/2} f_\text{even} + D_{n/2} F_{n/2}  f_\text{odd} \\
        F_{n/2} f_\text{even} - D_{n/2} F_{n/2}  f_\text{odd} \\
    \end{bmatrix},
\end{align}
where
\begin{equation}
    D_{n/2} =
    \begin{bmatrix}
        1      & 0      & 0        & \cdots & 0              \\
        0      & \omega & 0        & \cdots & 0              \\
        0      & 0      & \omega^2 & \cdots & 0              \\
        \vdots & \vdots & \vdots   & \ddots & \vdots         \\
        0      & 0      & 0        & \cdots & \omega^{n/2-1}
    \end{bmatrix}.
\end{equation}

This involves two operations: evaluating two DFTs on inputs that are half the size, and combining those results (which is linear because $D_{n/2}$ is diagonal). Using the master theorem \cite{cormen2022introduction}, we easily obtain that the runtime of the fast Fourier transform is $O(n \log n)$. Note that because of Theorem \ref{thm:inverse_fourier}, we can also compute the inverse Fourier transform in $O (n\log n)$ time.

\subsection{The Convolution Theorem}

\begin{definition}[Circular Convolution]
    \label{def:convolution}
    Let $f, g \in \mathbb{C}^n$. Their circular convolution is given by
    \begin{equation}
        (f * g)_k = \sum_{j = 0}^{n - 1} f_j g_{k - j},
    \end{equation}
    where the indexes are taken modulo $n$.
\end{definition}

Performing a convolution naively requires $O(n^2)$ operations. The following theorem will allow us to compute the convolution in the frequency domain in linear time. If we also consider the run-time of the FFT (and the inverse FFT), we conclude that convolutions can be computed in $O(n \log n)$ time.

\begin{theorem}[Convolution Theorem]
    \label{thm:convolution}
    Convolution in the time domain corresponds to point-wise multiplication in the frequency domain:
    \begin{equation}
        \widehat{f * g} = \hat{f} \cdot \hat{g}.
    \end{equation}
\end{theorem}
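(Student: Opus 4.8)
The plan is to prove the identity componentwise by a direct computation: expand the left-hand side using the definition of the DFT, substitute the definition of the circular convolution, and show that the resulting double sum factors into the product $\hat f_k \hat g_k$. First I would write, for each index $k$,
\begin{equation}
    \widehat{f * g}_k = \sum_{m = 0}^{n-1} (f * g)_m \exp\left(-i k m \frac{2\pi}{n}\right),
\end{equation}
and then substitute Definition \ref{def:convolution} to obtain the double sum
\begin{equation}
    \widehat{f * g}_k = \sum_{m = 0}^{n-1} \sum_{j = 0}^{n-1} f_j\, g_{m-j} \exp\left(-i k m \frac{2\pi}{n}\right),
\end{equation}
where the index $m - j$ is read modulo $n$.

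Next I would interchange the two finite sums (immediate) and, for each fixed $j$, perform the change of variable $\ell = m - j$. Since $m \mapsto m - j$ is a bijection on the residues $\{0,\dots,n-1\}$, the inner sum becomes $\sum_{\ell=0}^{n-1} g_\ell \exp\left(-ik(j+\ell)\frac{2\pi}{n}\right)$; splitting the exponential as $\exp\left(-ikj\frac{2\pi}{n}\right)\exp\left(-ik\ell\frac{2\pi}{n}\right)$ and pulling the $j$-dependent factor out of the inner sum, the double sum separates as
\begin{equation}
    \widehat{f * g}_k = \left(\sum_{j=0}^{n-1} f_j \exp\left(-ikj\frac{2\pi}{n}\right)\right)\left(\sum_{\ell=0}^{n-1} g_\ell \exp\left(-ik\ell\frac{2\pi}{n}\right)\right) = \hat f_k\, \hat g_k,
\end{equation}
which is the $k$-th component of the right-hand side; as $k$ was arbitrary, this gives the claimed vector identity.

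The one step that genuinely uses the \emph{circular} nature of the convolution — and the only place I expect to need care — is the change of variable: when $m - j$ is reduced modulo $n$ we are tacitly replacing $g_{m-j}$ by $g_\ell$ with $\ell \equiv m - j \pmod n$, and one must check this does not spoil the exponential factor. It does not, because $\exp\left(-ik(m-j)\frac{2\pi}{n}\right)$ depends on $m - j$ only through its residue modulo $n$: writing $\omega = \exp(-2\pi i / n)$ as in Equation \eqref{eq:dft}, we have $\omega^n = 1$, so $\omega^{k(m-j)} = \omega^{k\ell}$. Thus the $n$-periodicity of the roots of unity is exactly what makes the wrap-around in the circular convolution harmless, and the factorization goes through without further hypotheses.
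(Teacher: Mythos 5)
Your proof is correct and follows essentially the same route as the paper's: expand the DFT of the convolution, interchange the finite sums, reindex the inner sum modulo $n$, and split the exponential to factor the double sum into $\hat f_k \hat g_k$. Your explicit remark that the reindexing is harmless because $\omega^n = 1$ makes the wrap-around compatible with the exponential is a welcome bit of extra care that the paper only gestures at with its note about negative indices wrapping around.
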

\begin{proof}
    Note: $g_i$ with $i < 0$ ``wraps-around'', so $-1$ would be $n - 1$ for example.
    \begin{align}
        (\widehat{f * g})_k & = \sum_{j = 0}^{n - 1} (f * g)_j \exp\left(-ikj \frac{2 \pi}{n}\right)                                                            \\
                            & = \sum_{j = 0}^{n - 1} \sum_{l = 0}^{n - 1} f_l g_{j - l} \exp\left(-ikj \frac{2 \pi}{n}\right)                                   \\
                            & = \sum_{l = 0}^{n - 1} f_l \sum_{j = 0}^{n - 1} g_{j - l} \exp\left(-ikj \frac{2 \pi}{n}\right)                                   \\
                            & = \sum_{l = 0}^{n - 1} f_l \sum_{m = 0}^{n - 1} g_{m} \exp\left(-ik(l + m) \frac{2 \pi}{n}\right)                                 \\
                            & =\sum_{l = 0}^{n - 1} f_l  \exp\left(-ikl \frac{2 \pi}{n}\right) \sum_{m = 0}^{n - 1} g_{m} \exp\left(-ikm \frac{2 \pi}{n}\right) \\
                            & = \hat{f}_k \cdot \hat{g}_k.
    \end{align}
\end{proof}

\subsection{Short-Time Fourier Transform}
The Fourier transform, and by extension the discrete Fourier transform, provides a frequency representation of an entire signal. However, many signals, particularly audio, are non-stationary, meaning their frequency content changes over time. For instance, in speech, different phonemes have different frequency characteristics, and in music, notes and chords change over time. Applying a single Fourier transform over the entire duration of such a signal would average out these temporal variations, losing critical information. The core idea of the short-time Fourier transform is to break the signal into short segments, and compute the FFT for each segment, thus capturing how the frequency content changes over time. A similar technique, the modified discrete cosine transform (MDCT), is widely used for audio compression.

\subsubsection{The Need for Windowing}

When we extract a segment (or ``frame'') from a longer signal to analyze its local frequency content using the DFT, we are implicitly assuming that this finite segment represents one period of an infinitely periodic signal. If the extracted segment is simply truncated (i.e., cut abruptly at its ends, which is equivalent to applying a rectangular window), sharp discontinuities can be created at the boundaries of this implied periodic repetition.

These discontinuities lead to a phenomenon called spectral leakage. In the frequency domain, this results in energy from the true frequency component ``leaking'' into adjacent frequency bins. To mitigate this, we apply a window function to each segment before computing its FFT. The window function is often zero or near-zero at the ends of the segment, which reduces discontinuities. See Figure \ref{fig:windowing} for a visual explanation.

\begin{figure}[h]
    \centering
    \begin{subfigure}{0.32\textwidth}
        \centering
        \includegraphics[width=\linewidth]{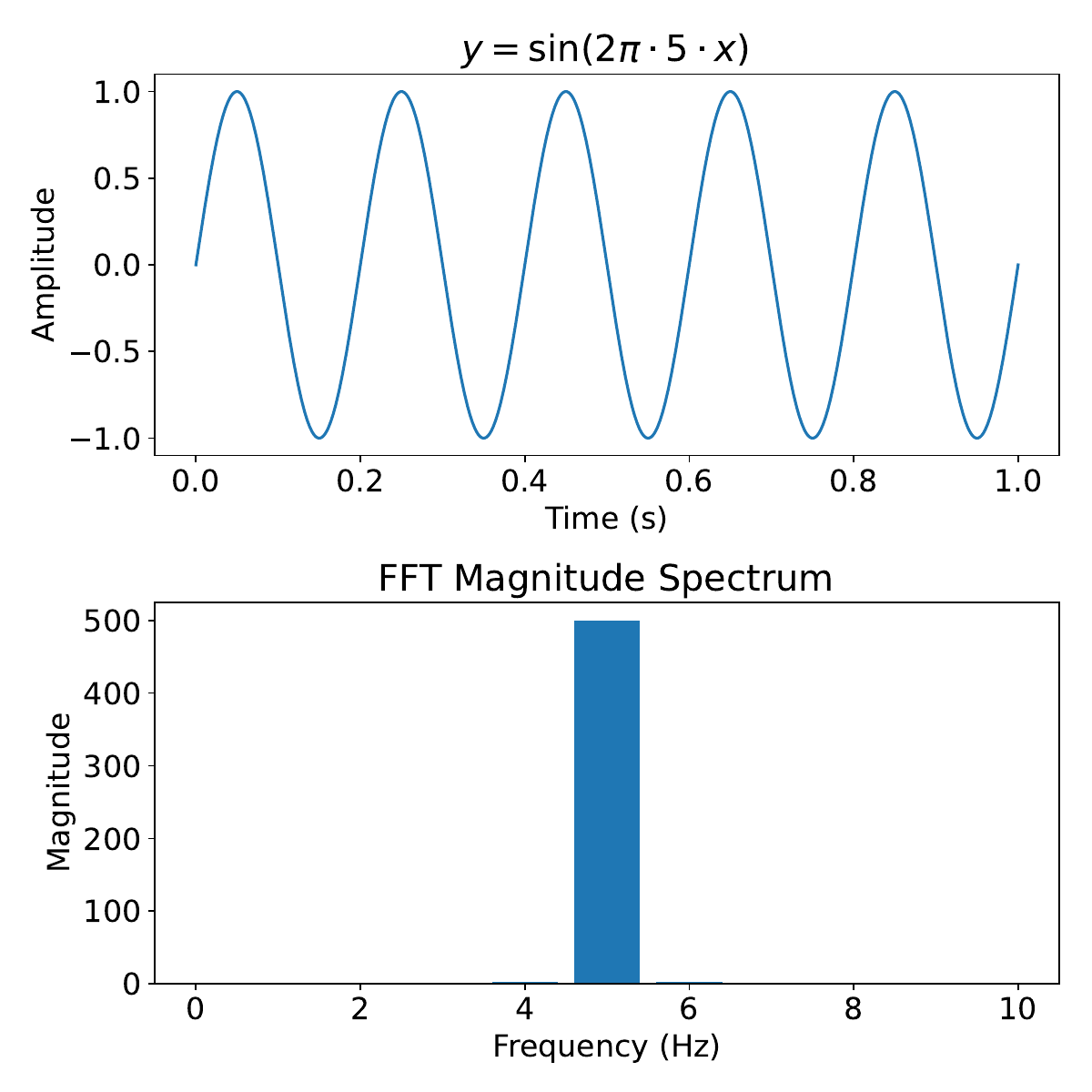}
        \caption{Rectangular window (whole number of periods)}
    \end{subfigure}
    \hfill
    \begin{subfigure}{0.32\textwidth}
        \centering
        \includegraphics[width=\linewidth]{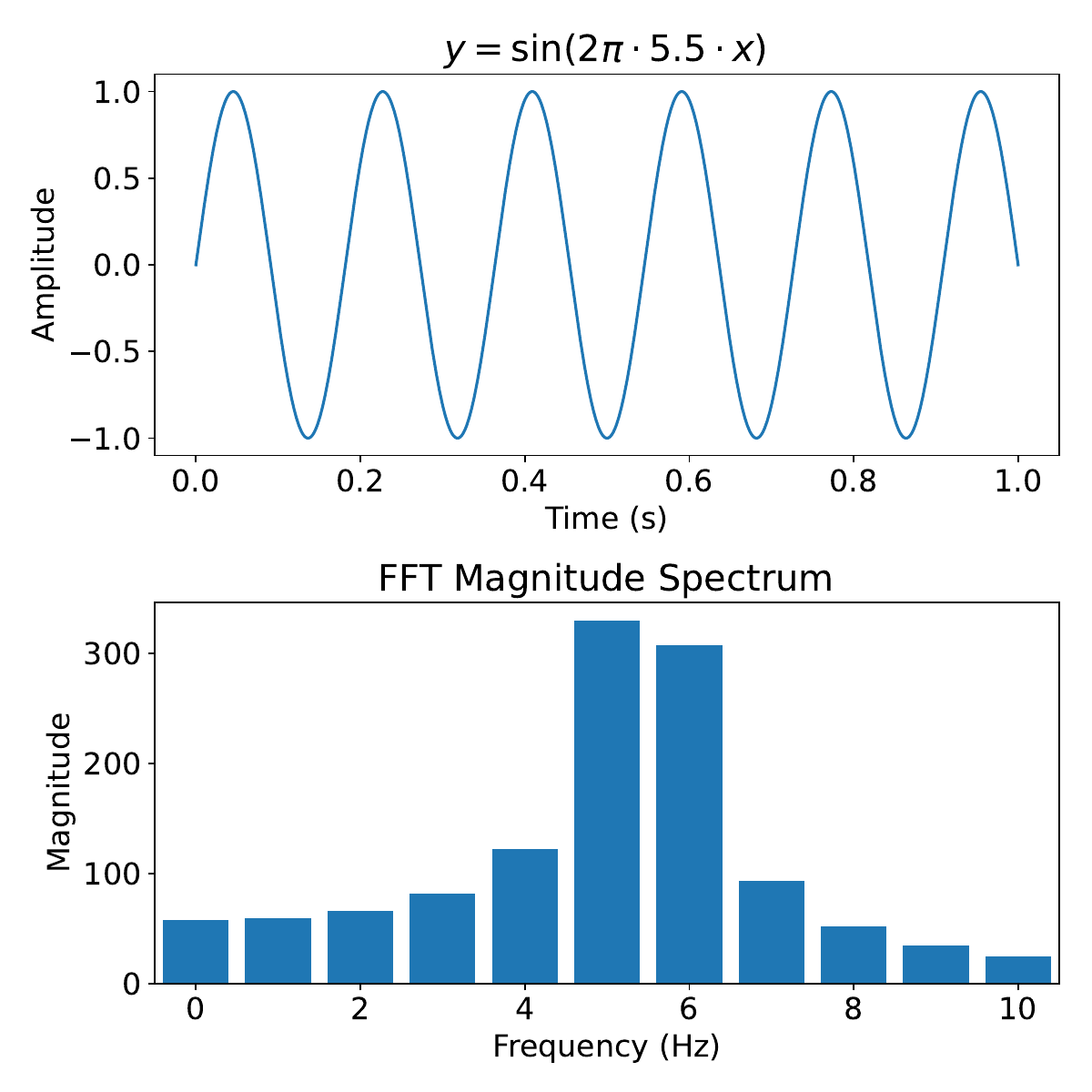}
        \caption{Rectangular window (fractional number of periods)}
    \end{subfigure}
    \hfill
    \begin{subfigure}{0.32\textwidth}
        \centering
        \includegraphics[width=\linewidth]{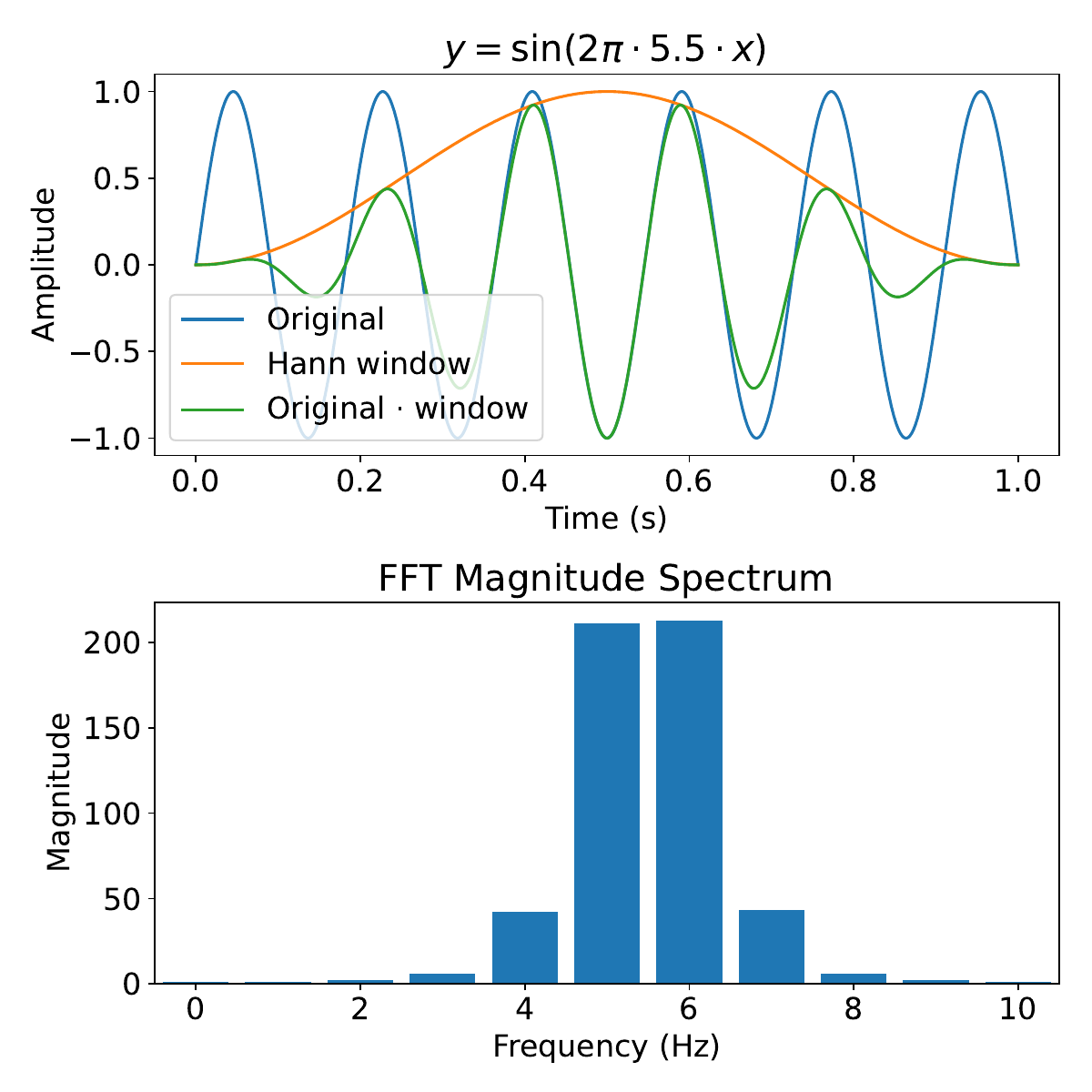}
        \caption{Hahn window (fractional number of periods)}
    \end{subfigure}
    \caption{\textbf{Effect of windowing on the STFT.} Figure (a) shows a signal with exactly five periods of a sine wave, where the start and end points align smoothly. As a result, the DFT clearly identifies the 5 Hz frequency. In Figure (b), the signal contains five and a half periods, and the endpoints do not match. This causes spectral leakage, spreading power into neighboring frequency bins beyond just 5 and 6 Hz. In Figure (c), the signal is multiplied by a Hann window, which reduces the discontinuities at the boundaries. Consequently, the magnitude spectrum becomes more concentrated around the 5 and 6 Hz components.}
    \label{fig:windowing}
\end{figure}

\begin{definition}[Discrete short-time Fourier transform] Let
    \begin{itemize}
        \item $f \in \mathbb{C}^S$ be a signal
        \item $n$ be the window size (or frame length)
        \item $w \in \mathbb{R}^n$ be a window function
        \item $h$ be the ``hop-size'' (often $\frac{n}{2}$ or $\frac{n}{4}$)
    \end{itemize}
    Then we can define
    \begin{equation}
        \hat{f}_{m, k} = \sum_{j = 0}^{n - 1} w_j \cdot f_{j + m h} \cdot \exp\left(-ikj \frac{2\pi}{n}\right),
    \end{equation}
    where
    \begin{itemize}
        \item $m$ ranges from $0$ to $\left\lfloor \frac{S - n}{h} \right\rfloor$
        \item $k$ ranges from $0$ to $n - 1$
    \end{itemize}
\end{definition}

There are two important hyperparameters here, the frame length $n$ and the hop-size $h$. The frame length introduces a time-frequency trade-off: larger values will result in good frequency resolution, while small values will provide good time localization.

\subsection{Mel-spectrograms}
The short-time Fourier transform of an audio signal yields a complex-valued matrix, with time along the x-axis and frequency along the y-axis. For visualization purposes, we typically extract the magnitude (or power) of each complex number and discard the phase. This is not because phase is unimportant (it is needed to reconstruct the signal) but because phase spectrograms often appear noisy and unstructured (see Figure \ref{fig:mel}).

Humans do not perceive frequency on a linear scale. In particular, we are more sensitive to changes at lower frequencies than at higher ones. To model this, the mel scale was introduced by \cite{mel}. The mel scale maps frequency (in Hz) to a perceptual scale (mels) where equal intervals correspond to equal perceived pitch differences. Although the mel scale was empirically derived from perceptual experiments, some researchers have proposed formulas to convert $f$ hertz into $m$ mels, such as \cite{o1987speech}:
\begin{equation}
    m = 2595 \log_{10}\left(1 +\frac{f}{700}\right).
\end{equation}

\begin{figure*}
    \centering
    \begin{subfigure}[b]{0.475\textwidth}
        \centering
        \includegraphics[width=\textwidth]{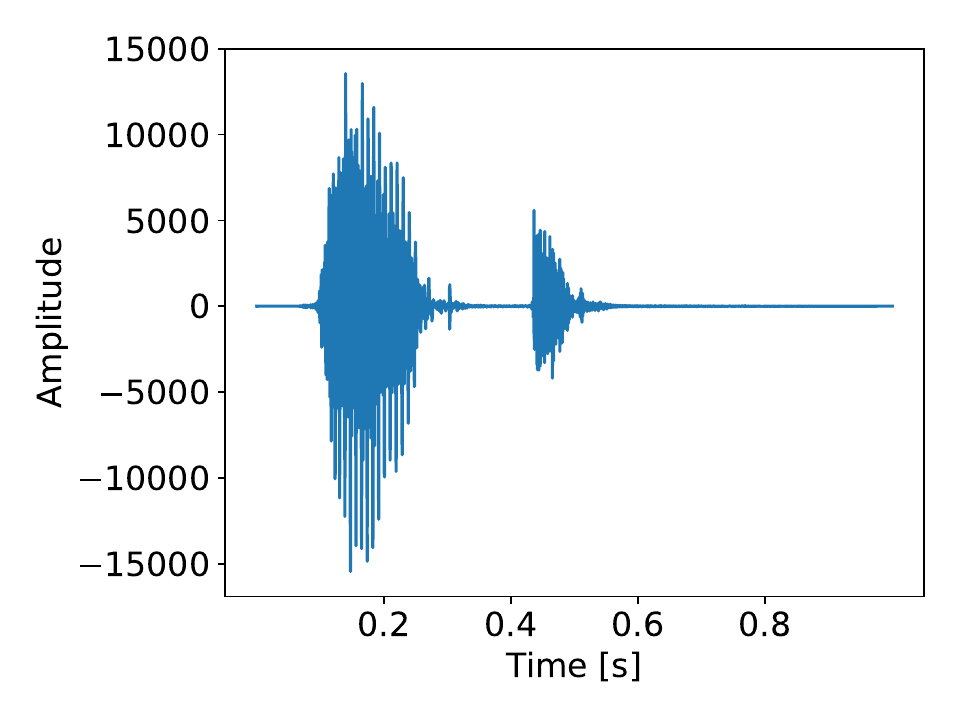}
        \caption{Original waveform}
    \end{subfigure}
    \hfill
    \begin{subfigure}[b]{0.475\textwidth}
        \centering
        \includegraphics[width=\textwidth]{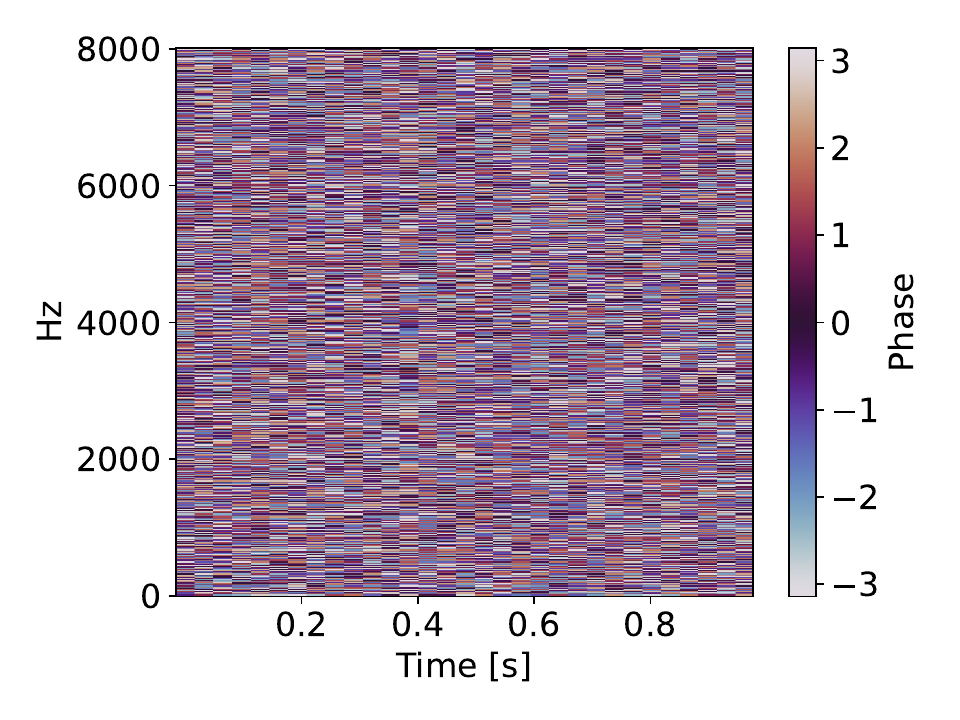}
        \caption{Phase of the STFT}
    \end{subfigure}
    \vskip\baselineskip
    \begin{subfigure}[b]{0.475\textwidth}
        \centering
        \includegraphics[width=\textwidth]{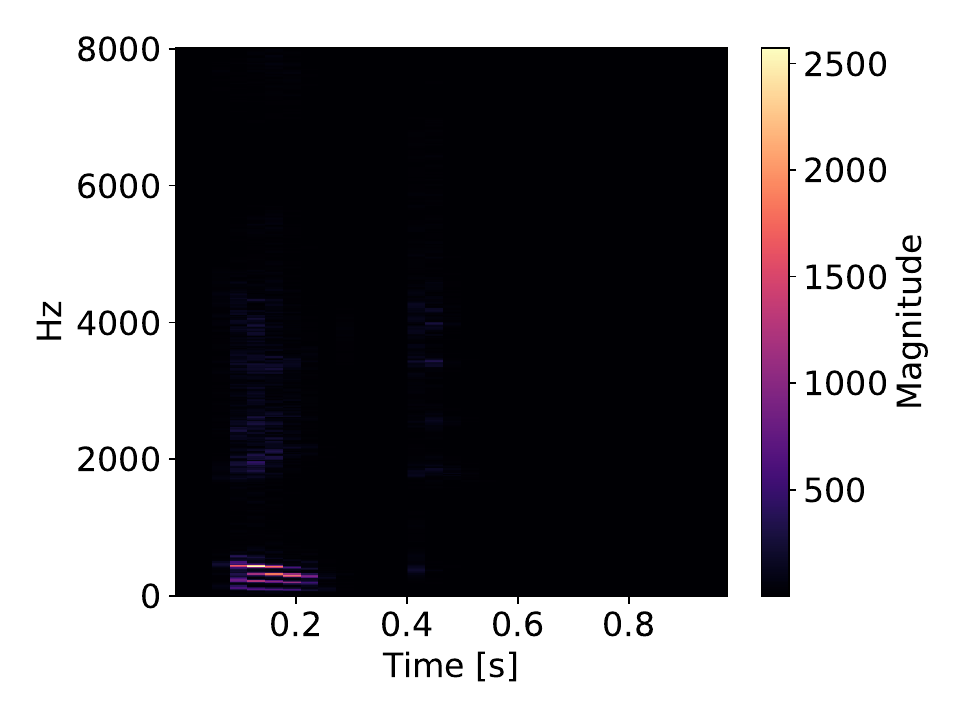}
        \caption{Magnitude of the STFT}
    \end{subfigure}
    \hfill
    \begin{subfigure}[b]{0.475\textwidth}
        \centering
        \includegraphics[width=\textwidth]{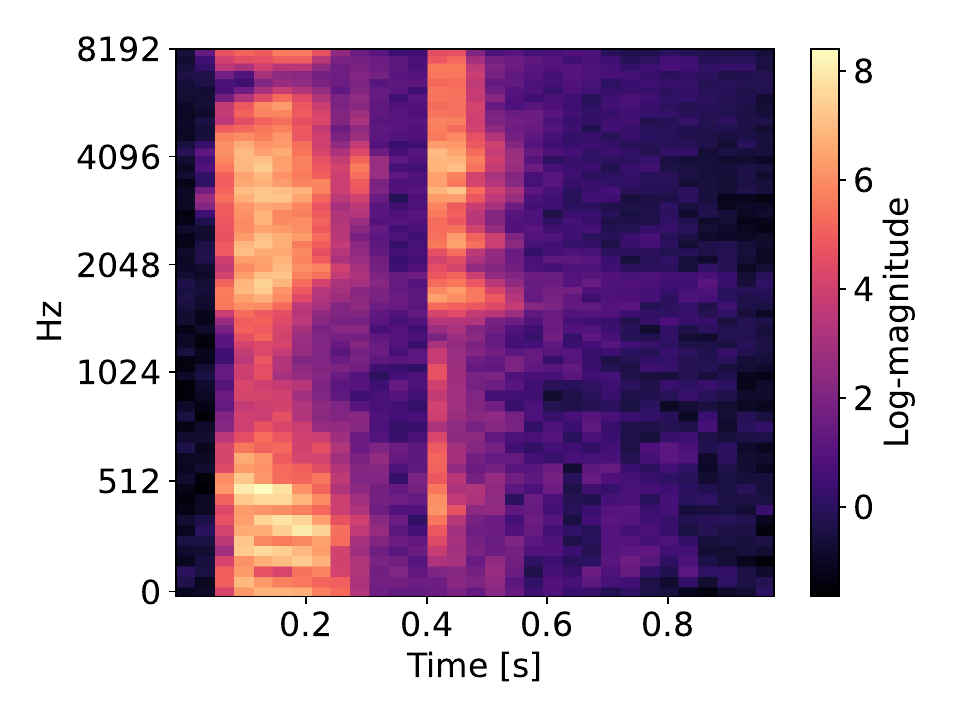}
        \caption{Mel-spectrogram}
    \end{subfigure}
    \caption{In (a) we see a waveform that corresponds to a one-second utterance from the SC09  dataset (see Section \ref{sec:datasets}). After applying the short-term Fourier transform (STFT), we get a complex-valued matrix. The phase (b) seems quite random, but the magnitude (c) is very structured. Finally, (d) shows the mel-spectrogram.}
    \label{fig:mel}
\end{figure*}

\section{Metrics}
For training, we maximize the log-likelihood of the ground-truth sequence \( x \) under the model's predicted distribution.

\begin{definition}[Log-likelihood]
    Given a ground-truth sequence \( x = (x_1, \dots, x_S) \), and a model \(P_\theta(\cdot \mid x_{<k}) \), the log-likelihood is defined as:
    \begin{align}
        \label{eq:ll}
        \LL(x; \theta)
         & = \log_2\left(\prod_{k = 1}^S P_{\theta}\left(x_k \mid x_{<k}\right)\right) \\
         & = \sum_{k = 1}^S \log_2 P_\theta(x_k \mid x_{<k}).
    \end{align}
\end{definition}

In order to evaluate our model, we use the log-likelihood on a held-out test dataset. In addition, and following prior literature, we also compute the FID and IS scores on samples generated by the model. In the generative case, we could compute $\LL(x; \theta)$ with samples instead of with the ground truth. However, this approach fails to capture sample diversity, as the model could repeatedly generate the same output. To better evaluate both the quality and diversity of the generated samples, we use the following procedure:
\begin{enumerate}
    \item \textbf{Train a classifier on the raw data.} For example, in the SC09 dataset (see Section \ref{sec:datasets}), we can classify each utterance into ten different digits using mel-spectrograms. In ImageNet, we can do 1000-class classification.
    \item \textbf{Use the activations of the classifier}. Two audio recordings that are semantically similar can be quite different in wave space. Because of this, we use some activations of the classifier. For example, the IS score uses the model's output probability distribution, and the FID score uses the last layer activations to compare the training samples with the generated ones.
\end{enumerate}
Before we dive into the specific metrics, we should mention some of their limitations. First, they heavily rely on the accuracy of the pre-trained classifier. Second, they assess the generated samples through a proxy, which might not fully align with human perception. Finally, these metrics might ignore low-level details, potentially allowing artifacts to go unnoticed.

\subsection{IS score}
The Inception Score \cite{is} was introduced in 2016 to evaluate the image quality of Generative Adversarial Networks (GANs). The classifier used was Inception v3 \cite {inceptionv3} (thus the name), and the activations used are the outputs of the model after the softmax layer.
\begin{definition}
    The IS score is defined as
    \begin{align}
        \ln \IS & = \EX [ \KL ( p(\cdot \mid X) \mid\mid \EX[p(\cdot \mid X)] )]                                                                \\
                & = \EX_{x \sim \text{samples}}  \left[ \KL ( p(\cdot \mid x) \mid\mid \EX_{x' \sim \text{samples}}[p(\cdot \mid x')] ) \right]
        ,
    \end{align}
    where $p(\cdot \mid x)$ is the probability distribution outputted by the classifier given $x$.
\end{definition}
Let us enumerate first some useful facts, that will be then proven.
\begin{itemize}
    \item We want to \textbf{maximize} the IS score.
    \item The IS score ranges between 1 (worst) and $K$ (best), the number of classes of the classifier.
    \item A score of one corresponds to the classifier always making the same prediction. This is bad because samples are not diverse.
    \item A score of $K$ means that the samples are evenly spread across all classes (good diversity) and that the classifier is always certain about its predictions (generated samples are sharp and distinct).
\end{itemize}

\begin{proposition}
    The IS score is always greater or equal to one
\end{proposition}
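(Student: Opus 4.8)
The plan is to reduce the claim to the non-negativity of the Kullback--Leibler divergence. Since $\ln \IS = \EX_{x \sim \text{samples}}[ \KL( p(\cdot \mid x) \mid\mid \EX_{x' \sim \text{samples}}[p(\cdot \mid x')] )]$ is an expectation over generated samples $x$ of KL divergences, and an expectation of non-negative quantities is non-negative, it suffices to show that $\KL(q \mid\mid r) \geq 0$ for any two probability distributions $q, r$ over the $K$ classes. Once we have $\ln \IS \geq 0$, exponentiating both sides yields $\IS \geq 1$.

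First I would recall that $\KL(q \mid\mid r) = \sum_{i = 1}^K q_i \ln(q_i / r_i)$ (summing only over $i$ with $q_i > 0$, since terms with $q_i = 0$ contribute nothing) and prove Gibbs' inequality via Jensen. Because $-\ln$ is convex, we can write $\KL(q \mid\mid r) = \EX_{i \sim q}[ -\ln(r_i / q_i) ] \geq -\ln\left( \EX_{i \sim q}[ r_i / q_i ] \right) = -\ln\left( \sum_{i : q_i > 0} r_i \right) \geq -\ln(1) = 0$. Here the random index $i$ is drawn according to $q$, and the penultimate step uses $\sum_i r_i = 1$ together with $r_i \geq 0$.

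Next I would apply this pointwise with $q = p(\cdot \mid x)$, the classifier's output distribution on sample $x$, and $r = \EX_{x' \sim \text{samples}}[p(\cdot \mid x')]$, which is itself a valid probability distribution (a mixture of probability distributions, hence non-negative with entries summing to one). This makes every integrand of the outer expectation non-negative, so $\ln \IS \geq 0$ and therefore $\IS = \exp(\ln \IS) \geq \exp(0) = 1$.

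The main obstacle is bookkeeping rather than depth: one must be careful about the support restriction (entries where $r_i = 0$ force $q_i = 0$ whenever the divergence is finite, which is automatic here since $r$ is a mixture containing $q$ with positive weight unless $x$ has measure zero), and one must apply Jensen to the correct random variable. I would close by noting the equality case, since it explains the earlier bullet points: $\IS = 1$ exactly when $\KL(p(\cdot \mid x) \mid\mid \EX_{x'}[p(\cdot \mid x')]) = 0$ for ($\text{samples}$-almost) every $x$, i.e.\ when the classifier outputs the same distribution on every generated sample --- the degenerate, zero-diversity regime.
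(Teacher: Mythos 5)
Your proof is correct and follows the same route as the paper: reduce the claim to the non-negativity of the KL divergence and exponentiate. The only difference is that you additionally prove Gibbs' inequality via Jensen and discuss the equality case, whereas the paper simply cites KL non-negativity as a known fact.
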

\begin{proof}
    Since the KL divergence is always non-negative, $\ln(\IS)$  will never be negative. By exponentiating, we get that $\IS \geq 1$.
\end{proof}

\begin{proposition}
    The IS score is bounded by $K$ and the bound is achieved when
    \begin{itemize}
        \item $\EX[p(\cdot \mid X)]_k = \frac{1}{K} \quad \forall k$ (samples spread across all classes)
        \item $p(\cdot \mid X)$ is one-hot (classifier is certain about its predictions)
    \end{itemize}
\end{proposition}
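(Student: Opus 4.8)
The plan is to rewrite $\ln\IS$ as a difference of two entropies and bound each piece with a standard inequality. Write $\bar p := \EX_{x'\sim\text{samples}}[p(\cdot\mid x')]$ for the marginal class distribution induced by the samples, and let $H(q) := -\sum_{k=1}^K q_k \ln q_k$ denote Shannon entropy. Expanding the KL divergence and using $\EX_{x}[p(k\mid x)] = \bar p_k$,
\begin{align}
    \ln\IS
      &= \EX_{x}\!\left[\sum_{k=1}^K p(k\mid x)\ln\frac{p(k\mid x)}{\bar p_k}\right] \\
      &= \EX_{x}\!\left[\sum_{k=1}^K p(k\mid x)\ln p(k\mid x)\right] - \sum_{k=1}^K \EX_{x}\!\left[p(k\mid x)\right]\ln\bar p_k \\
      &= H(\bar p) - \EX_{x}\!\left[H\big(p(\cdot\mid x)\big)\right].
\end{align}
(This quantity is exactly the mutual information between a sample and the label the classifier assigns to it.)

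Next I would bound the two terms. For the first, Gibbs' inequality — equivalently, non-negativity of the KL divergence against the uniform distribution $u$ with $u_k = 1/K$ — gives $0 \le \KL(\bar p \mid\mid u) = \ln K - H(\bar p)$, hence $H(\bar p) \le \ln K$, with equality if and only if $\bar p = u$. For the second, each summand $-q_k\ln q_k$ is non-negative for $q_k\in[0,1]$, so $H(p(\cdot\mid x)) \ge 0$, and therefore $\EX_x[H(p(\cdot\mid x))] \ge 0$, with equality if and only if $H(p(\cdot\mid x)) = 0$ for (almost) every sample $x$, i.e. $p(\cdot\mid x)$ is one-hot. Combining the two bounds yields $\ln\IS \le \ln K$, and exponentiating gives $\IS \le K$.

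Finally I would pin down the equality case: attaining $\IS = K$ forces equality in both bounds simultaneously, namely $\EX[p(\cdot\mid X)]$ uniform and $p(\cdot\mid X)$ one-hot almost surely — precisely the two conditions in the statement. These are jointly satisfiable (e.g. the classifier is perfectly confident on every sample and the samples split evenly across the $K$ classes). The one step that deserves care is this equality analysis: one-hot predictions alone do not suffice, since if every sample is classified into the same class then $\bar p$ is a vertex of the simplex, $H(\bar p)=0$, and $\IS=1$; so both conditions are genuinely needed. The rest is routine algebra together with the standard $H(\bar p)\le\ln K$ bound.
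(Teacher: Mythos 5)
Your proof is correct and follows essentially the same route as the paper: the same decomposition of $\ln\IS$ into $H(\EX[p(\cdot\mid X)]) - \EX[H(p(\cdot\mid X))]$, followed by the standard entropy bounds $0 \le H \le \ln K$ and the corresponding equality conditions. Your explicit remark that one-hot predictions alone give $\IS = 1$ rather than $K$ is a nice clarification the paper leaves implicit, but the argument is the same.
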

\begin{proof}
    Let $Y = p(\cdot \mid X)$ for clarity.
    \begin{align}
        \ln \IS & = \EX [ \KL ( p(\cdot \mid X) \mid\mid \EX[p(\cdot \mid X)] )]                                                                                           \\
                & = \EX [ \KL ( Y \mid\mid \EX[Y] )]                                                                                                                       \\
                & = \EX \left[ \sum_{k=1}^K  y_k  \log\left(\frac{y_k}{\EX[Y]_k}\right) \right]                                                                            \\
                & = \EX \left[ \sum_{k=1}^K  y_k  \log(y_k)\right] - \EX \left[ \sum_{k=1}^K  y_k  \log(\EX[Y]_k)\right]                                                   \\
                & = -\EX [H(Y)] -  \sum_{k=1}^K  \EX [Y]_k  \log(\EX[Y]_k)                                               & \text{(definition of Entropy)}                  \\
                & = -\EX [H(Y)] + H(\EX[Y])                                                                              & \text{(definition of Entropy)}                  \\
                & \leq \log(K).                                                                                          & \text{(Entropy is bounded from 0 to $\log(K)$)}
    \end{align}
    We can just take exponentials on both sides to get the required bound. Let us now show that the bound can be achieved. If $\EX[Y]$ is uniform (samples are uniformly spread across all classes), $H(\EX[Y]) = \log(K)$. If the classifier is completely certain about its predictions, $Y$ will be one-hot, so $\EX[H(Y)] = 0$.
\end{proof}

\subsection{FID score}
The Fréchet Inception Distance (FID) \cite{fid}, introduced in 2017, was designed to overcome a key limitation of the Inception Score (IS): IS does not take into account the statistics of the real data distribution. FID addresses this by modeling both the real and generated samples as multivariate Gaussian distributions and comparing them using the Fréchet distance. Unlike the IS, where higher values are better, lower FID scores indicate that the samples are similar to the training data.

To make this comparison meaningful, samples are not evaluated in their raw form (e.g., waveform for audio or pixel space for images), but rather through their representations in a semantically rich space (the activations from the final layer of a pre-trained classifier).

\begin{definition}
    The Fréchet distance between two probability distributions $P$ and $Q$ is defined as
    \begin{equation}
        d(P, Q)^2 = \min_{X \sim P, Y \sim Q} \EX[\norm{X - Y}^2].
    \end{equation}
\end{definition}

\begin{proposition}
    \label{prop:centering}
    We can define $d(P, Q)^2$ in terms of the centered variables $\hat{X} = X - \mu$ and $\hat{Y} = Y - \mu'$:
    \begin{equation}
        d(P, Q)^2 = \norm{\mu - \mu'}^2 + \min_{X \sim P, Y \sim Q} \EX\left[\norm{\hat{X} - \hat{Y}}^2\right].
    \end{equation}
\end{proposition}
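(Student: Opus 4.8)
The plan is to expand the squared norm inside the expectation by writing $X - Y = (\hat X - \hat Y) + (\mu - \mu')$, where $\hat X = X - \mu$ and $\hat Y = Y - \mu'$ with $\mu = \EX[X]$, $\mu' = \EX[Y]$. Expanding the square gives
\begin{equation}
    \norm{X - Y}^2 = \norm{\hat X - \hat Y}^2 + 2 \langle \hat X - \hat Y, \, \mu - \mu' \rangle + \norm{\mu - \mu'}^2 .
\end{equation}
Taking expectations over any coupling $(X, Y)$ with marginals $P$ and $Q$, the cross term vanishes: by linearity, $\EX[\langle \hat X - \hat Y, \mu - \mu'\rangle] = \langle \EX[\hat X] - \EX[\hat Y], \mu - \mu'\rangle = \langle 0 - 0, \mu - \mu'\rangle = 0$, since $\mu$ and $\mu'$ are the true means. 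Hence for every coupling,
\begin{equation}
    \EX[\norm{X - Y}^2] = \EX[\norm{\hat X - \hat Y}^2] + \norm{\mu - \mu'}^2 .
\end{equation}

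Next I would take the minimum over all couplings on both sides. The term $\norm{\mu - \mu'}^2$ is a constant that does not depend on the joint law of $(X, Y)$, so it pulls out of the minimization, yielding exactly the claimed identity. The only point that needs a sentence of justification is that minimizing the left-hand side over couplings of $(P, Q)$ is the same problem as minimizing $\EX[\norm{\hat X - \hat Y}^2]$ over couplings of the centered laws: the deterministic shift $(X, Y) \mapsto (X - \mu, Y - \mu')$ is a bijection between couplings with marginals $(P, Q)$ and couplings with marginals $(\hat P, \hat Q)$ (the pushforwards of $P, Q$ under centering), and it preserves the objective up to the additive constant, so the two infima agree.

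I do not expect a real obstacle here; the argument is essentially a completed-square computation plus linearity of expectation. The one place to be slightly careful is not to claim the cross term vanishes "by independence" — it vanishes purely because each centered variable has zero mean, which holds for an arbitrary coupling — and to make sure the min is genuinely attained (or, if one prefers to be cautious, to state the result with $\inf$; since the paper already writes $\min$ in the definition of the Fréchet distance, I will keep $\min$ and simply note the correspondence of feasible sets). I would close by remarking that this reduces computing $d(P,Q)^2$ to a mean-difference term plus a purely second-moment optimization over centered variables, which is the form used in the subsequent derivation of the closed-form Gaussian FID.
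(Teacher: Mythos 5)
Your proposal is correct and follows essentially the same route as the paper's proof: expand $\norm{X-Y}^2$ via the completed square, observe that the cross term vanishes in expectation because the centered variables have zero mean under any coupling, and pull the constant $\norm{\mu-\mu'}^2$ out of the minimization. Your additional remarks (that the cancellation does not rely on independence, and that centering is a bijection on the set of couplings) are sound refinements of the same argument rather than a different approach.
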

\begin{proof}
    \begin{align}
        \norm{X - Y}^2 & = \langle X - Y, X - Y\rangle                                                                       \\
                       & = \langle \hat{X} - \hat{Y} + \mu - \mu',  \hat{X} - \hat{Y} + \mu - \mu'\rangle                    \\
                       & = \norm{\hat{X} - \hat{Y}}^2 + \norm{\mu - \mu'}^2 +2\langle \hat{X} - \hat{Y}, \mu - \mu' \rangle.
    \end{align}
    Notice that the third term will cancel after taking expectations, because the expected value of the centered variables is zero. Thus,
    \begin{align}
        d(P, Q)^2 & = \min_{X \sim P, Y \sim Q} \EX[\norm{X - Y}^2]                                                \\
                  & =  \norm{\mu - \mu'}^2 + \min_{X \sim P, Y \sim Q} \EX\left[\norm{\hat{X} - \hat{Y}}^2\right].
    \end{align}

\end{proof}

\begin{proposition}
    \label{prop:expectation_normal_norm_squared}
    If $Z \sim \mathcal{N}(\bm{0}, \Sigma)$, $\EX[\norm{Z}^2] = \Tr(\Sigma)$.
\end{proposition}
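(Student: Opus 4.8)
The plan is to expand the squared norm $\norm{Z}^2 = \sum_{i} Z_i^2$ coordinate-wise and use linearity of expectation, so that the problem reduces to computing $\EX[Z_i^2]$ for each coordinate $i$. First I would write $\EX[\norm{Z}^2] = \EX\left[\sum_{i=1}^d Z_i^2\right] = \sum_{i=1}^d \EX[Z_i^2]$, where $d$ is the dimension. Then, since $Z$ has mean zero, each $\EX[Z_i^2]$ is exactly the variance of $Z_i$, which by definition of the covariance matrix equals $\Sigma_{ii}$. Summing over $i$ gives $\sum_{i=1}^d \Sigma_{ii} = \Tr(\Sigma)$, which is the claim.

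A slightly slicker alternative I might present instead uses the cyclic/linearity property of the trace directly: $\norm{Z}^2 = Z^\top Z = \Tr(Z^\top Z) = \Tr(Z Z^\top)$ since it is a scalar, and then $\EX[\norm{Z}^2] = \Tr\left(\EX[Z Z^\top]\right) = \Tr(\Sigma)$, using that $\EX[Z Z^\top] = \Sigma$ for a zero-mean random vector and that trace commutes with expectation (being a finite linear combination of entries). I would probably go with the coordinate-wise version since it is the most elementary and self-contained, only requiring the definition of $\Sigma$ as the covariance matrix and the fact that a zero mean makes second moments equal to variances.

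There is no real obstacle here; the only thing to be careful about is making explicit the facts being invoked, namely that $\EX[Z Z^\top] = \Sigma$ (equivalently $\EX[Z_i Z_j] = \Sigma_{ij}$) holds precisely because $Z$ is centered, and that expectation commutes with the finite sum defining the trace. The Gaussianity of $Z$ is not actually needed beyond guaranteeing that the relevant second moments are finite; the identity holds for any zero-mean random vector with a well-defined covariance matrix, and I would note this in passing.
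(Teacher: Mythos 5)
Your coordinate-wise argument is exactly the proof given in the paper: expand $\norm{Z}^2 = \sum_i Z_i^2$, use linearity of expectation, identify $\EX[Z_i^2] = \Var(Z_i) = \Sigma_{ii}$ via the zero mean, and sum to get the trace. The proposal is correct, and your remarks (the trace alternative, and that Gaussianity is not actually needed) are accurate but not required.
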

\begin{proof}
    \begin{align}
        \EX[\norm{Z}^2] = \EX\left[\sum_i Z_i^2\right] = \sum_i \EX[Z_i^2] = \sum_i \Var(Z_i) = \Tr(\Sigma).
    \end{align}

\end{proof}

\begin{theorem}
    Let $P = \mathcal{N}(\mu, \Sigma)$ and $Q = \mathcal{N}(\mu', \Sigma')$, with $\Sigma$ and $\Sigma'$ invertible. Then,
    \begin{equation}
        d(P, Q)^2 = \norm{\mu - \mu'}^2 + \Tr \left(\Sigma + \Sigma' - 2\sqrt{\sqrt{\Sigma} \Sigma' \sqrt{\Sigma}} \right).
    \end{equation}
\end{theorem}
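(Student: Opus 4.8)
The plan is to reduce to centered Gaussians and then solve a trace-maximization problem over cross-covariances. By Proposition~\ref{prop:centering} it is enough to evaluate the minimum of $\EX[\norm{\hat X - \hat Y}^2]$ over all couplings of $\hat X \sim \mathcal{N}(\bm{0},\Sigma)$ and $\hat Y \sim \mathcal{N}(\bm{0},\Sigma')$, since $d(P,Q)^2$ then equals $\norm{\mu-\mu'}^2$ plus this minimum. Expanding the squared norm and using linearity of expectation,
\[
  \EX\bigl[\norm{\hat X - \hat Y}^2\bigr] = \EX\bigl[\norm{\hat X}^2\bigr] + \EX\bigl[\norm{\hat Y}^2\bigr] - 2\,\EX\bigl[\langle \hat X, \hat Y\rangle\bigr].
\]
By Proposition~\ref{prop:expectation_normal_norm_squared} the first two terms equal $\Tr(\Sigma)$ and $\Tr(\Sigma')$ for \emph{every} coupling, and since the variables are centered, $\EX[\langle \hat X, \hat Y\rangle] = \Tr(C)$ with $C = \EX[\hat X \hat Y^\top]$ the cross-covariance. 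So the task reduces to \textbf{maximizing} $\Tr(C)$ over the admissible $C$.

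Next I would pin down which matrices $C$ occur. For any joint law with the prescribed marginals the full covariance $\bigl(\begin{smallmatrix}\Sigma & C\\ C^\top & \Sigma'\end{smallmatrix}\bigr)$ is positive semidefinite, and conversely any $C$ making this block matrix PSD is the cross-covariance of the Gaussian with that covariance, which has the correct marginals. Hence nothing is lost by restricting to Gaussian couplings, and the feasible set (closed and bounded, so the maximum is attained) is exactly $\{C : \bigl(\begin{smallmatrix}\Sigma & C\\ C^\top & \Sigma'\end{smallmatrix}\bigr)\succeq 0\}$. Since $\Sigma$ is invertible, the Schur-complement criterion turns the constraint into $\Sigma' - C^\top\Sigma^{-1}C \succeq 0$.

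Then I would substitute $C = \sqrt{\Sigma}\,K\sqrt{\Sigma'}$. The constraint becomes $\sqrt{\Sigma'}(I - K^\top K)\sqrt{\Sigma'}\succeq 0$, i.e. every singular value of $K$ is at most $1$, while cyclicity of the trace gives $\Tr(C) = \Tr(KM)$ with $M = \sqrt{\Sigma'}\sqrt{\Sigma}$. Taking an SVD $M = UDV^\top$ and writing $W = V^\top K U$ (an orthogonal change of variables, so $W$ has the same singular values as $K$), one gets $\Tr(KM) = \Tr(WD) = \sum_i W_{ii}D_{ii} \le \sum_i D_{ii}$, with equality for $W = I$, i.e. $K = VU^\top$ — this is von Neumann's trace inequality. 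Hence the maximum of $\Tr(C)$ is the nuclear norm $\Tr(\sqrt{M^\top M})$, and since $M^\top M = \sqrt{\Sigma}\,\Sigma'\sqrt{\Sigma}$ this maximum equals $\Tr\bigl(\sqrt{\sqrt{\Sigma}\,\Sigma'\sqrt{\Sigma}}\bigr)$.

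Putting the pieces together,
\begin{align}
  d(P,Q)^2 &= \norm{\mu-\mu'}^2 + \Tr(\Sigma) + \Tr(\Sigma') - 2\,\Tr\!\left(\sqrt{\sqrt{\Sigma}\,\Sigma'\sqrt{\Sigma}}\right) \\
           &= \norm{\mu-\mu'}^2 + \Tr\!\left(\Sigma + \Sigma' - 2\sqrt{\sqrt{\Sigma}\,\Sigma'\sqrt{\Sigma}}\right),
\end{align}
which is the claim. I expect the main obstacle to be justifying the reduction to Gaussian couplings; the clean argument above sidesteps it by noting the objective sees a coupling only through $C$ and that the achievable $C$ are characterized by one PSD block-matrix condition. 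The remaining trace optimization is routine once von Neumann's inequality (or the explicit SVD computation) is invoked; a minor point to be careful about is that $\sqrt{\Sigma}$ and $\sqrt{\Sigma'}$ denote the symmetric PSD square roots, used freely here because $\Sigma$ and $\Sigma'$ are symmetric positive definite.
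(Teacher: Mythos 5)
Your proof is correct and follows essentially the same route as the paper: centering via Proposition~\ref{prop:centering}, reducing to maximizing $\Tr(C)$ under the PSD block-covariance constraint, the Schur complement, the substitution $C=\sqrt{\Sigma}K\sqrt{\Sigma'}$, and the SVD/von Neumann trace argument. The one thing you add beyond the paper is an explicit justification that restricting to Gaussian couplings loses nothing (the objective depends on the coupling only through $C$, and every feasible $C$ is realized by a Gaussian), which is a small but genuine improvement in rigor.
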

\begin{proof}
    We will use \cref{prop:centering} to assume without loss of generality that $X$ and $Y$ are centered around zero. Thus, we have
    \begin{align}
        d(P, Q)^2 & = \min_{X \sim P, Y \sim Q} \EX[\norm{X - Y}^2] & \text{(\cref{prop:centering})}                       \\
                  & =  \min_{X \sim P, Y \sim Q} \Tr(\Cov(X - Y)).  & \text{(\cref{prop:expectation_normal_norm_squared})}
    \end{align}
    Note that
    \begin{equation}
        X - Y =
        \begin{bmatrix}
            I & -I
        \end{bmatrix}
        \begin{bmatrix}
            X \\
            Y
        \end{bmatrix},
    \end{equation}
    so
    \begin{equation}
        \Cov(X - Y) =
        \begin{bmatrix}
            I & -I
        \end{bmatrix}
        \begin{bmatrix}
            \Sigma & C       \\
            C^T    & \Sigma'
        \end{bmatrix}
        \begin{bmatrix}
            I \\
            -I
        \end{bmatrix}
        =
        \Sigma + \Sigma' - C - C^T,
    \end{equation}
    where $C$ is the cross-covariance matrix. Thus, we have to
    \begin{equation}
        \min_{X \sim P, Y \sim Q} \Tr\left(\Sigma + \Sigma' - C - C^T\right).
    \end{equation}
    Note that $\Sigma$ and $\Sigma'$ are fixed, and $\Tr C = \Tr C^T$, so we just have to maximize $\Tr C$ under the constraint that $\Cov(X - Y)$ should be positive semi-definite:
    \begin{equation}
        \max_{\Cov(X - Y) \succeq 0 } \Tr C.
    \end{equation}
    Because of Schur's condition for positive definiteness,
    \begin{equation}
        \Cov(X - Y)  =
        \begin{bmatrix}
            \Sigma & C       \\
            C^T    & \Sigma'
        \end{bmatrix}
        \succeq
        0
        \quad \iff \quad
        \Sigma' - C^T \Sigma^{-1} C \succeq 0.
    \end{equation}
    Now we will substitute $C = \sqrt{\Sigma} K \sqrt{\Sigma'}$, which we can do because both  $\sqrt{\Sigma}$ and $\sqrt{\Sigma'}$ are invertible. While this might seem a bit obscure at first, it will greatly simplify things:
    \begin{align}
        \Sigma' - C^T \Sigma^{-1} C \succeq 0  \quad & \iff \quad \Sigma' - \sqrt{\Sigma'} K^T K \sqrt{\Sigma'} \succeq 0                                     \\
        \quad                                        & \iff \quad \sqrt{\Sigma'} (I - K^T K) \sqrt{\Sigma'} \succeq 0                                         \\
        \quad                                        & \iff \quad (I - K^T K) \succeq 0                                   & \text{(congruent transformation)} \\
        \quad                                        & \iff \quad \norm{K} \leq 1.                                        & \text{(spectral norm definition)}
    \end{align}
    Our problem is now
    \begin{equation}
        \max_{\norm{K} \leq 1 } \Tr \left( {\sqrt{\Sigma} K \sqrt{\Sigma'}} \right).
    \end{equation}
    So,
    \begin{align}
        \max_{\norm{K} \leq 1 } \Tr \left( {\sqrt{\Sigma} K \sqrt{\Sigma'}} \right) & = \max_{\norm{K} \leq 1 } \Tr \left(K  \underbrace{{\sqrt{\Sigma'} \sqrt{\Sigma}}}_A \right) & \text{(cyclic property of the trace)}                      \\
                                                                                    & = \max_{\norm{K} \leq 1 } \Tr \left(K  U D V^T \right)                                       & \text{(SVD decomposition of $A$)}                          \\
                                                                                    & = \max_{\norm{K} \leq 1 } \Tr \left(\underbrace{V^T K U}_M D \right)                         & \text{(cyclic property of the trace)}                      \\
                                                                                    & = \max_{\norm{K} \leq 1 } \sum_i M_{ii} D_{ii}                                               & \text{(definition of trace)}                               \\
                                                                                    & \leq \sum_i D_{ii}                                                                           & \text{(norm is unitarily invariant, so $\norm{M} \leq 1$)} \\
                                                                                    & = \Tr(D)                                                                                     & \text{(definition of trace)}                               \\
                                                                                    & = \Tr(\sqrt{A^T A})                                                                          & \text{($A = U D V^T$, so $\sqrt{A^T A} = V D V^T$)}        \\
                                                                                    & = \Tr\left(\sqrt{\sqrt{\Sigma} \Sigma' \sqrt{\Sigma}}\right).
    \end{align}
    It is not immediately clear why $M_{ii} \leq 1$. However, note that $\norm{M} = \norm{V^T K U} = \norm{K} \leq 1$. Thus,
    \begin{equation}
        \frac{x^T M x}{x^T x} \leq 1 \quad \forall x.
    \end{equation}
    In particular, if we choose $x = e_i$, we get $M_{ii} \leq 1$. We established an upper bound, that can be easily achieved setting $M = I$, which is the same as choosing $K = V U^{T}$.

\end{proof}

\section{Sequence-to-sequence models}
Our goal is to maximize the log-likelihood, so we need a model that receives a variable number of tokens $x_1, ..., x_{k - 1}$, and outputs $P_\theta(x_k \mid x_{<k})$. In particular, we will focus on sequence-to-sequence models, whose inputs and outputs have shape $(B, S, D)$, where $B$ represents the batch size, $S$ the sequence length and $D$ the feature dimension. The tensor \texttt{output[:, k, :]} is used to compute the conditional probability $P_\theta(x_k \mid x_{<k})$, as required by Equation~\eqref{eq:ll}.

These models are composed of several blocks, each of which receives and outputs tensors of shape $(B, S, D)$. There are two kinds of blocks that alternate:
\begin{enumerate}
    \item \textbf{Channel mixing.} This is typically done with multilayer perceptron (MLPs), that require
          \begin{itemize}
              \item $O(D^2)$ parameters
              \item $O(BSD^2)$ matmul flops
          \end{itemize}

    \item \textbf{Temporal mixing,} which mixes the activations across the time dimension. The complexity of this block depends on the layer used.
\end{enumerate}

As we will see in the next section, temporal mixing is $O(BSD^2)$ if we use recurrent neural networks, or $O(BS^2D + BSD^2)$ is we use self-attention. Thus, if $D \gg S$ (very often the case), it is perfectly reasonable to use self-attention. However, if we are dealing with very long sequences, recurrent networks are preferred, since they are linear with the sequence length.

\subsection{Recurrent neural networks}
Recurrent neural networks process data sequentially, keeping some sort of state. In each time-step, the output $y_k$ and the new state are computed based on the current input $x_k$, and state. We will now review several recurrent neural networks.

\begin{table}[h]
    \centering
    \begin{tabular}{lccc}
        \hline
        \textbf{Model} & \textbf{Parameters} & \textbf{Matmul flops} & \textbf{Element-wise flops} \\
        \hline
        Elman RNN      & $3D^2 + 2D$         & $6BSD^2$              & $5 BSD$                     \\
        LSTM           & $8D^2 + 4D$         & $16BSD^2$             & $17 BSD$                    \\
        GRU            & $6D^2 + 3D$         & $12BSD^2$             & $14 BSD$                    \\
        RG-LRU         & $2 D^2 + 3D$        & $4 BSD^2$             & $13 B S D$                  \\
        \hline
    \end{tabular}
    \caption{Complexity of recurrent neural network layers in terms of the batch size (B), the sequence length (S), and the number of channels (D). Note: multiplying a matrix and a vector requires $D(2D - 1)$ operations ($D$ multiplications and $D - 1$ sums are required for every output).}
    \label{tab:rnn-complexity}
\end{table}

\subsubsection{Elman network}
An Elman network \cite{elman} uses a single vector to keep track of the state.
\begin{align}
    h_k & = \sigma\left(W_h x_k + U_h h_{k - 1} + b_h\right) \\
    y_k & = \sigma\left(W_y h_k + b_y\right),
\end{align}
where $\sigma$ denotes the logistic function $\sigma(x) \coloneq 1 / (1 + \exp(-x))$.
\subsubsection{LSTM}
The long short-term memory unit \cite{lstm} was designed to mitigate the vanishing gradient problem. The state is comprised of two different vectors, the cell state $c_k$ and the hidden state $h_k$.
\begin{align}
    f_k         & = \sigma(W_{f} x_k + U_{f} h_{k-1} + b_f)   \\
    i_k         & = \sigma(W_{i} x_k + U_{i} h_{k-1} + b_i)   \\
    o_k         & = \sigma(W_{o} x_k + U_{o} h_{k-1} + b_o)   \\
    \tilde{c}_k & = \tanh(W_{c} x_k + U_{c} h_{k-1} + b_c)    \\
    c_k         & = f_k \odot c_{k-1} + i_k \odot \tilde{c}_k \\
    h_k         & = o_k \odot \tanh(c_k)                      \\
    y_k         & = h_k                                       \\
\end{align}
\subsubsection{GRU}
The gated recurrent unit \cite{gru} is similar to an LSTM, but the state is comprised of just one vector. It also requires fewer parameters.
\begin{align}
    z_k       & = \sigma(W_{z} x_k + U_{z} h_{k-1} + b_z)            \\
    r_k       & = \sigma(W_{r} x_k + U_{r} h_{k-1} + b_r)            \\
    \hat{h}_k & = \tanh(W_{h} x_k + U_{h} (r_k \odot h_{k-1}) + b_h) \\
    h_k       & =   (1-z_k) \odot  h_{k-1} + z_k \odot  \hat{h}_k
\end{align}
\subsubsection{RG-LRU}
The real-gated linear recurrent unit \cite{rglru} can be expressed as $h_k = M(x) h_{k - 1} + c(x)$. As we will explain in the state-space models section, this is crucial to parallelize the computation. In addition, it incorporates a gating mechanism, inspired by LSTMs and GRUs.
\begin{align}
    r_k & = \sigma(W_a x_k + b_a)                                        \\
    i_k & = \sigma(W_x x_k + b_x)                                        \\
    a_k & = a^{c r_k}                                                    \\
    h_k & = a_k \odot h_{k - 1} + \sqrt{1 - a_k^2} \odot (i_k \odot x_k)
\end{align}
$a$ can be parametrized as $\sigma(\Lambda)$, where $\Lambda \in \mathbb{R}^D$, or $\sigma(\Lambda) \exp(i \theta)$, where $\Lambda, \theta \in \mathbb{R}^\frac{D}{2}$. This guarantees that $0 \leq \norm{a_k} \leq 1$. $c$ is usually set to a scalar-valued constant.

\subsection{Self-attention}
Transformers \cite{attention} revolutionized the field of natural language processing by introducing the attention mechanism, that allows each token to attend to other tokens in the sequence. This contrasts with recurrent neural networks, that only have access to the current token and a compressed representation of the past (hidden state).

We will now give a brief explanation of the multi-head self-attention mechanism. Let $x \in \mathbb{R}^{S \times D}$ be a sample within a batch. A self-attention layer will take $x$ and output $y \in \mathbb{R}^{S \times D}$:
\begin{enumerate}
    \item First, we compute $q, k, v \in \mathbb{R}^{S \times H \times \frac{D}{H}}$, where $q, k$ and $v$ stand for queries, keys, and values, respectively. $H$ is the number of ``heads'', a hyperparameter. This requires
          \begin{itemize}
              \item $3D^2$ parameters
              \item $6 B S D^2$ matmul flops
          \end{itemize}

    \item We compute the attention matrix $a \in \mathbb{R}^{S \times S \times H}$, where
          \begin{equation}
              a_{ijh} = \frac{\exp\left( \tilde{a}_{ijh}\right)}{\sum\limits_{j'=1}^{S} \exp\left( \tilde{a}_{ij'h} \right)},
          \end{equation}
          and
          \begin{equation}
              \tilde{a}_{ijh} = \frac{\langle q_{i h :}, k_{j h :} \rangle}{\sqrt{D/H}}.
          \end{equation}
          In the autoregressive setting, the unnormalized tensor $\tilde{a}$ is modified by setting $\tilde{a}_{ijh} = -\infty$ whenever $j > i$ to prevent tokens from attending to the future. This requires
          \begin{itemize}
              \item No parameters
              \item $2 B S^2 D$ matmul flops
          \end{itemize}
    \item We compute the output of each attention head by applying the attention weights to the values. This yields a tensor $z \in \mathbb{R}^{S \times H \times \frac{D}{H}}$, where:
          \begin{equation}
              z_{ihd} = \sum_{j=1}^{S} a_{ijh} v_{jhd}.
          \end{equation}
          This step requires:
          \begin{itemize}
              \item No parameters
              \item $2 B S^2 D$ matmul flops
          \end{itemize}
    \item Finally, we concatenate the outputs of all $H$ heads and apply a linear projection to produce the output $y \in \mathbb{R}^{S \times D}$. This step requires:
          \begin{itemize}
              \item $D^2$ parameters
              \item $2 B S D^2$ matmul flops
          \end{itemize}
\end{enumerate}

Thus, the number of matmul flops is $O(BSD^2 + BS^2D)$, so it is quadratic both in the model's dimension, and in the sequence length. In many real world cases, $S \ll D$. For example, GPT-3 was trained with $S = 2048$ and $D = 12288$ \cite{gpt3}, so $S \ll D$. In this thesis, we focus on the opposite case.

Some modern transformer architectures make use of multi-query attention \cite{shazeer2019fast} instead of multi-head attention, in which the same keys and values are shared across all heads, reducing the size of these tensors. While this results in a significant speedup because of the lower memory requirements, it does not change the quadratic complexity with respect to sequence length.

\subsection{State Space Models and Linear RNNs}
State space models (SSMs) are a class of models originally developed in control theory to describe the evolution of dynamical systems in continuous time. In this framework, the system's state evolves according to a set of differential equations, and outputs are generated as functions of the state and input. In practice, we discretize these equations to obtain models suitable for digital computation. The resulting discrete-time SSMs are equivalent to recurrent neural networks (RNNs) with linear state transitions. We include the continuous case here for historical context and completeness, but our focus is on the discretized version used in modern sequence modeling.
\begin{definition}[Continuous-time SSM]
    A state space model is defined by the following equation, that maps an input signal $x(t)$ to a latent state $h(t)$ and an output $y(t)$.
    \begin{align}
        h'(t) & = A h(t) + B x(t) \\
        y(t)  & = C h(t) + D x(t) \\
    \end{align}
\end{definition}
\begin{definition}[Discrete-time SSM]
    The discrete SSM is given by
    \begin{align}
        h_k & = \overline{A} h_{k - 1} + \overline{B} x_k \\
        y_k & = \overline{C} h_{k} + \overline{D} x_k,
    \end{align}
    where $\overline{A}, \overline{B}, \overline{C}$, and $\overline{D}$ depend on both $A, B, C, D$, and $\Delta$ (the time-step).
\end{definition}

\subsubsection{Discretization}

How the continuous parameters are transformed into the discrete parameters depends on the discretization rule. Let us briefly go through the zero-order hold method. We will assume that $x(t)$ is piecewise constant, so $x(t) = x_k$ for all $t \in \left[k \Delta, (k + 1)\Delta\right)$. Note that $h(t)$ is not piecewise constant, and that to find it we need to solve the following ODE, where $t$ runs from 0 to $\Delta$ (for convenience), and $h(0) = h_k$.
\begin{equation}
    h'(t) = Ah(t) + B x_k.
\end{equation}
As usual with inhomogeneous linear ODEs, we will first find a homogeneous solution (setting $B x_k = 0$):
\begin{equation}
    h'(t) = A h(t) \implies h(t) = \exp(At) c,
\end{equation}
where $\exp(M) = I + M + \frac{1}{2!} M^2 + \frac{1}{3!} M^3 + \cdots$ is the matrix exponential. Then, we find a particular solution (setting $h'(t) = 0$):
\begin{equation}
    A h(t) + B x_k = 0 \implies h(t) = -A^{-1} B x_k.
\end{equation}
Thus, a general solution will be given by
\begin{equation}
    h(t) = \exp(At) c -A^{-1} B x_k,
\end{equation}
where $h(0) = h_k$. Imposing this yields $c = h_k + A^{-1} B x_k$. Thus,
\begin{equation}
    h(t) = \exp(At) \left(h_k + A^{-1} B x_k\right) -A^{-1} B x_k.
\end{equation}
Substituting $t = \Delta$ and rearranging yields
\begin{equation}
    h_{k + 1} = h(\Delta) = \exp(A \Delta) h_k + \left(\exp(A \Delta) - I\right) A^{-1} B x_k.
\end{equation}
Thus,
\begin{align}
    \overline{A} & = \exp(A \Delta)                           \\
    \overline{B} & = \left(\exp(A \Delta) - I\right) A^{-1} B
\end{align}
Note: in some papers in the literature \cite{mamba}, $A^{-1}$ is placed up front. Naturally, the expressions are identical, because the solution is unique (Picard–Lindelöf theorem). The reason is that $A^{-1}$ commutes with $\exp(A \Delta)$, as the exponential is polynomial in $A$.

\subsubsection{The recurrent view}
The discretized SSM is equivalent to a recurrent neural network, allowing us to easily compute $y$ given $x$ in $O(S D^2)$ time. We always use the recurrent view for inference. For training, we can use the recurrent view, the convolutional view, or an associative scan.
\subsubsection{The convolutional view}
The convolutional view can be used to parallelize training. Note that $y$ can easily be computed in parallel once we have $h$, so let's focus on $h$, which is defined sequentially. The sequence $h_k$ follows a simple recurrent neural network without gating, so we can unroll the recurrence as follows:
\begin{align}
    h_0 & \coloneq 0                                                                          \\
    h_1 & = \overline{B} x_1                                                                  \\
    h_2 & =\overline{B} x_2 + \overline{A} \overline{B} x_1                                   \\
    h_3 & =\overline{B} x_3 + \overline{A} \overline{B} x_2 + \overline{A}^2 \overline{B} x_1 \\
    \vdots                                                                                    \\
    h_k & = \sum_{i = 1}^k \overline{A}^{k - i} \overline{B} x_i                              \\
\end{align}
Note that $h = K * x$, where $*$ denotes convolution (Definition \ref{def:convolution}), and $K = (\overline{B}, \overline{A}\overline{B}, \overline{A}^2\overline{B}, ...)$. Thus, given $K$ we can compute $h$ in $O(S \log(S) D^2)$ using the FFT (Theorem \ref{thm:convolution}). This is slower than the recurrent implementation, but it is \emph{parallelizable}. In particular, with enough devices the parallel FFT can bring the time down to $O(\log(S) D^2)$.

The biggest problem is that in order to compute $K$, we have to multiply $\overline{A}$ with itself $S$ times, which results in a runtime of $O(S D^3)$. One way of speeding this up is by diagonalizing $\overline{A}$. Let us consider the change of basis $h = V h'$.

\begin{equation}
    \left\{
    \begin{aligned}
        h_k & = \overline{A} h_{k - 1} + \overline{B} x_k \\
        y_k & = \overline{C} h_{k} + \overline{D} x_k
    \end{aligned}
    \right.
    \, \Rightarrow \,
    \left\{
    \begin{aligned}
        V h_k' & = \overline{A} V h_{k - 1}' + \overline{B} x_k \\
        y_k    & = \overline{C} V h_{k}' + \overline{D} x_k
    \end{aligned}
    \right.
    \, \Rightarrow \,
    \left\{
    \begin{aligned}
        h_k' & = V^{-1} \overline{A} V h_{k - 1}' + V^{-1} \overline{B} x_k \\
        y_k  & = \overline{C} V h_{k}' + \overline{D} x_k
    \end{aligned}
    \right.
\end{equation}
The new recurrence matrix $V^{-1} \overline{A} V$ will be diagonal for some matrix $V$ if we require $\overline{A}$ to be diagonalizable. Fortunately, the set of diagonalizable matrices is dense, meaning that we can approximate any matrix with a diagonalizable one to arbitrary precision.

\subsubsection{The associative scan view}
So far, we have seen that the recurrent view is $O(S)$ but can't be parallelized, and that the convolutional view is $O(S \log S)$ but can be parallelized. In this section, we will see another way of achieving parallelization leveraging the associative scan. We would like to remark that this is only applicable to training, for inference, the recurrent view is always used.

As motivation, consider the problem of computing the cumulative sum of an array $a$. This task is $O(S)$, and it does not seem easily parallelizable, because in order to compute $\text{sum}(a[:n])$ one needs $\text{sum}(a[:n - 1])$. However, with enough processors one can bring down the runtime to $O(\log S)$ (see Figure \ref{fig:associative_scan} for an overview of the algorithm). Interestingly, this can be done with any associative binary operator. In the rest of this section we will ``lift'' an affine recurrence into a parallel scan.

\begin{definition}[Associative scan]
    Given an array of elements $[e_1, e_2, ..., e_n]$ with an associative binary operation $\bullet$ between them, an associative scan outputs the array $[e_1, e_1 \bullet e_2, ..., e_1 \bullet \dots \bullet e_n]$.
\end{definition}

\begin{figure}[h]
    \centering
    \textbf{Up Sweep} \\
    \begin{tikzpicture}[
            level distance=1.2cm,
            level 1/.style={sibling distance=6cm},
            level 2/.style={sibling distance=3cm},
            level 3/.style={sibling distance=1.5cm},
            every node/.style={draw, rectangle, minimum size=0.8cm, inner sep=2pt},
            edge from parent/.style={draw, thick, {Latex}-}
        ]

        \node (root) {25}
        child {node {11}
                child {node {4}
                        child {node {3}}
                        child {node {1}}
                    }
                child {node {7}
                        child {node {7}}
                        child {node {0}}
                    }
            }
        child {node {14}
                child {node {5}
                        child {node {4}}
                        child {node {1}}
                    }
                child {node {9}
                        child {node {6}}
                        child {node {3}}
                    }
            };
    \end{tikzpicture}

    \vspace{0.5em}
    \[
        \text{sum}[v] = \text{sum}[L[v]] + \text{sum}[R[v]]
    \]

    \vspace{1.5cm}
    \textbf{Down Sweep}

    \begin{tikzpicture}[
        level distance=1.2cm,
        level 1/.style={sibling distance=6cm},
        level 2/.style={sibling distance=3cm},
        level 3/.style={sibling distance=1.5cm},
        every node/.style={draw, rectangle, minimum size=0.8cm, inner sep=2pt},
        edge from parent/.style={draw, thick, -{Latex}}
        ]

        \node (r2) {0}
        child {node {0}
                child {node {0}
                        child {node {0}}
                        child {node {3}}
                    }
                child {node {4}
                        child {node {7}}
                        child {node {0}}
                    }
            }
        child {node {11}
                child {node {11}
                        child {node {11}}
                        child {node {15}}
                    }
                child {node {16}
                        child {node {16}}
                        child {node {22}}
                    }
            };
    \end{tikzpicture}

    \vspace{0.5em}
    \[
        \begin{aligned}
            \text{prescan}[L[v]] & = \text{prescan}[v]                    \\
            \text{prescan}[R[v]] & = \text{sum}[L[v]] + \text{prescan}[v]
        \end{aligned}
    \]
    \caption{\textbf{Associative scan algorithm} (taken from \cite{associative_scan}). The associative scan algorithm computes prefix sums (or any associative operation) in $O(\log S)$ time using parallel computation. First, the up-sweep is performed, where consecutive nodes are combined. Then, the root is set to the identity (zero in the case of the sum). For the down-sweep, the value at a parent node is passed directly to its left child. The value passed to the right child is the parent's value combined with the up-sweep result from its left sibling.}
    \label{fig:associative_scan}
\end{figure}

\begin{definition}[Affine function shorthand]
    We use a shorthand square-bracket notation for affine functions:

    \begin{equation}
        [M, c](x) \coloneq Mx + c.
    \end{equation}
\end{definition}

\begin{definition}[Piping operator]
    Let $f_1$, and $f_2$ be two functions.
    \begin{equation}
        (f_1 \rhd f_2)(x) \coloneq f_2(f_1(x)).
    \end{equation}
\end{definition}

\begin{proposition}[Composition of affine functions]
    \begin{equation}
        [M_1, c_1] \rhd [M_2, c_2] = [M_2 M_1, M_2 c_1 + c_2]
    \end{equation}
\end{proposition}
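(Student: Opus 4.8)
The plan is to unfold both the piping operator and the affine-function shorthand directly from their definitions and simplify. Since $(f_1 \rhd f_2)(x) = f_2(f_1(x))$, I would start by writing $([M_1,c_1] \rhd [M_2,c_2])(x) = [M_2,c_2]\bigl([M_1,c_1](x)\bigr)$, then substitute $[M_1,c_1](x) = M_1 x + c_1$ into the outer affine map to get $M_2(M_1 x + c_1) + c_2$.

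Next I would distribute the matrix-vector product: $M_2(M_1 x + c_1) + c_2 = (M_2 M_1) x + (M_2 c_1 + c_2)$, using only associativity of matrix multiplication and distributivity over vector addition. Recognizing the right-hand side as the affine map with linear part $M_2 M_1$ and translation part $M_2 c_1 + c_2$, it equals $[M_2 M_1,\, M_2 c_1 + c_2](x)$ by the shorthand definition. Since this holds for every $x$, the two affine functions are equal, which is the claim.

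There is essentially no obstacle here; the only thing to be careful about is the order of the indices, since piping reverses the apparent left-to-right order (the map applied first, $[M_1,c_1]$, ends up on the right inside the composition), so the product is $M_2 M_1$ and not $M_1 M_2$. This proposition will later be used to show that affine recurrences can be lifted into an associative scan, so getting the composition order right is the point of stating it.
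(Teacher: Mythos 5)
Your proof is correct and follows exactly the same route as the paper's: unfold the piping operator and the bracket shorthand, substitute $M_1 x + c_1$ into the outer map, distribute to get $M_2 M_1 x + M_2 c_1 + c_2$, and read off the result. Your remark about the order reversal ($M_2 M_1$, not $M_1 M_2$) is the one detail worth being explicit about, and you got it right.
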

\begin{proof}
    \begin{align}
        f_2(f_1(x)) = M_2 \left(M_1 x + c_1\right) + c_2 = M_2 M_1 x + M_2 c_1 + c_2 = [M_2 M_1, M_2 c_1 + c_2](x)
    \end{align}
\end{proof}

\begin{theorem}[Affine recurrence lifting]
    Given a recurrence
    \begin{equation}
        h_k = \overline{A} h_{k - 1} + \overline{B} x_k,
    \end{equation}
    with $h_0 = 0$, we can define the sequence of elements (affine functions) given by
    \begin{equation}
        e_k = \left[\overline{A}, \overline{B} x_k\right].
    \end{equation}
    As binary operation, we will take function composition. Note that we want $e_1 \rhd e_2 \rhd \dots \rhd e_n$. Thus, the binary operation can be expressed as
    \begin{equation}
        [M_1, c_1] \bullet [M_2, c_2] = [ M_2 M_1, M_2 c_1 + c_2 ].
    \end{equation}
    The associative scan will produce a sequence of affine functions whose intercepts are $\{h_k\}_k$.
\end{theorem}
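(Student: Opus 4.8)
The plan is to split the argument into two parts: (i) verify that $\bullet$ is associative, so that the associative scan is well-defined (and, in particular, so its parallel up-sweep/down-sweep implementation is independent of how the products are parenthesized), and (ii) show by induction on $k$ that the $k$-th prefix $e_1 \bullet e_2 \bullet \cdots \bullet e_k$ is the affine function $[\overline{A}^k,\, h_k]$, so that its intercept is exactly $h_k$.

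For (i), I would invoke the Composition of affine functions proposition: the rule $[M_1, c_1] \bullet [M_2, c_2] = [M_2 M_1,\, M_2 c_1 + c_2]$ is literally $[M_1, c_1] \rhd [M_2, c_2]$, i.e.\ ordinary function composition of the underlying affine maps, read as ``apply the first map, then the second.'' Since composition of functions is associative, $\bullet$ inherits associativity, and the associative scan over $[e_1, \dots, e_n]$ outputs $[e_1,\, e_1 \bullet e_2,\, \dots,\, e_1 \bullet \cdots \bullet e_n]$ unambiguously.

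For (ii), the base case is $e_1 = [\overline{A},\, \overline{B} x_1]$; using $h_0 = 0$, the recurrence gives $h_1 = \overline{A}\cdot 0 + \overline{B} x_1 = \overline{B} x_1$, which is precisely the intercept of $e_1$ (and its matrix part is $\overline{A} = \overline{A}^1$). For the inductive step, assuming $e_1 \bullet \cdots \bullet e_{k-1} = [\overline{A}^{k-1},\, h_{k-1}]$, I would compute
\begin{align*}
    e_1 \bullet \cdots \bullet e_k
    &= [\overline{A}^{k-1},\, h_{k-1}] \bullet [\overline{A},\, \overline{B} x_k] \\
    &= [\overline{A}\,\overline{A}^{k-1},\; \overline{A} h_{k-1} + \overline{B} x_k] \\
    &= [\overline{A}^{k},\, h_k],
\end{align*}
using the $\bullet$ formula and then the recurrence $h_k = \overline{A} h_{k-1} + \overline{B} x_k$. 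Carrying the matrix component $\overline{A}^k$ through the induction is not strictly needed for the statement, but it makes each step a one-line application of the composition rule. An equivalent and perhaps more intuitive route: the intercept of an affine map $[M, c]$ is its value at $0$, and a quick induction shows $(e_1 \rhd \cdots \rhd e_k)(0) = e_k\bigl((e_1 \rhd \cdots \rhd e_{k-1})(0)\bigr) = \overline{A} h_{k-1} + \overline{B} x_k = h_k$, which is exactly the ``unrolling'' already displayed for the convolutional view.

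I do not expect a genuine obstacle here; the only things to watch are bookkeeping conventions --- confirming that the scan accumulates $e_1$ first and that $\bullet$ matches $\rhd$ (``first then second'') rather than its reverse --- and being explicit that the hypothesis $h_0 = 0$ is exactly what makes the base case line up.
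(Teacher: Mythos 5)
Your proposal is correct, and where it overlaps with the paper's proof it takes the same route: the paper's entire argument is your part (i), namely that $\bullet$ is associative because it is literally function composition $\rhd$ written out via the Composition of affine functions proposition. The paper stops there --- it explicitly says ``the only thing we have to prove is that $\bullet$ is associative'' --- and leaves the claim about the intercepts unproven. Your part (ii), the induction showing that $e_1 \bullet \cdots \bullet e_k = [\overline{A}^k, h_k]$ (equivalently, that evaluating the composed map at $h_0 = 0$ reproduces the unrolled recurrence), is therefore an addition rather than a divergence, and a worthwhile one: associativity guarantees that the parallel scan computes the prefix compositions correctly, but it does not by itself establish that those prefix compositions encode the $h_k$ in their intercepts, which is the substance of the theorem's final sentence. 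Your bookkeeping is also right --- the base case uses $h_0 = 0$, and $\bullet$ is oriented as ``first then second'' so the scan accumulates from $e_1$ --- so there is no gap in your argument; if anything, you have filled a small one in the paper's.
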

\begin{proof}
    The only thing we have to prove is that $\bullet$ associative, that is, that
    \begin{equation}
        (e_1 \bullet e_2) \bullet e_3 = e_1 \bullet (e_2 \bullet e_3).
    \end{equation}
    However, this is equivalent to
    \begin{equation}
        (e_1 \rhd e_2) \rhd e_3 = e_1 \rhd (e_2 \rhd e_3).
    \end{equation}
    which is true because function composition is associative.
\end{proof}

This lets us compute affine recurrences in $O(\log S)$ using multiple compute units. However, each step of the parallel scan involves computing the product of two matrices, which is $O(D^3)$. Once again, we can just assume $\overline{A}$ to be diagonal.

Finally, we should remark that in practice, ordinary scans can be faster than associative scans, as the main bottleneck is often memory overhead \cite{griffin}.

\subsection{Related work}
\subsubsection{S4}
S4 layers \cite{s4} are SSM models that use the convolutional mode for training, and the recurrent view for inference. The matrix $A$ is constrained to be of the form
$
    V \Lambda V^* - P Q^T
$ (normal plus low-rank). They give an algorithm to compute the convolutional kernel $K$ in $O((L + D)(\log(L + D))^2)$. Note that although an associative scan could be used, the matrix $\overline{A}$ not being diagonal would make the algorithm impractically slow.
\subsubsection{SaShiMi}
SaShiMi \cite{SaShiMi} uses S4 layers, but restricts $A = \Lambda - P P^*$. They also introduce a multi-scale architecture, in which they use pooling to reduce the sequence length.
\subsubsection{S5}
For training, S5 layers \cite{s5} use an associative scan. In order to avoid the cubic complexity of matrix multiplication, they make $A$ be diagonal.
\subsubsection{Mamba}
For training, Mamba \cite{mamba} layers use an associative scan. In addition, $B$, $C$ and $\Delta$ are not directly learnable parameters, but depend on the input $x$. The recurrence is given by
\begin{equation}
    h_k = \overline{A_k} h_{k - 1} + \overline{B_k} x_k,
\end{equation}
where $\overline{A_k}$ and $\overline{B_k}$ are functions of $A$ and $x_k$. Note that this can be lifted to an associative scan by defining $e_k = \left[ \overline{A_k}, \overline{B_k} x_k\right]$. Allowing $\overline{A_k}$ and $\overline{B_k}$ to depend on $x$ increases performance, but the time dependency means the convolutional method is no longer valid.
\subsubsection{Griffin}
For training, RG-LRU layers \cite{griffin} can use an associative scan. The recurrence is given by
\begin{equation}
    h_k = a_k \odot h_{k - 1} + \sqrt{1 - a_k^2} \odot i_k \odot x_k,
\end{equation}
where $a_k$, and $i_k$ are derived from $x_k$. Again, note that this can be lifted to an associative scan by defining $e_k = \left(\text{diag}\left(a_k\right), \text{diag}\left(\sqrt{1 - a_k^2} \odot i_k\right)\right)$. Like in Mamba, the convolutional view cannot be applied as $a_k$ depends on $k$.

\chapter{Poolformer}

Modern sequence-to-sequence models are typically based on the transformer architecture, which is organized as a stack of blocks. Each block combines temporal mixing across the sequence dimension (through self-attention layers) and channel mixing (through dense layers). While this design has proven highly effective, the quadratic cost of self-attention with respect to sequence length makes scaling to very long sequences computationally prohibitive.

We aim to design a sequence-to-sequence model suitable for very long sequences. An obvious starting point is replacing self-attention layers with recurrent layers. However, if the sequences are extremely long, this might not be efficient enough. In addition, even though recurrent neural networks have a potentially infinite context window, in practice, they do not. In this thesis, we will study how pooling can help alleviate these issues.

\section{SkipBlocks}

\begin{figure}[h]
    \centering
    \includegraphics[width=0.5\linewidth]{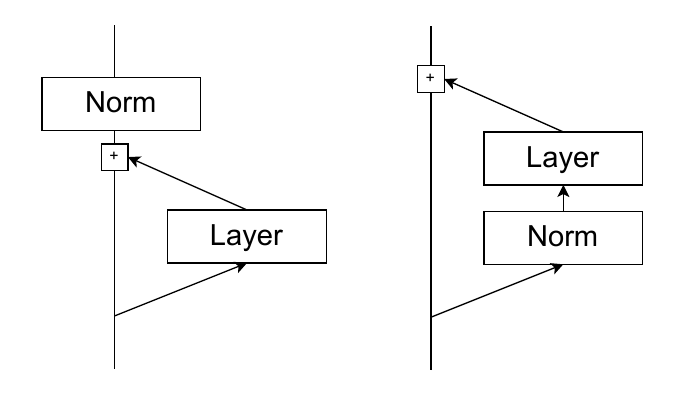}
    \caption{Comparison of post-layer normalization (left) and pre-layer normalization (right).}
    \label{fig:norm}
\end{figure}

The original transformer places the normalization layer between residual blocks (post-layer normalization) \cite{attention}. Pre-layer normalization, on the other hand, moves the normalization layer to the beginning of the residual block. This makes the model more interpretable: the black vertical bar in Figure \ref{fig:norm} (often called the residual stream) is the original input plus the output of the different blocks. At present, pre-layer normalization is more common, as it has been shown to improve training stability (refer to \cite{prelayernorm} for more details).

Normal transformer-like models consist of a stack of residual blocks, each adding its contribution to the residual stream (see Figure \ref{fig:norm}). However, if we apply down-pooling, the dimension of the data changes, and residual connections are no longer feasible. Thus, instead of defining our model as a sequence of blocks, we define it recursively in terms of SkipBlocks.

A SkipBlock consists of a down-pooling layer, an inner SkipBlock, and an up-pooling layer. It also contains MLP and temporal mixing ResBlocks, as a normal transformer would. We consider two ways of introducing skip-connections: from the beginning to the end (long skip-connections) and around the pooling layers (short skip-connections). In Chapter \ref{sec:ablations}, we ablate the importance of having ResBlocks around the pooling layers (instead of just after). See Figure \ref{fig:skipblock} for a visual explanation.

\begin{figure}[h]
    \centering
    \begin{subfigure}{0.48\textwidth}
        \centering
        \includegraphics[width=\linewidth]{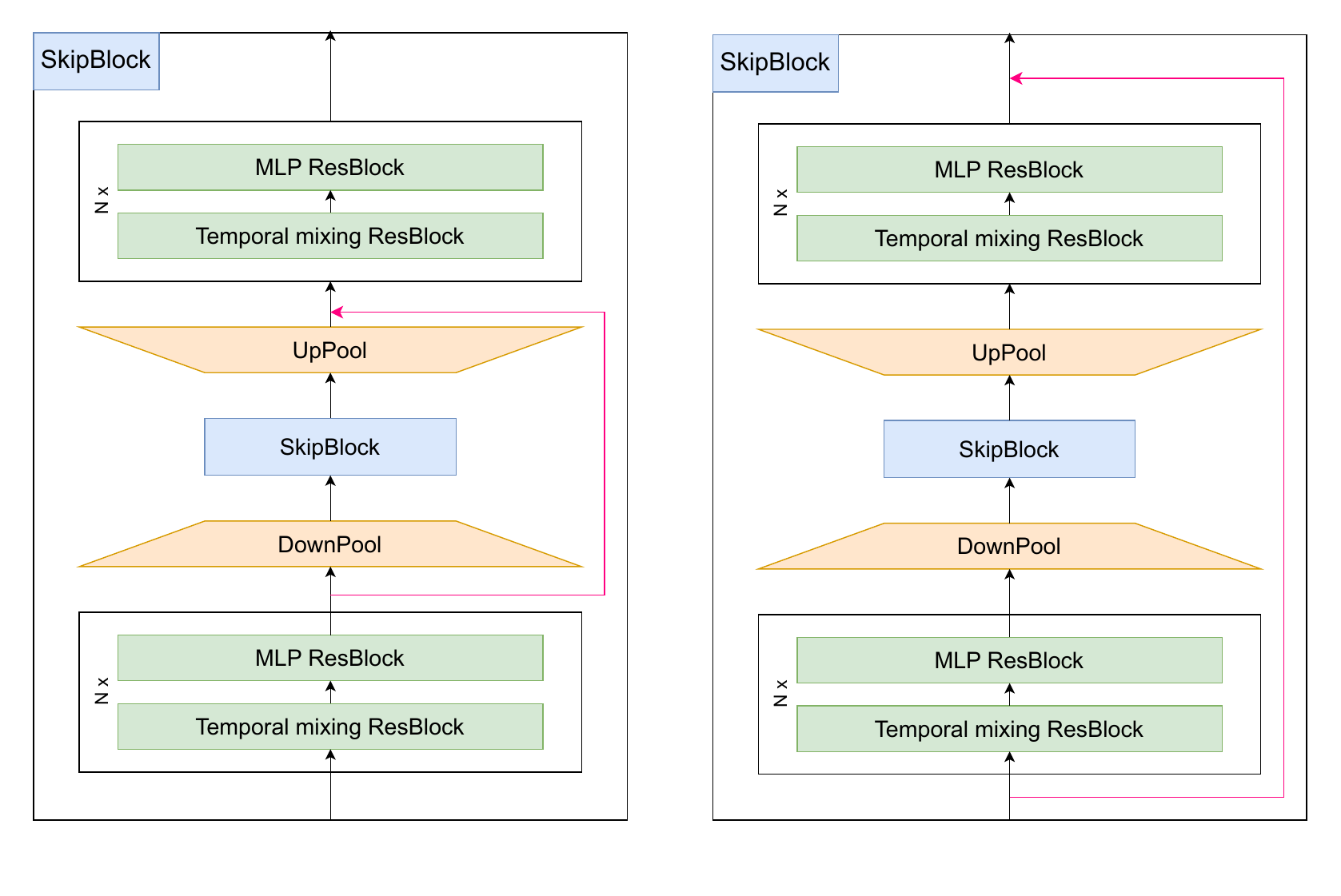}
        \caption{(short skip-connections)}
    \end{subfigure}
    \hfill
    \begin{subfigure}{0.48\textwidth}
        \centering
        \includegraphics[width=\linewidth]{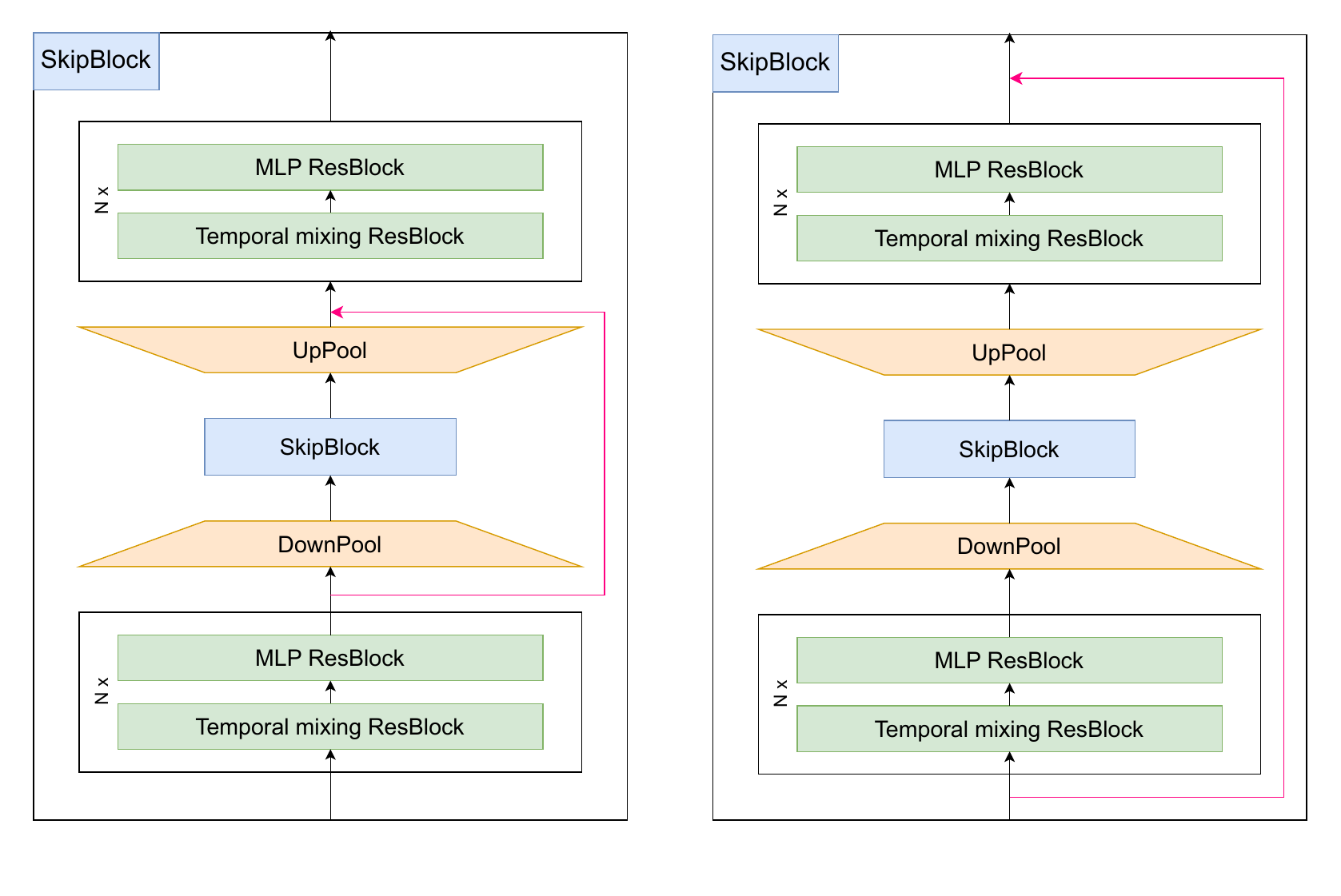}
        \caption{(long skip-connections)}
    \end{subfigure}
    \caption{\textbf{SkipBlock.} A SkipBlock consists of a down-pooling layer, an inner SkipBlock, and an up-pooling layer. It also contains MLP and temporal mixing ResBlocks. The skip-connections (pink) connect distant parts of the residual stream with the same shape to improve gradient flow. }
    \label{fig:skipblock}
\end{figure}

\section{Pooling}
\label{sec:pooling}

\begin{figure}[h]
    \centering
    \begin{subfigure}{0.48\textwidth}
        \centering
        \includegraphics[width=\linewidth]{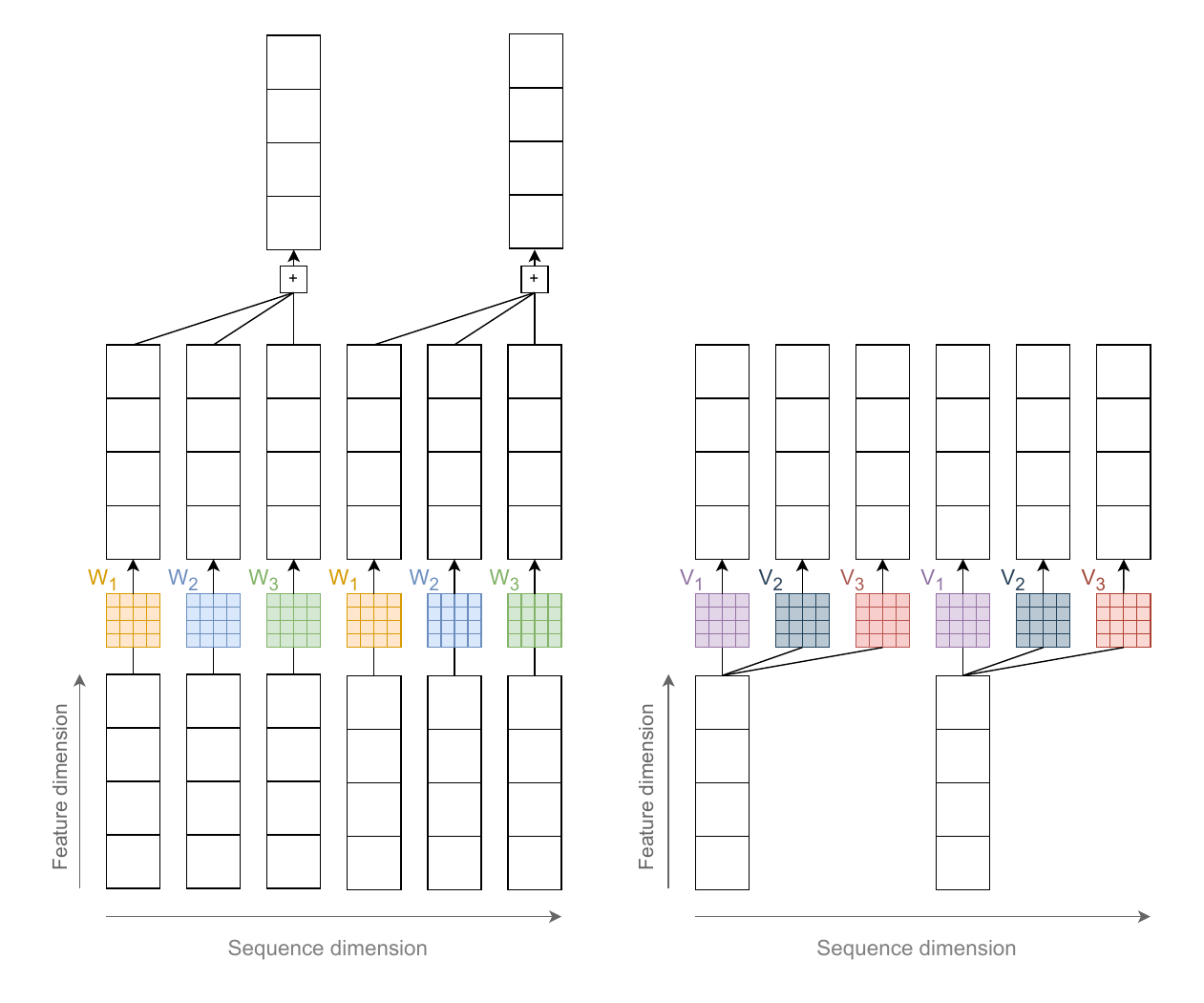}
        \caption{Down-pooling}
    \end{subfigure}
    \hfill
    \begin{subfigure}{0.48\textwidth}
        \centering
        \includegraphics[width=\linewidth]{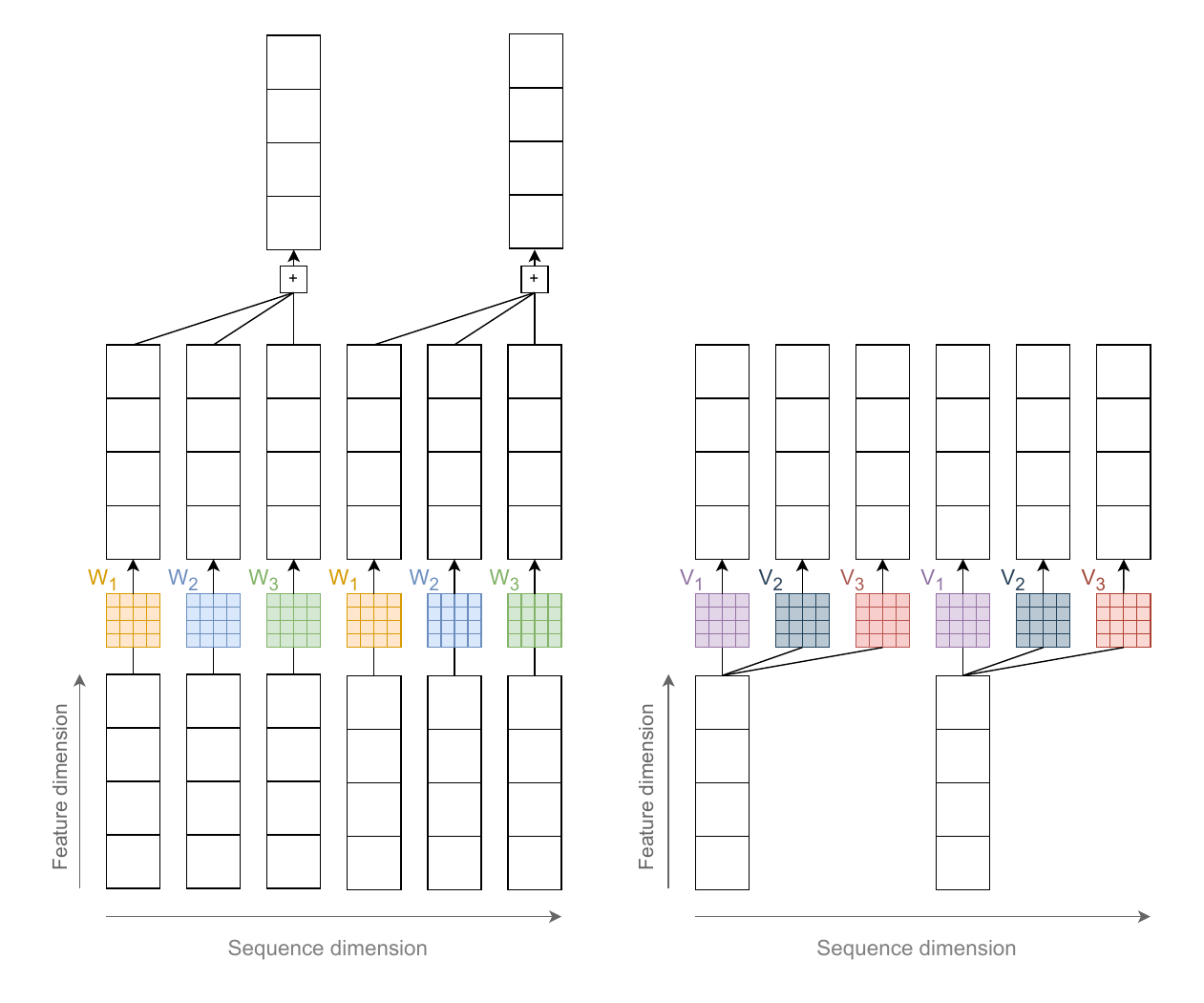}
        \caption{Up-pooling}
    \end{subfigure}
    \caption{\textbf{Pooling.} Pooling is implemented with strided convolutions (down-pooling) and strided transposed convolutions (up-pooling). In this example, the pooling factor is $F = 3$ and the sequence length is 6.}
    \label{fig:pooling_architecture}
\end{figure}

In order to reduce the sequence length by a factor of $F$, we use a 1-dimensional convolution with kernel size $F$, stride $F$, and feature group count $G$. This requires $F D^2 / G$ parameters. The procedure, depicted in Figure \ref{fig:pooling_architecture}, can be summarized as follows:

\begin{enumerate}
    \item Divide the sequence, of shape $\mathbb{R}^{L \times D}$, into $\frac{L}{F}$ disjoint segments of shape $\mathbb{R}^{F \times D}$
    \item Every segment needs to be mapped to a single vector in $\mathbb{R}^D$. In order to do that,
          \begin{enumerate}
              \item We map the $i$-th vector in a segment to a vector in $\mathbb{R}^D$ using $W_i$, where $W_i$ is a block-diagonal matrix with block size equal to $G$. Special cases are $G = 1$ (ordinary matrix) and $G = D$ (diagonal matrix).
              \item We sum all the mapped vectors, from $i = 1$ to $i = F$.
          \end{enumerate}
\end{enumerate}

In order to increase the sequence length by a factor of $F$, we use a transposed convolution with stride $F$. The procedure is:
\begin{enumerate}
    \item For each token $i$ in the down-sampled sequence. Map it with $V_j$, from $j=1$ to $j=F$, where $V_j$ is a block-diagonal matrix with block size equal to $G$.
    \item These will be output tokens $F i$, $F i + 1$, ... $F i + (F - 1)$
\end{enumerate}
The output of the up-pooling layer will be merged onto the residual stream. In order to keep training stable, we initialize the up-pooling layer's weights by sampling from
\begin{equation}
    \mathcal{N}\left(0, \frac{\mathtt{scale}}{n_\mathrm{in}}\right),
\end{equation}
where $n_\mathrm{in}$ is the number of inputs. If $\mathtt{scale} = 1$, we recover Lecun normal initialization. If $\mathtt{scale} = 0$, the behavior of the network at initialization will be identity-like.

\section{ResBlocks}
Both MLPs and temporal mixing layers are wrapped in residual blocks (see Figure \ref{fig:resblock}). A residual block (ResBlock) computes its output as $x + F(x)$, where $x$ is the input and $F(x)$ is a sequence of operations \cite{he2016deep}. This structure helps stabilize training and enables deeper architectures by allowing gradients to flow more easily through the network.

\begin{figure}[h]
    \centering
    \begin{subfigure}{0.48\textwidth}
        \centering
        \includegraphics[width=\linewidth]{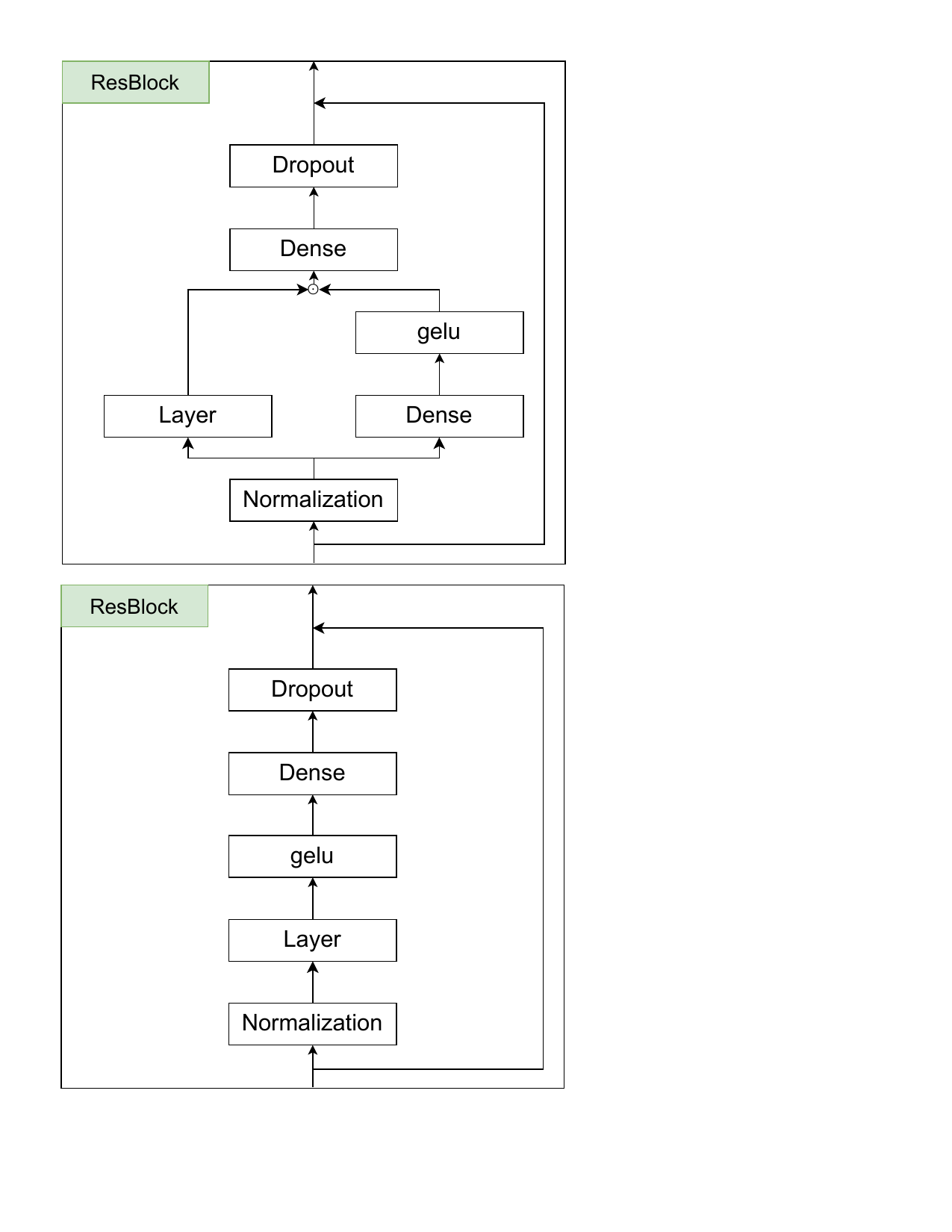}
        \caption{ResBlock without gating}
    \end{subfigure}
    \hfill
    \begin{subfigure}{0.48\textwidth}
        \centering
        \includegraphics[width=\linewidth]{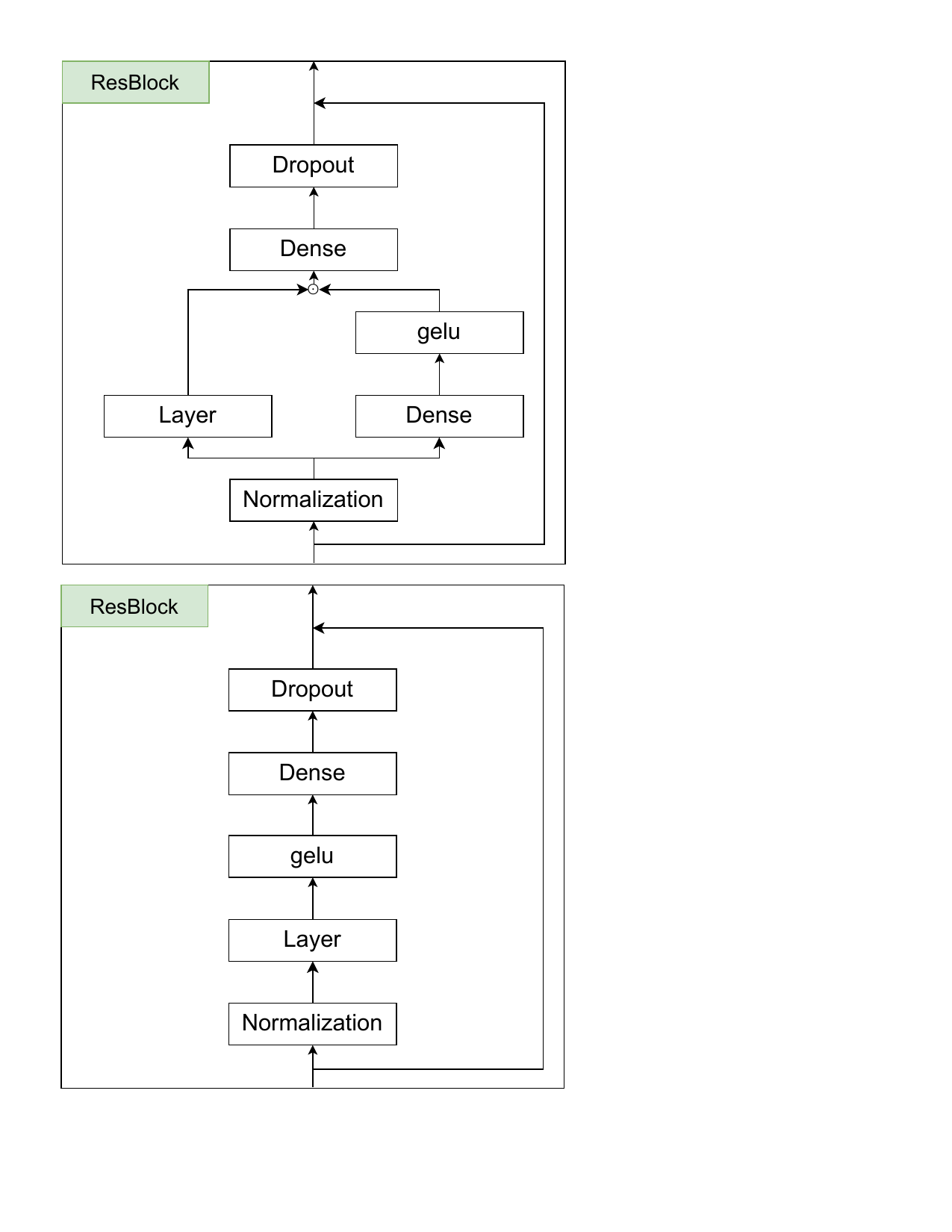}
        \caption{ResBlock with gating}
    \end{subfigure}
    \caption{Comparison of standard ResBlock (left) and gated ResBlock (right).}
    \label{fig:resblock}
\end{figure}

\subsection*{Normalization}

In our residual block, the first operation is a normalization layer. We tried three different normalization strategies:
\begin{itemize}
    \item \textbf{LayerNorm} \cite{layernorm}, in which the tensor is first standardized across the feature dimension, and then scaled and shifted by some learned parameters. That is, given $x \in \mathbb{R}^D$, we compute $\text{LayerNorm}(x)$ independently for each sample and time-step:
          \begin{align}
              \mu                 & = \frac{1}{D} \sum_{i=1}^D x_i                               \\
              \sigma              & = \sqrt{\frac{1}{D} \sum_{i=1}^D (x_i - \mu)^2 + \epsilon}   \\
              \text{LayerNorm}(x) & = \gamma \odot \left( \frac{x - \mu}{\sigma} \right) + \beta
          \end{align}
          where $\gamma, \beta \in \mathbb{R}^D$ are learnable.
    \item \textbf{RMSNorm} \cite{rmsnorm}, in which the tensor is first normalized by the root mean square, and then scaled by some learned parameters. It uses half as many parameters as LayerNorm, and previous research \cite{xlstm7b} found it more stable while training recurrent models.
          \begin{align}
              \text{RMS}(x)     & = \sqrt{\frac{1}{D} \sum_{i=1}^d x_i^2 + \epsilon}    \\
              \text{RMSNorm}(x) & = \gamma \odot \left( \frac{x}{\text{RMS}(x)} \right)
          \end{align}
    \item \textbf{No normalization}, where the normalization layer is replaced by the identity function. This was done to test whether normalization could be avoided with careful initialization.
\end{itemize}

\subsection*{Gating}
Inspired by \cite{griffin}, we consider two variants of our residual block (see Figure \ref{fig:resblock} for a visual explanation):
\begin{itemize}
    \item \textbf{Without gating:} the MLP or temporal mixing layer followed by a gelu non-linearity \cite{gelu}, and then by a dense layer.
    \item \textbf{With gating:} two branches are created, one with the MLP or temporal mixing layer, and another with a dense layer and a gelu. After that, both branches are multiplied component-wise. The result is then passed through a final dense layer.
\end{itemize}
The output of the final dense layer will be passed through a Dropout layer, and then merged onto the residual stream. In order to keep training stable, we initialize the last layer's weights by sampling from
\begin{equation}
    \mathcal{N}\left(0, \frac{\mathtt{scale}}{n_\mathrm{in}}\right),
\end{equation}
where $n_\mathrm{in}$ is the number of inputs. If $\mathtt{scale} = 1$, we recover Lecun normal initialization. If $\mathtt{scale} = 0$, the behavior of the network at initialization will be identity-like.

\subsection*{Dropout}

To avoid overfitting, we include a Dropout layer \cite{dropout} as the last operation. Dropout randomly sets a fraction of the input units to zero during training, which helps prevent the model from relying too heavily on any particular feature and encourages better generalization.

\section{Temporal Mixing Layer}
\label{sec:rg-lru}
As temporal mixing layer, we use the RG-LRU, introduced in \cite{griffin}. Additionally, we explore introducing attention in the deepest layers, where the sequence length is short. The recurrence is given by the following equations.
\begin{align}
    r_k & = \sigma(W_a x_k + b_a)                                         \\
    i_k & = \sigma(W_x x_k + b_x)                                         \\
    a_k & = a^{c r_k}                                                     \\
    h_k & = a_k \odot h_{k - 1} + \sqrt{1 - a_k^2} \odot (i_k \odot x_k), \\
\end{align}
Following \cite{griffin}, we fix $c = 8$. Regarding $a$, we explore using both complex and real values. Note that if the magnitude is small, the hidden states will quickly converge to zero. Similarly, if the magnitude is larger than one, the hidden state will diverge to infinity. Because of this, we use a parametrization that ensures that the magnitude will always be less than one.

\subsubsection{Real case}
In the real case we set $a = \sigma(\Lambda)$, where $\Lambda \in \mathbb{R}^D$. We initialize $\Lambda$ in such a way that $a \sim \sqrt{\mathcal{U}[0.9^2, 0.99^2]}$.

\subsubsection{Complex case}

In the complex case we set $a = \sigma(\Lambda) \exp(i \theta)$, where $\Lambda \in \mathbb{R}^\frac{D}{2}$, and $\theta \in \mathbb{R}^\frac{D}{2}$. We initialize $\Lambda$ in such a way that $\sigma(\Lambda)$ follows a $\sqrt{\mathcal{U}[0.9^2, 0.99^2]}$. On the other hand, $\theta$ is initialized by sampling from a $\mathcal{U}[0, \frac{\pi}{10}]$, following \cite{orvieto2023resurrecting}. This results in a uniform initialization on the short ring in the complex plane centered at the origin, with radius in the interval $[0.9,0.99]$ and angle restricted to the interval $[0,\frac{\pi}{10}]$. Note that if we had sampled $\sigma(\Lambda)$ from  $\mathcal{U}[0.9, 0.99]$, the distribution wouldn't be uniform on the ring, as the concentration of points would be larger closer to the origin.

Even if the recursion occurs in complex space, we need to map the output back to real numbers to use ordinary layers, such as dense layers. In order to do this, we concatenate the real and imaginary parts, obtaining a vector in $\mathbb{R}^D$.

\chapter{Experiments}
\section{Datasets}
\label{sec:datasets}
We use the same three datasets from \cite{SaShiMi}, keeping the same train / validation / test split. All of them are audio recordings sampled at 16000 Hz with 16-bit linear PCM encoding.
\begin{enumerate}
    \item \textbf{SC09} \cite{sc09,warden2018speech}, which consists of 1-second (16,000 tokens) recordings of the spoken digits zero to nine (similar to MNIST \cite{mnist}, but with audio instead of images). For this dataset, we also compute the FID and IS scores using SaShiMi's repo \cite{SaShiMi}.
    \item \textbf{Beethoven} \cite{beethoven}, that contains 8-second (128,000 tokens) recordings of Beethoven's piano sonatas.
    \item \textbf{YouTubeMix} \cite{youtubemix}, that contains 1-minute (960,512 tokens) recordings of Beethoven's piano sonatas.
\end{enumerate}
Following \cite{SaShiMi}, we apply $\mu$-law encoding to SC09 and YouTubeMix, and linear encoding to Beethoven, which results in unsigned 8-bit representations.

\section{Experimental setup}
We use JAX and related libraries (JAX, FLAX, Optax) \cite{deepmind2020jax}, and train all of our models on TPU v4-8 and v3-128. We use the AdamW optimizer \cite{adamw} with a learning rate of 0.002. All other hyperparameters are set to the default values in Optax: $\beta_1 = 0.9$, $\beta_2 = 0.999$, and a weight decay of 0.0001. Training is performed with:
\begin{itemize}
    \item A constant learning rate schedule, using a linear warmup for the first 1000 steps.
    \item A batch size of 32.
    \item 500 total training epochs.
\end{itemize}

Model evaluation is conducted on the validation and test sets at the end of every epoch. We apply an exponential moving average (EMA) to the model weights during evaluation, with an EMA decay rate of 0.999.

\begin{definition}[Pooling configuration]
    The pooling configuration is specified by a list of integers. For example, our baseline model has a pooling configuration of $[2, 4, 4, 5]$, which means that in the deepest level, the sequence has $2 \cdot 4 \cdot 4 \cdot 5 = 160$ times fewer tokens than in the shallowest level.
\end{definition}

\begin{definition}[Layer configuration]
    The layer configuration defines the number of RG-LRU layers at each level of the model's hierarchy. It is specified by a list of $n+1$ integers, $[l_1,l_2,...,l_n,l_{n+1}]$, for a model with $n$ SkipBlocks.
    \begin{itemize}
        \item For each of the $n$ SkipBlocks, the corresponding integer $l_i$ specifies the number of RG-LRU layers applied both before down-pooling and after up-pooling. Thus, the $i$-th SkipBlock contains a total of $2 \cdot l_i$ layers.
        \item The final integer, $l_{n+1}$, specifies the number of layers in the deepest part of the network.
    \end{itemize}
    For example, our baseline layer configuration is $[4,4,4,4,4]$. In this case, $n=4$:
    \begin{itemize}
        \item There are $4$ SkipBlocks. Each is defined by the value $4$, meaning each contains $2 \cdot 4=8$ RG-LRU layers ($4$ layers before down-pooling and $4$ layers after up-pooling).
        \item The deepest level is defined by the final value of $4$, meaning it contains $4$ RG-LRU layers.
        \item The total number of RG-LRU layers is therefore $4 \cdot 8 + 4 = 36$.
    \end{itemize}
\end{definition}

In terms of dimensionality, we distinguish between the RG-LRU dimension, which applies to the variables $r_k$, $i_k$, $a_k$, and $h_k$ within the temporal mixing layer (see Section \ref{sec:rg-lru}), and the model dimension, which applies to all other components. Our baseline has a model dimension of 128, a dropout rate of 0.2, and RG-LRU layers with complex numbers and a hidden dimension of 256, which brings the total number of parameters to 7,272,704.

\section{Signal pre-processing and EMA}
\subsection*{Signal pre-processing}
The original audio files are 16-bit, but we convert them into 8-bit. Because of this, the input to the Poolformer are numbers between 0 and 255, which have to be mapped to $D$-dimensional vectors. This can be done in different ways:
\begin{itemize}
    \item \textbf{Sinusoidal embeddings}. Similarly to the positional embeddings of \cite{attention}, or the Fourier features of \cite{kingma2021variational}, we can map the class $i$ (number from 0 to 255) to the concatenation of $c$ and $s$
          \begin{align}
              c_{ij} & = \cos\left(\frac{i}{10000^\frac{j}{D / 2}}\right)  \\
              s_{ij} & = \sin\left(\frac{i}{10000^\frac{j}{D / 2}}\right),
          \end{align}
          where $j$ goes from 0 to $D / 2 - 1$. See Figure \ref{fig:sinusoidal} for a more visual explanation.
    \item \textbf{Linear scaling followed by a dense layer}. We can first make the input lie between 0 and 1, and then apply a dense layer. This preserves the natural ordering, but makes similar numbers almost indistinguishable.
    \item \textbf{Embedding layer}, so that each class is mapped to a learned feature vector.
\end{itemize}

As shown in Figure \ref{fig:preprocessing}, sinusoidal embeddings provide the model with a very meaningful representation of the input, and results in the best performance. Because of this, sinusoidal embeddings are used in all of our experiments. Note that when using an embedding layer, the performance starts off being significantly worse, but improves over time as the model learns better and better representations. In the long run, both give comparable results.

\begin{figure}[h]
    \centering
    \begin{subfigure}{0.48\textwidth}
        \centering
        \includegraphics[width=\linewidth]{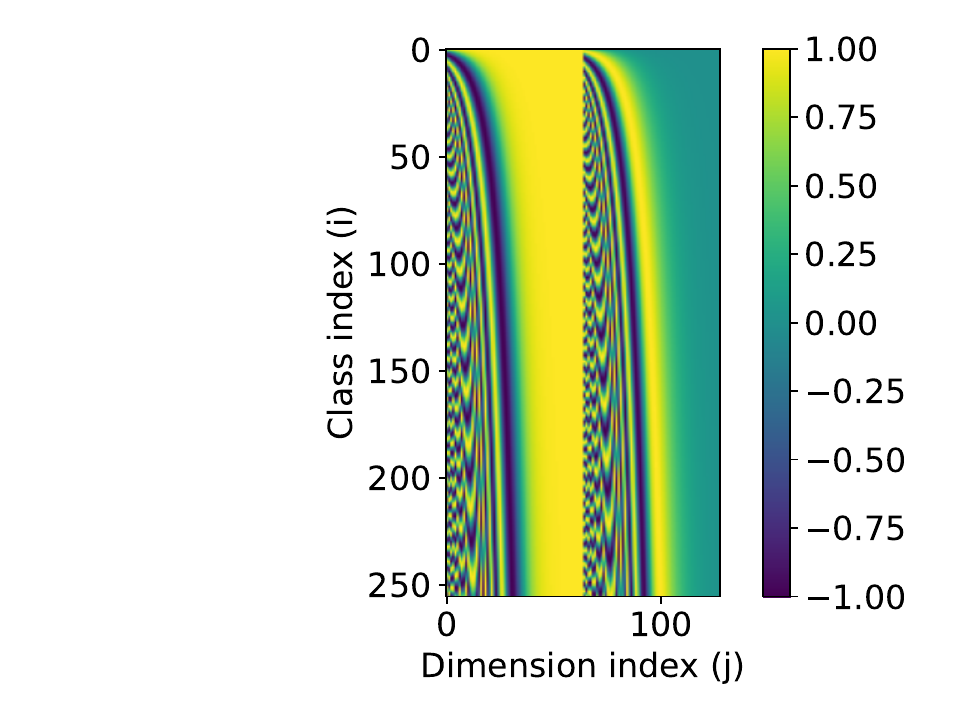}
        \caption{Sinusoidal embeddings}
    \end{subfigure}
    \hfill
    \begin{subfigure}{0.48\textwidth}
        \centering
        \includegraphics[width=\linewidth]{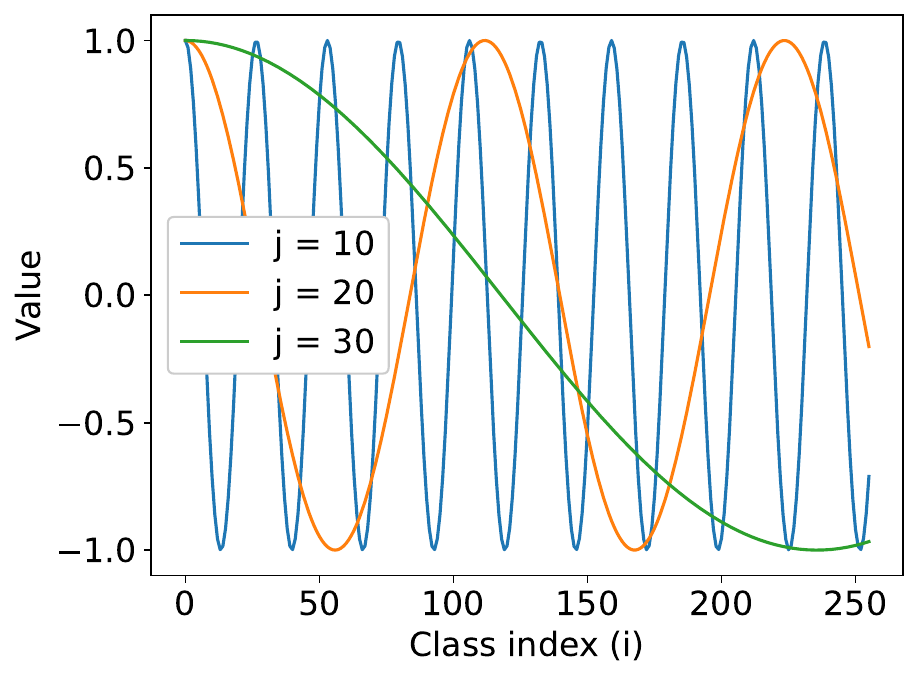}
        \caption{Some slices from the embeddings}
    \end{subfigure}
    \caption{\textbf{Sinusoidal embeddings.} Each embedding column is a cosine (first half) or a sine (second half). As $j$ increases, the frequency of the wave decreases.}
    \label{fig:sinusoidal}
\end{figure}

\begin{figure}[h]
    \centering
    \begin{subfigure}{0.5\textwidth}
        \centering
        \includegraphics[width=\linewidth]{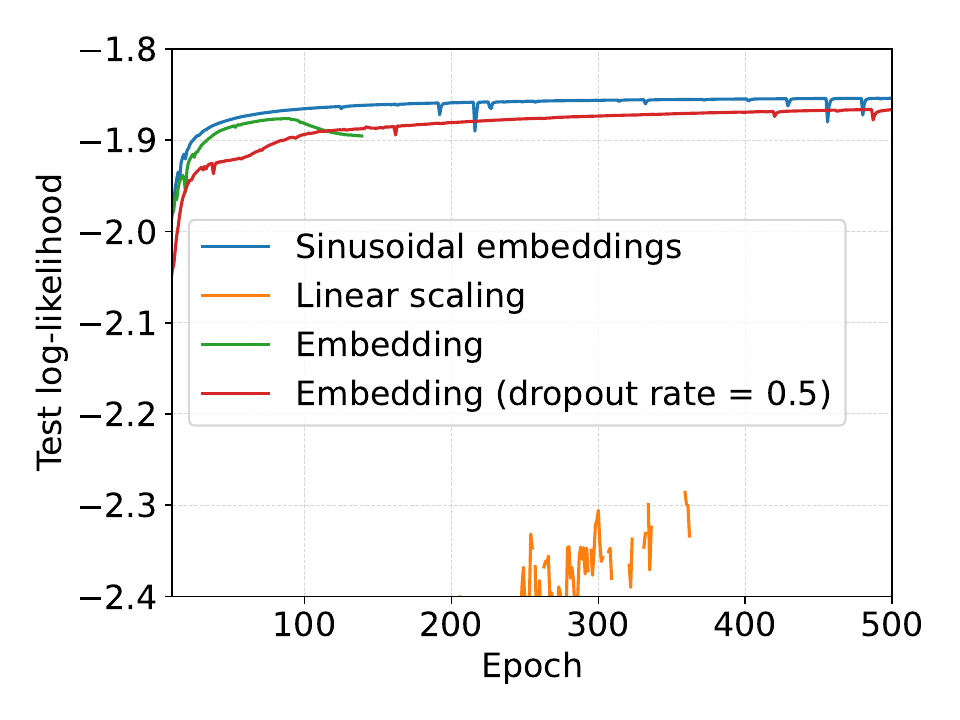}
    \end{subfigure}
    \caption{\textbf{Pre-processing comparison} (SC09). The linear scaling strategy performs very poorly, and the model cannot really distinguish between similar inputs. Using embeddings seems to result in overfitting, which can be solved by increasing the dropout rate. Overall, sinusoidal embeddings yield the best performance. }
    \label{fig:preprocessing}
\end{figure}

\subsection*{Exponential Moving Average}
Exponential moving average (EMA) is a technique commonly used to stabilize and improve model evaluation. During training, EMA maintains a running average of the model parameters, where recent updates are weighted more heavily than older ones:

\begin{equation}
    w_\text{EMA} \xleftarrow{} \alpha w_\text{EMA} + (1 - \alpha) w_\text{new}.
\end{equation}

At evaluation time, predictions are made using these averaged parameters rather than the most recent ones. This helps to smooth out fluctuations caused by noisy updates and often leads to better generalization and more consistent validation or test performance. Typical values for $\alpha$ are 0.999 (our baseline), and 0 (no EMA). In Figure \ref{fig:ema} we can see that keeping an EMA of the weights is beneficial, as it results in a smoother and improved test curve.

\begin{figure}[h]
    \centering
    \includegraphics[width=0.6\linewidth]{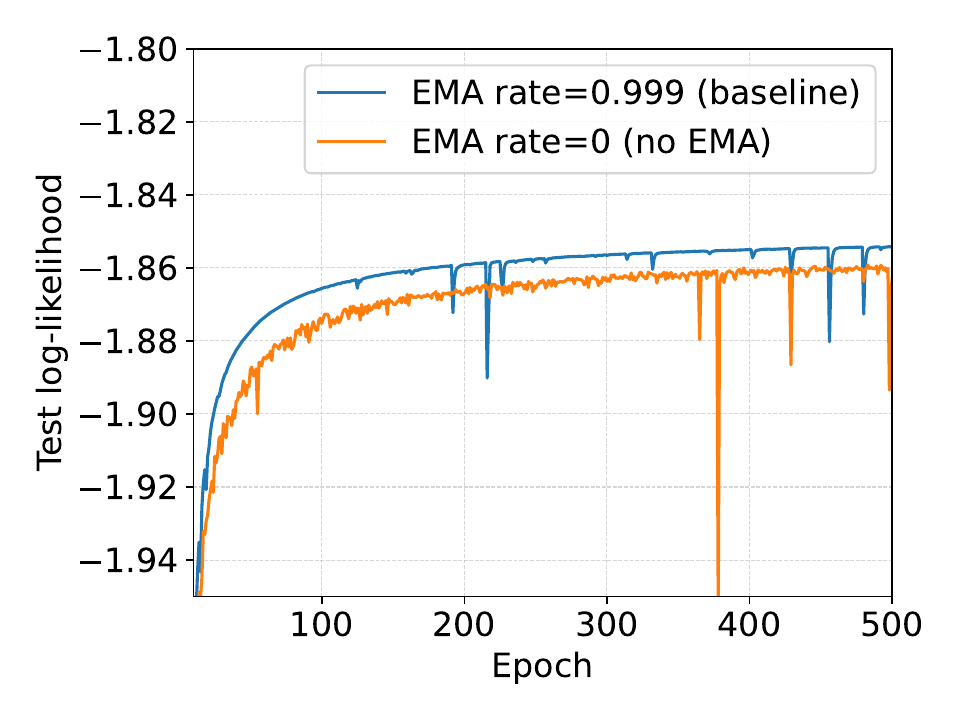}
    \caption{\textbf{Effect of exponential moving average (EMA)} (SC09). Applying EMA to model weights during evaluation leads to smoother and slightly improved test curves.}
    \label{fig:ema}
\end{figure}

\section{Model Ablations}
\label{sec:ablations}
In this section, we compare different models to our baseline. We perform these ablations on the SC09 dataset exclusively. All results are summarized in Table \ref{tab:ablations}.

\begin{table}[h]
    \centering
    \begin{tabular}{lcccc}
        \hline
        \textbf{Model}                          & \textbf{Num. params.} & \textbf{Test NLL $\downarrow$} & \textbf{Notes} \\
        \hline
        Poolformer (baseline)                   & 7,272,704             & 1.854 (1.854)                  &                \\
        \quad w/ RMSNorm instead of LayerNorm   & 7,263,488             & 1.874 (1.859)                  & Unstable       \\
        \quad w/ $\mathtt{scale} = 0$           & 7,272,704             & 1.852 (1.852)                  &                \\
        \quad w/ $\mathtt{scale} = 1$           & 7,272,704             & 1.886 (1.874)                  & Overfits       \\
        \quad w/o gating                        & 5,489,408             & 2.264 (1.859)                  & Very unstable  \\
        \quad w/ long-skip connections          & 7,272,704             & 1.895 (1.895)                  &                \\
        \quad all ResBlocks after pooling       & 7,272,704             & \textbf{1.851 (1.848)}         &                \\
        \quad w/ attention in the deepest layer & 7,229,184             & 1.852 (1.851)                  &                \\
        \quad w/o complex numbers               & 7,355,648             & 1.859 (1.859)                  &                \\
        \quad w/ half ring initialization       & 7,272,704             & 1.854 (1.854)                  &                \\
        \quad w/ full ring initialization       & 7,272,704             & 1.876 (1.861)                  & Very unstable  \\
        \quad w/ full circle initialization     & 7,272,704             & 1.998 (1.924)                  & Very unstable  \\
        \hline
        SaShiMi                                 & 4,751,096             & 1.891                                           \\
        Mamba                                   & 6,100,000             & 1.852                                           \\
        Mamba                                   & 24,300,000            & 1.860                                           \\
        \hline
    \end{tabular}
    \caption{\textbf{Ablation results} (SC09). For Poolformer, we report the final negative log-likelihood on the test set, with the best result (corresponding to the lowest validation loss) shown in parentheses.}
    \label{tab:ablations}
\end{table}

\subsection*{Normalization}
We were not able to train the model without a normalization layer, even with careful initialization. Unlike \cite{xlstm7b}, we find LayerNorm more beneficial than RMSNorm regarding model stability (see Figure \ref{fig:norm-comparison}).

\begin{figure}[h]
    \centering
    \begin{subfigure}{0.48\textwidth}
        \centering
        \includegraphics[width=\linewidth]{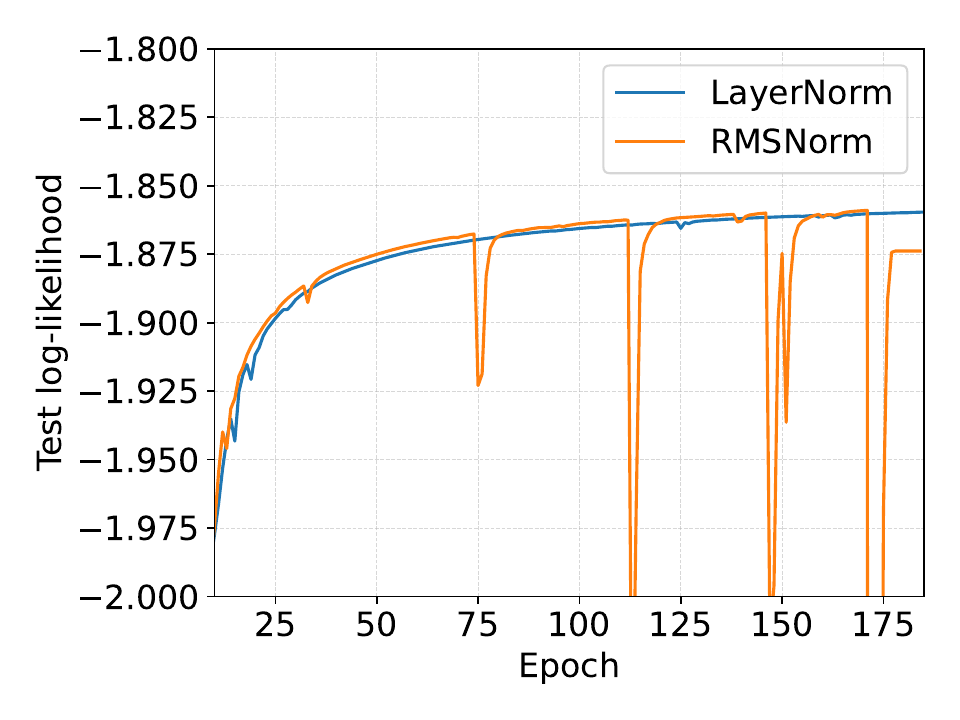}
    \end{subfigure}
    \hfill
    \begin{subfigure}{0.48\textwidth}
        \centering
        \includegraphics[width=\linewidth]{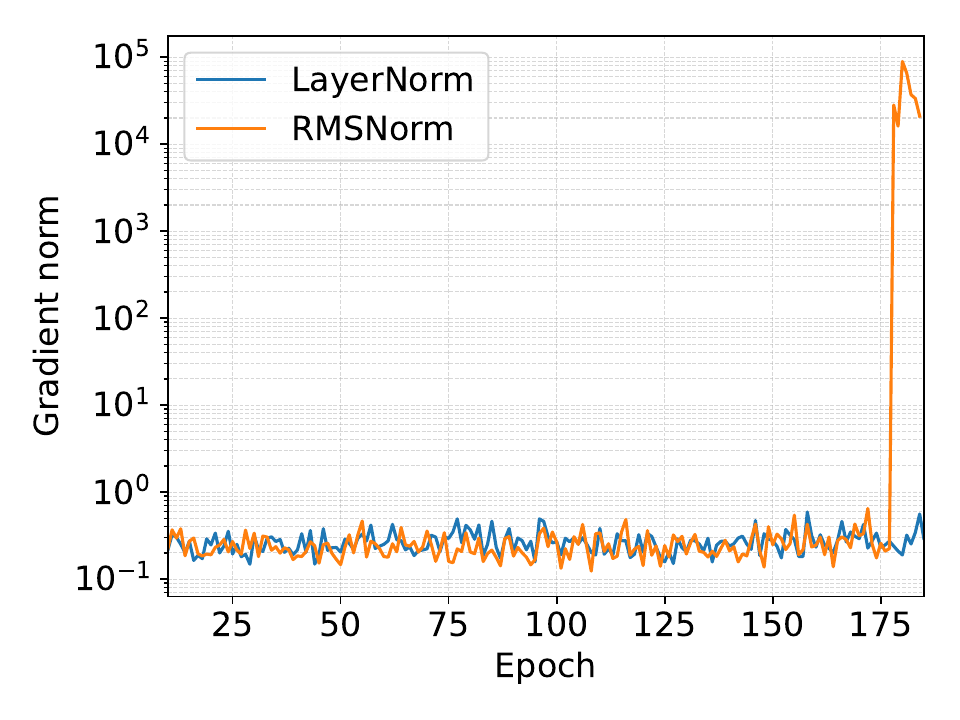}
    \end{subfigure}
    \caption{\textbf{Comparison of LayerNorm and RMSNorm} (SC09). We find that LayerNorm makes training significantly more stable.}
    \label{fig:norm-comparison}
\end{figure}

\subsection*{Initialization}
In our SkipBlock model, the residual stream gets contributions from the input, the residual blocks, and the up-pooling layer. In order to keep training stable, we initialize the last layers of the residual blocks and up-pooling layers sampling from a
\begin{equation}
    \mathcal{N}\left(0, \frac{\mathtt{scale}}{n_\mathrm{in}}\right),
\end{equation}
where $n_\mathrm{in}$ is the number of inputs. We try $\mathtt{scale} = 0$ (zero initialization), $\mathtt{scale} = \frac{1}{n_{blocks}}$ (our baseline), and $\mathtt{scale} = 1$ (Lecun normal initialization). The results are shown in Figure \ref{fig:init}: setting $\mathtt{scale} \ll 1$ is crucial for getting good performance.

\begin{figure}[h]
    \centering
    \includegraphics[width=0.6\linewidth]{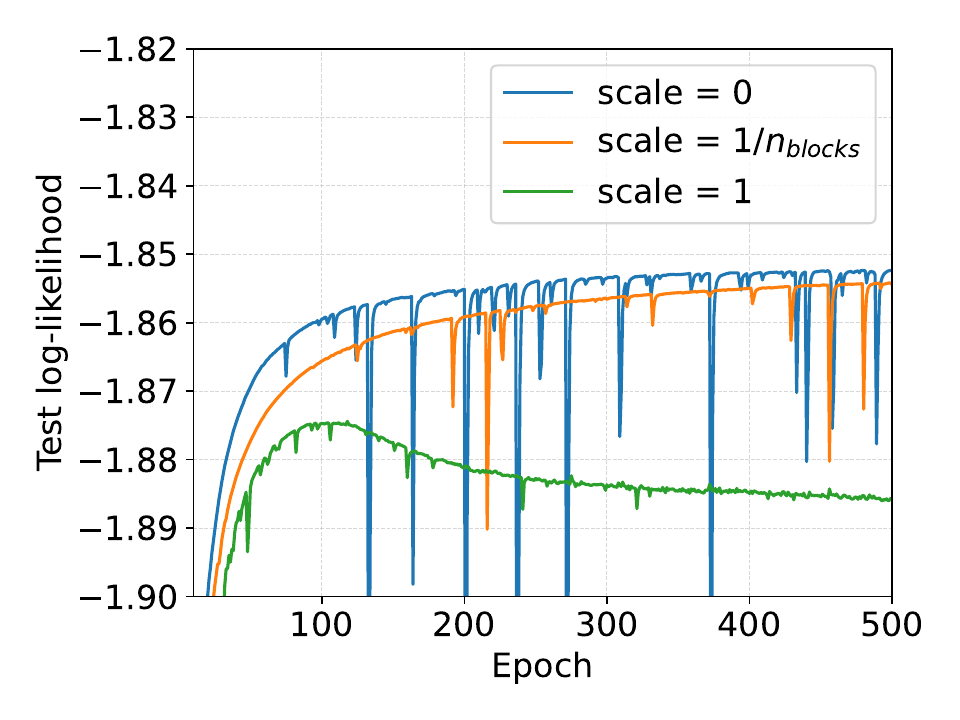}
    \caption{\textbf{Effect of initialization scaling} (SC09). Effect of scaling down the initialization variance of those layers whose outputs are added to the residual stream. Early in training, having a small scale is beneficial for stability, as the model has an identity-like behavior. In the long run, it seems to affect stability negatively. Interestingly, we see that setting $\mathtt{scale} = 1$ results in overfitting. }
    \label{fig:init}
\end{figure}

\subsection*{Gating}
The gated variant of our residual block creates two branches, one with the MLP or temporal mixing layer, and another one with a dense layer (see Figure \ref{fig:resblock}). After this, both branches are multiplied component-wise and passed through a final dense layer. This results in significantly more parameters (7,272,704 instead of 5,489,408). As we can see in Figure \ref{fig:gating} the performance gains were quite modest. However, the stability  greatly improves with gating.

\begin{figure}[h]
    \centering
    \begin{subfigure}{0.48\textwidth}
        \centering
        \includegraphics[width=\linewidth]{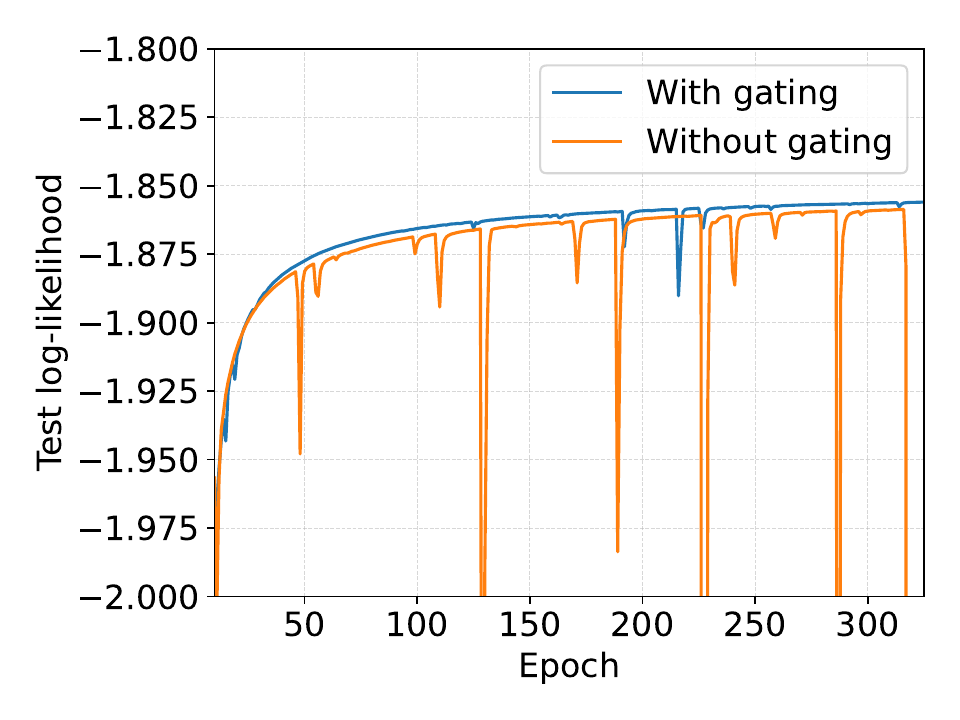}
    \end{subfigure}
    \hfill
    \begin{subfigure}{0.48\textwidth}
        \centering
        \includegraphics[width=\linewidth]{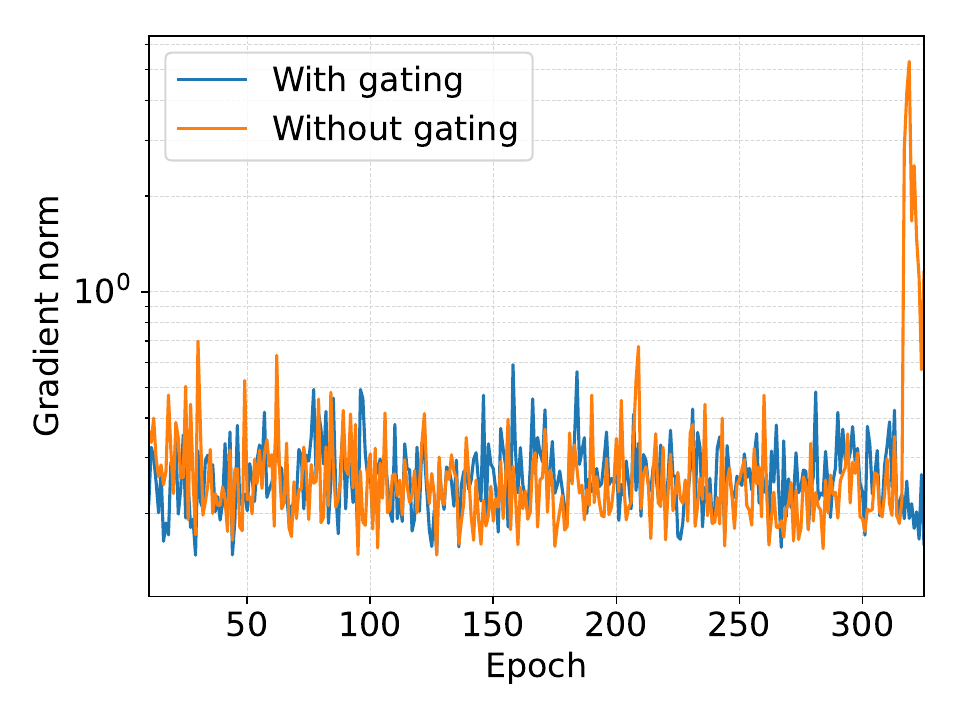}
    \end{subfigure}
    \caption{\textbf{Effect of gating} (SC09). We found gating to be very beneficial for training stability. }
    \label{fig:gating}
\end{figure}

\subsection*{Skip-connection}
As we discussed in Figure \ref{fig:skipblock}, there are two main ways of adding skip-connections in our model. Our experiments show that the best way is to add the residual connection around the pooling layers (short skip-connections).

This makes sense if we consider the alternative (long skip-connections), in which the skip-connection is added after the residual blocks. In the outer-most layer, the input would be copied directly to the output, which means that the final residual blocks would have to first cancel out the features from the original input before adding their own contribution.

\subsection*{ResBlocks position}
We can place residual blocks before or after the pooling layers (see Figure \ref{fig:skipblock}). In this experiment, we consider two alternatives: having residual blocks both before and after, and having residual blocks just after the pooling layers. To make the comparison fair, we make the total number of RG-LRU layers the same.

We show that placing all of them after the pooling results in a slight increase in performance: the test negative log-likelihood goes down from 1.854 to 1.851. We hypothesize that the reason for this is that most of the short-term processing is performed in the last layers, so placing 8 layers instead of 4 after the last up-pooling layer results in an increase in performance. See Figure \ref{fig:magnitude-by-depth} for more information.

\subsection*{Attention in the deepest layer}
In this experiment, we replace the four RG-LRU layers in the deepest level with multi-query attention (4 heads). The reason to place the self-attention layer there is that the sequence length is just $\frac{16000}{2 \cdot 4 \cdot 4 \cdot 5} = 100$, so it is computationally feasible. This results in a slight increase in performance.

\subsection*{Complex numbers}
The $a$ parameter in RG-LRU layers can be real or complex. Recent research has shown that the complex parametrization is not useful for language modeling \cite{griffin}. In our case, it seems that for audio modeling, the complex parametrization slightly outperforms the real one.

As explained in Section \ref{sec:rg-lru}, $a$ is parametrized to lie within the unit circle in the complex plane, and we expect most of the values to be close to the boundary. It is interesting to consider several initialization schemes
\begin{itemize}
    \item \textbf{Small ring (baseline):} sample the magnitude from $\sqrt{\mathcal{U}[0.9^2, 0.99^2]}$, and the phase from $\mathcal{U}[0, \frac{\pi}{10}]$, as in \cite{orvieto2023resurrecting}.
    \item \textbf{Half ring:} sample the magnitude from $\sqrt{\mathcal{U}[0.9^2, 0.99^2]}$, and the phase from $\mathcal{U}[0, \pi]$.
    \item \textbf{Full ring} sample the magnitude from $\sqrt{\mathcal{U}[0.9^2, 0.99^2]}$, and the phase from $\mathcal{U}[0, 2\pi]$.
    \item \textbf{Full circle} sample the magnitude from $\sqrt{\mathcal{U}[0.01^2, 0.99^2]}$, and the phase from $\mathcal{U}[0, 2\pi]$.
\end{itemize}
In Figure \ref{fig:complex_numbers} we can visually see the $a$ values at initialization, and after training. In Table \ref{tab:ablations} we observe similar results as \cite{orvieto2023resurrecting}: reducing the initialization range of the phase seems crucial for capturing long-range dependencies.

\begin{figure}[h]
    \centering
    \begin{subfigure}{0.35\textwidth}
        \centering
        \includegraphics[width=\linewidth]{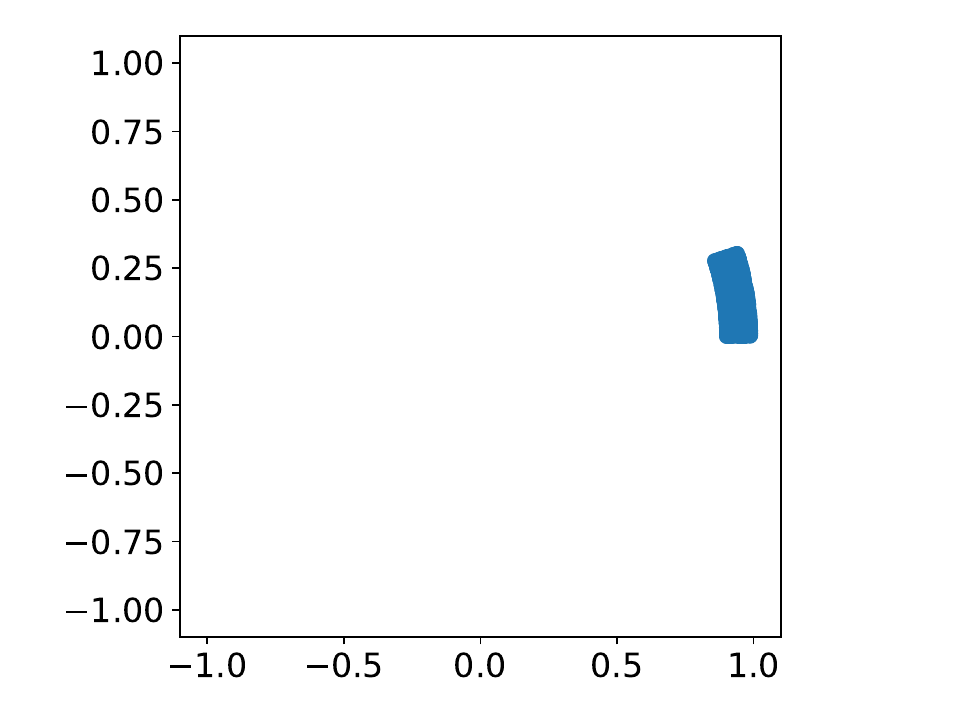}
        \caption{Small ring (baseline) after initialization}
    \end{subfigure}
    \hfill
    \begin{subfigure}{0.35\textwidth}
        \centering
        \includegraphics[width=\linewidth]{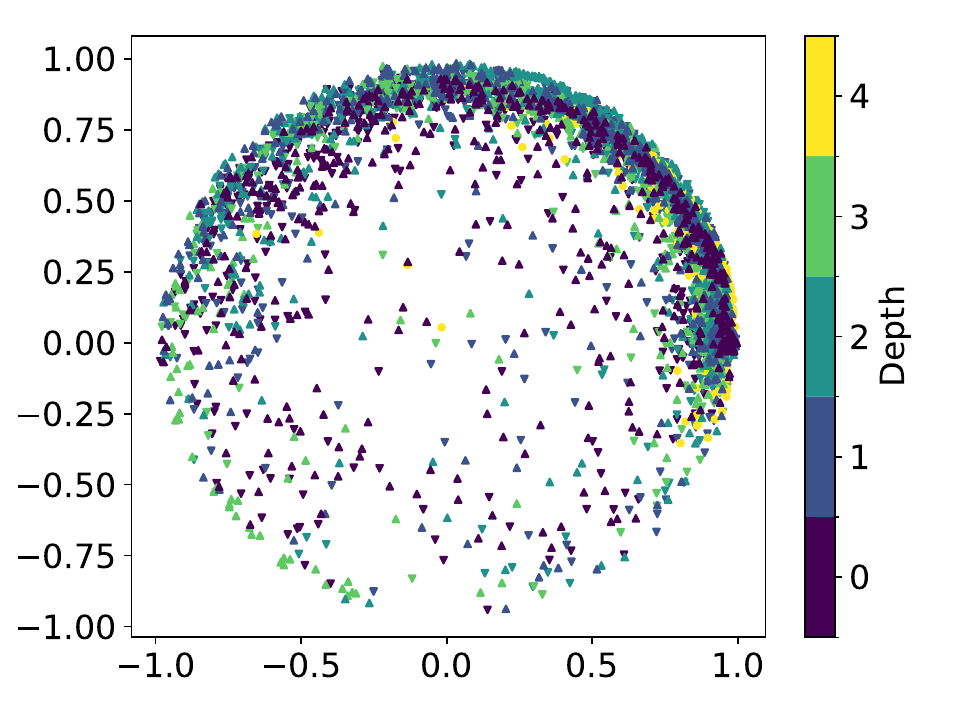}
        \caption{Small ring (baseline) after training}
    \end{subfigure}
    \vskip\baselineskip
    \begin{subfigure}{0.35\textwidth}
        \centering
        \includegraphics[width=\linewidth]{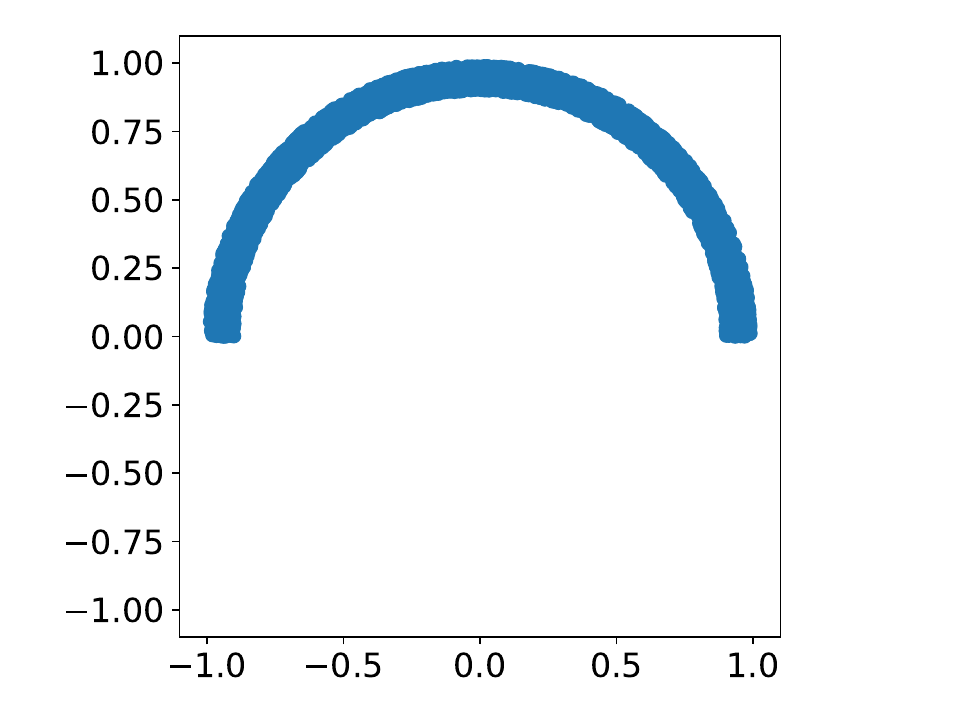}
        \caption{Half ring after initialization}
    \end{subfigure}
    \hfill
    \begin{subfigure}{0.35\textwidth}
        \centering
        \includegraphics[width=\linewidth]{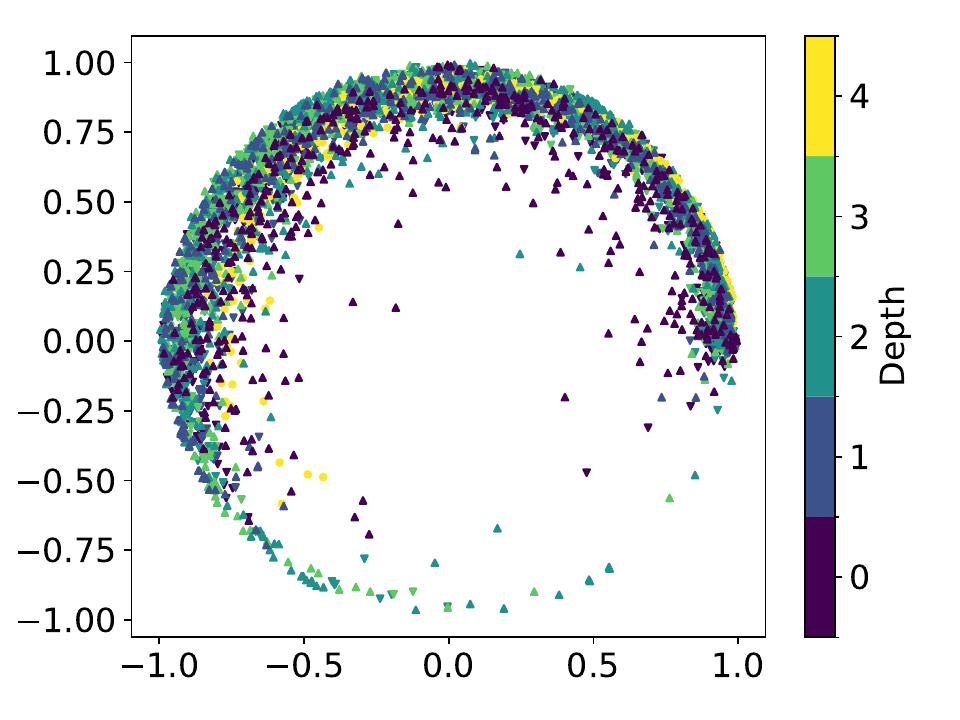}
        \caption{Half ring after training}
    \end{subfigure}
    \vskip\baselineskip
    \begin{subfigure}{0.35\textwidth}
        \centering
        \includegraphics[width=\linewidth]{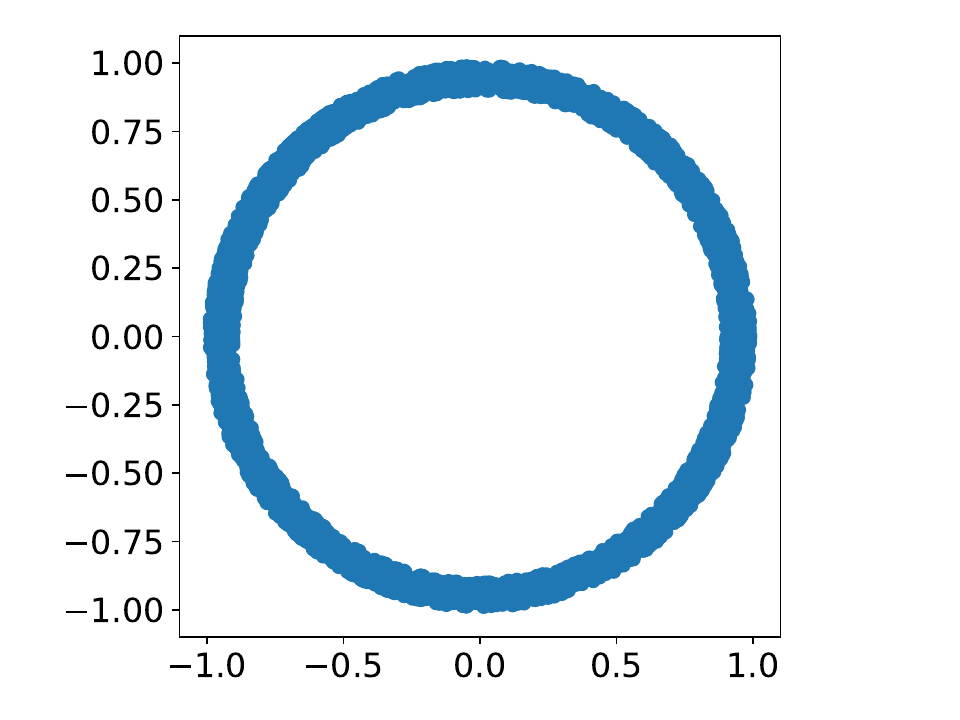}
        \caption{Full ring after initialization}
    \end{subfigure}
    \hfill
    \begin{subfigure}{0.35\textwidth}
        \centering
        \includegraphics[width=\linewidth]{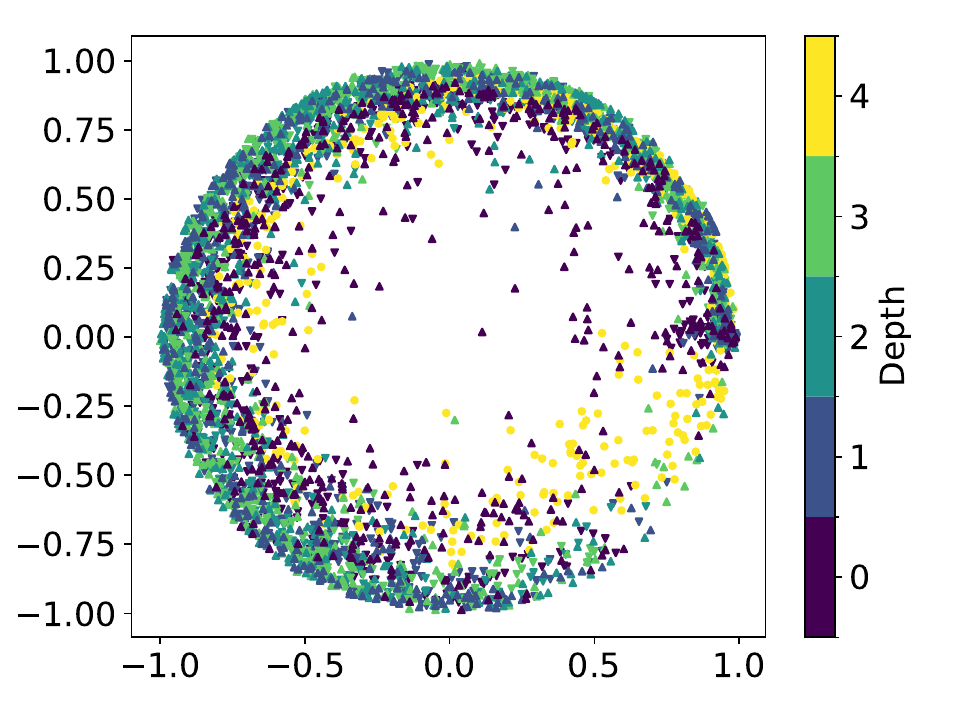}
        \caption{Full ring after training}
    \end{subfigure}
    \vskip\baselineskip
    \begin{subfigure}{0.35\textwidth}
        \centering
        \includegraphics[width=\linewidth]{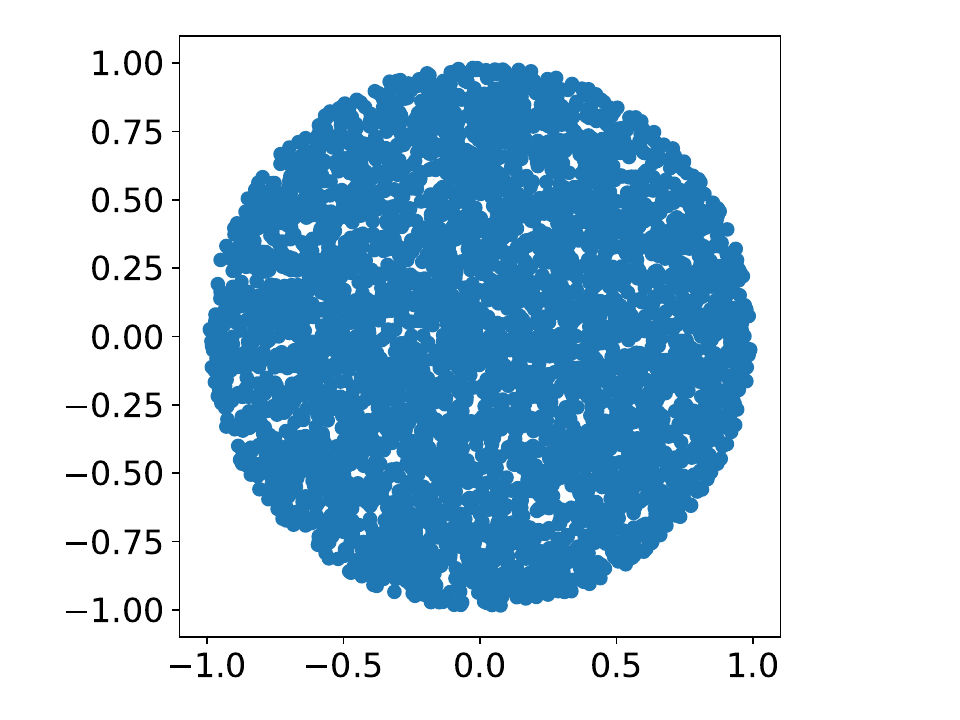}
        \caption{Full circle after initialization}
    \end{subfigure}
    \hfill
    \begin{subfigure}{0.35\textwidth}
        \centering
        \includegraphics[width=\linewidth]{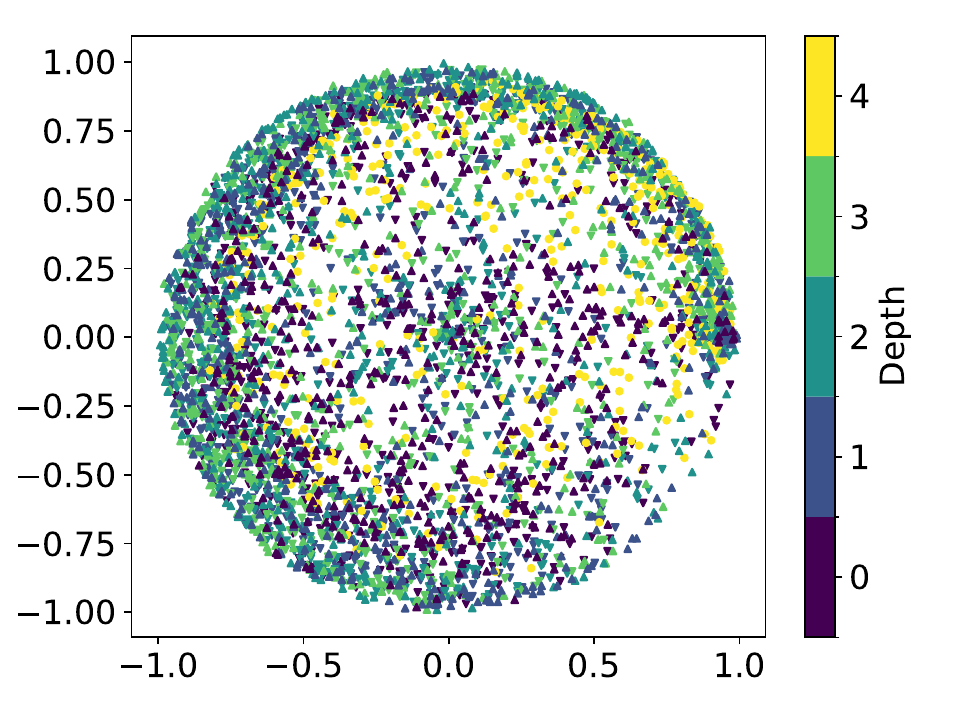}
        \caption{Full circle after training}
    \end{subfigure}
    \caption{\textbf{Complex number initialization} (SC09). Visualization in the complex plane of all the $a$ values in our baseline model after initialization and after training.}
    \label{fig:complex_numbers}
\end{figure}

\section{Pooling Experiments}
In this section, we use our baseline model on the SC09 dataset.

\subsection*{Effects of Pooling in Performance}
We compare our baseline against models with the same number of RG-LRU layers and a similar number of parameters (7.2 million). The results of the experiments can be seen in Table \ref{tab:pooling_importance} and Figure \ref{fig:pooling-overfitting}: pooling greatly accelerates training, improves perceptual metrics (FID and IS), and prevents overfitting. The perceptual metrics align with a qualitative analysis: samples from our baseline sound noticeably better than those from models without that much pooling.

\begin{table}[h]
    \centering

    \begin{minipage}{\linewidth}
        \centering
        \textit{(a) Model configurations}
        \vspace{0.5em}

        \begin{tabular}{lccc}
            \toprule
            \textbf{Model}  & \textbf{Pooling conf.} & \textbf{Layer conf.} & \textbf{Params} \\
            \midrule
            Baseline        & [2, 4, 4, 5]           & [4, 4, 4, 4, 4]      & 7,272,704       \\
            1 pooling layer & [2]                    & [12, 12]             & 7,268,608       \\
            No pooling      & []                     & [36]                 & 7,267,840       \\
            \bottomrule
        \end{tabular}
    \end{minipage}

    \vspace{1em}

    \begin{minipage}{\linewidth}
        \centering
        \textit{(b) Training performance and evaluation metrics}
        \vspace{0.5em}

        \begin{tabular}{lcccc}
            \toprule
            \textbf{Model}  & \textbf{Training speed $\uparrow$} & \textbf{Test NLL $\downarrow$} & \textbf{FID $\downarrow$} & \textbf{IS $\uparrow$} \\
            \midrule
            Baseline        & \textbf{11.45 epochs/h}            & \textbf{1.854} (1.854)         & \textbf{0.46}             & \textbf{6.46}          \\
            1 pooling layer & 6.69 epochs/h                      & 1.882 (1.853)                  & 6.85                      & 2.03                   \\
            No pooling      & 6.11 epochs/h                      & 1.865 \textbf{(1.852)}         & 2.95                      & 3.33                   \\
            \bottomrule
        \end{tabular}
    \end{minipage}

    \caption{\textbf{Pooling importance} (SC09). All models had 36 RG-LRU layers and were trained on a v3-128 TPU pod. Using pooling greatly accelerates training, improves perceptual metrics (FID and IS), and prevents overfitting. We report final negative log-likelihood (NLL) on the test set, with the best result (lowest validation loss) in parentheses. The samples used to compute the FID and IS scores were generated using the best model according to the validation loss.}
    \label{tab:pooling_importance}
\end{table}

\begin{figure}[h]
    \centering
    \includegraphics[width=0.6\linewidth]{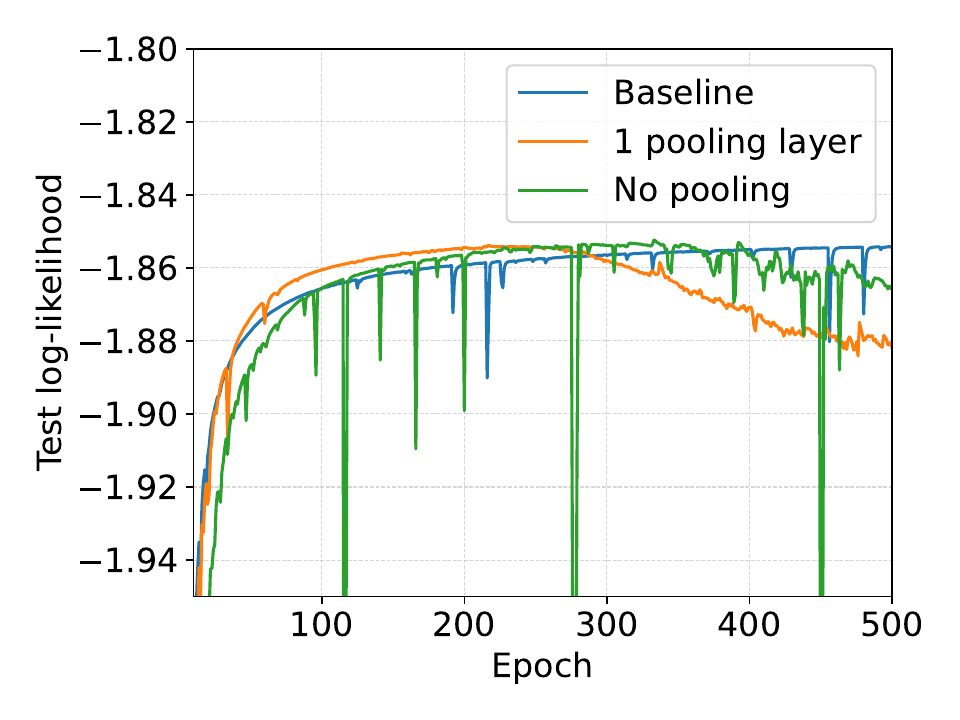}
    \caption{\textbf{Effect of pooling on overfitting} (SC09). Pooling reduces overfitting, as shown by the test log-likelihood curve.}
    \label{fig:pooling-overfitting}
\end{figure}

\subsection*{Effect of $G$}
As explained in Section \ref{sec:pooling}, the $G$ parameter controls the expressivity of the weight matrices within the pooling layers. For example, if $G = 1$ there are no constraints on the weight matrices, and if $G = D$ the weight matrices are diagonal. The higher the value of $G$, the lower the expressivity and the number of parameters. In Table \ref{tab:pooling_g} we see that setting $G = 1$ results in very unstable training.

\begin{table}[h]
    \centering
    \begin{tabular}{lccc}
        \hline
        \textbf{Pooling group count $G$} & \textbf{Parameters} & \textbf{Test NLL $\downarrow$} & \textbf{Notes} \\
        \hline
        1 (full)                         & 7,760,384           & 2.903 (1.956)                  & Very unstable  \\
        2                                & 7,514,624           & 1.858 (1.849)                  &                \\
        4                                & 7,391,744           & \textbf{1.848 (1.848)}         &                \\
        8                                & 7,330,304           & 1.853 (1.853)                  &                \\
        $D$ (diagonal, baseline)         & 7,272,704           & 1.854 (1.854)                  &                \\
        \hline
    \end{tabular}
    \caption{\textbf{Effect of pooling group count $G$ on model performance} (SC09). We report the final negative log-likelihood on the test set, with the best result (corresponding to the lowest validation loss) shown in parentheses.}
    \label{tab:pooling_g}
\end{table}

\subsection*{Depth-wise Analysis}

A central motivation for using pooling is to enable better modeling of long-range dependencies. In our case, we investigate this behavior through the lens of the complex parameter \( a \), which governs how information is retained across timesteps in the RG-LRU.

The parameter \( a \in \mathbb{C}^\frac{D}{2} \) is defined as \(a = \sigma(\Lambda) \cdot e^{i\theta}\), where \( \Lambda \in \mathbb{R}^\frac{D}{2} \) and \( \theta \in \mathbb{R}^\frac{D}{2} \) are learned parameters. The magnitude \( |a| = \sigma(\Lambda) \in \mathbb{R}^\frac{D}{2} \) determines how quickly the recurrent state decays over time: smaller values cause faster decay (short-term memory), while values close to 1 allow information to persist longer (long-term memory).

We use the average magnitude of \( a \) at each depth as a proxy for the temporal range that each layer is sensitive to. Specifically, values of \( |a| \approx 1 \) indicate layers that are better equipped to model long-range dependencies, whereas layers with \( |a| \ll 1 \) tend to focus on short-term patterns.

In our experiments, it seems that deeper layers handle long-term dependencies, while shallower layers (especially the last ones) focus on short-term features. See Figure \ref{fig:magnitude-by-depth} for more information.

\begin{figure}[h]
    \centering
    \includegraphics[width=0.6\linewidth]{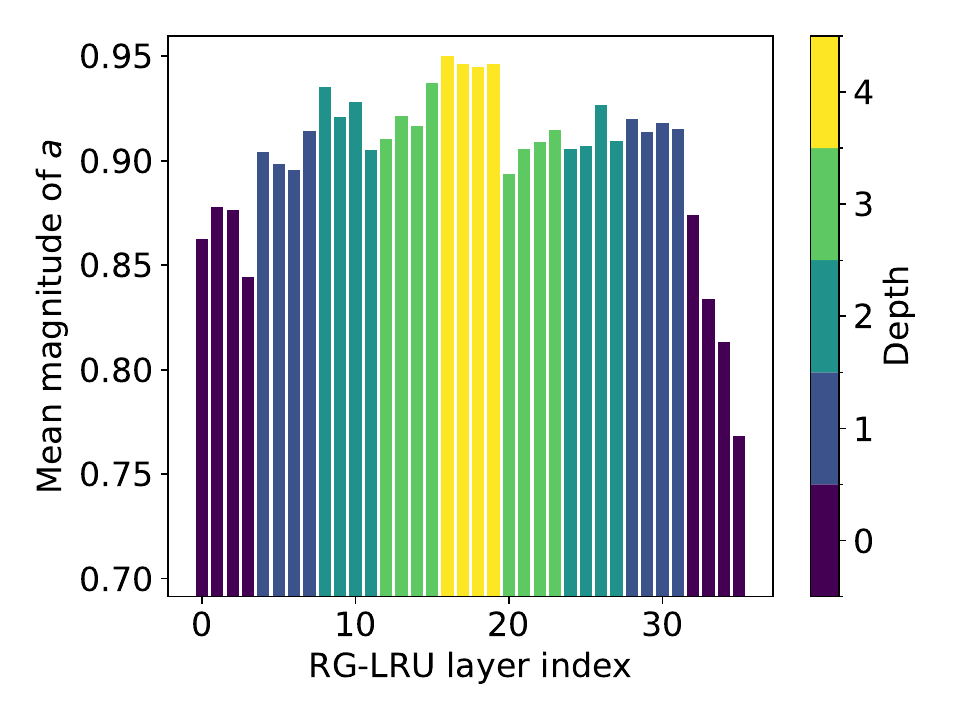}
    \caption{\textbf{Mean magnitude of \( a \) across model depth} (baseline model, SC09). This plot shows how the average magnitude \( |a| \) evolves across the layers of the model. Deeper layers exhibit values closer to 1, indicating slower decay and a bias toward long-range dependencies. In contrast, shallower layers (especially at the end) show smaller magnitudes, suggesting they focus more on short-term features.}
    \label{fig:magnitude-by-depth}
\end{figure}

\section{Scaling Study}

Recurrent neural networks are forced to compress the context into a fixed-size hidden state, which makes them efficient but limits their effectiveness. Thus, a larger hidden state should make the model more effective but slower \cite{mamba}.

In this experiment, we adopt a pooling configuration of $[2, 4]$, a layer configuration of $[4, 4, 4]$, and modify the dimension of the recurrence. In Figure \ref{fig:model_size} and Table \ref{tab:scaling} we see that increasing the recurrence dimension improves performance. Increasing the model dimension significantly raises the parameter count,
which leads to overfitting.

\begin{table}[h]
    \centering
    \begin{tabular}{lcccccl}
        \hline
        \textbf{Model} & \textbf{Model dim.} & \textbf{Recurrence dim.} & \textbf{Num. params.} & \textbf{Test NLL $\downarrow$} \\
        \hline
        Poolformer     & 128                 & 256                      & 4,054,528             & 1.858 (1.853)                  \\
        Poolformer     & 256                 & 256                      & 13,295,360            & 1.908 (1.866)                  \\
        Poolformer     & 384                 & 256                      & 27,779,072            & 1.917 (1.862)                  \\
        Poolformer     & 128                 & 512                      & 5,503,488             & \textbf{1.848 (1.847)}         \\
        Poolformer     & 128                 & 1024                     & 8,647,168             & 1.850 (1.849)                  \\
        \hline
        SaShiMi        & 64, 128, 256        & 64, 128, 256             & 4,751,096             & 1.891                          \\
        Mamba          & 64, 128, 256        & 64, 128, 256             & 6,100,000             & 1.852                          \\
        Mamba          & 96, 184, 368        & 96, 184, 368             & 24,300,000            & 1.860                          \\
        \hline
    \end{tabular}
    \caption{\textbf{Scaling study} (SC09). Here, Poolformer uses a pooling configuration of $[2, 4]$ and a layer configuration of $[4, 4, 4]$. Increasing the recurrence dimension improves performance, whereas increasing the model dimension significantly raises the parameter count, leading to overfitting. We report the final negative log-likelihood on the test set, with the best result (corresponding to the lowest validation loss) shown in parentheses.}
    \label{tab:scaling}
\end{table}

\begin{figure}[h]
    \centering
    \begin{subfigure}{0.48\textwidth}
        \centering
        \includegraphics[width=\linewidth]{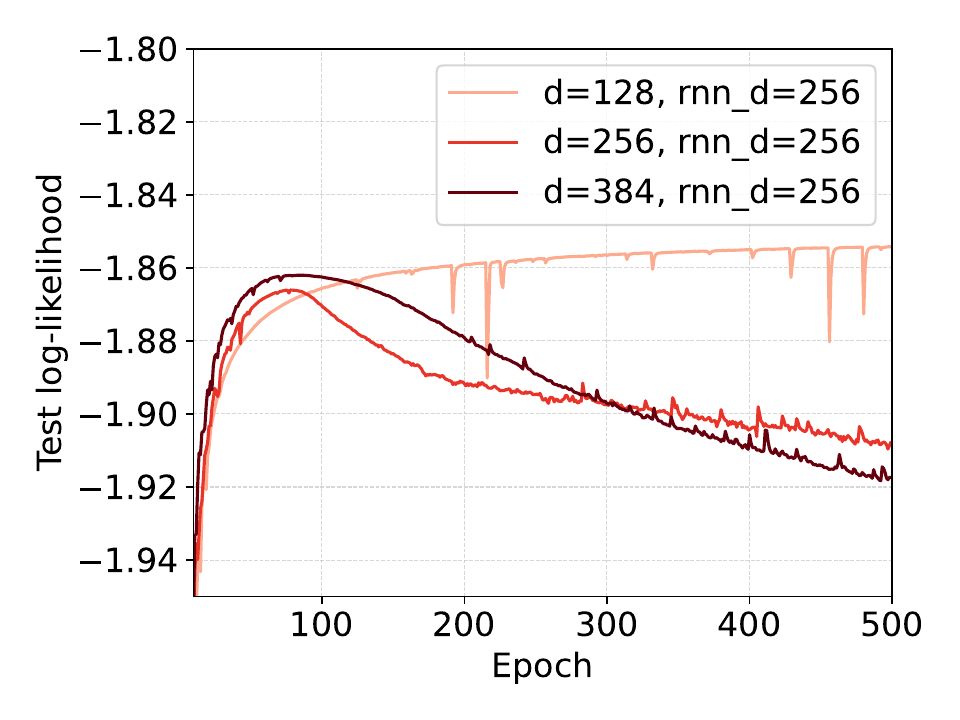}
        \caption{Effect of the model dimension}
    \end{subfigure}
    \hfill
    \begin{subfigure}{0.48\textwidth}
        \centering
        \includegraphics[width=\linewidth]{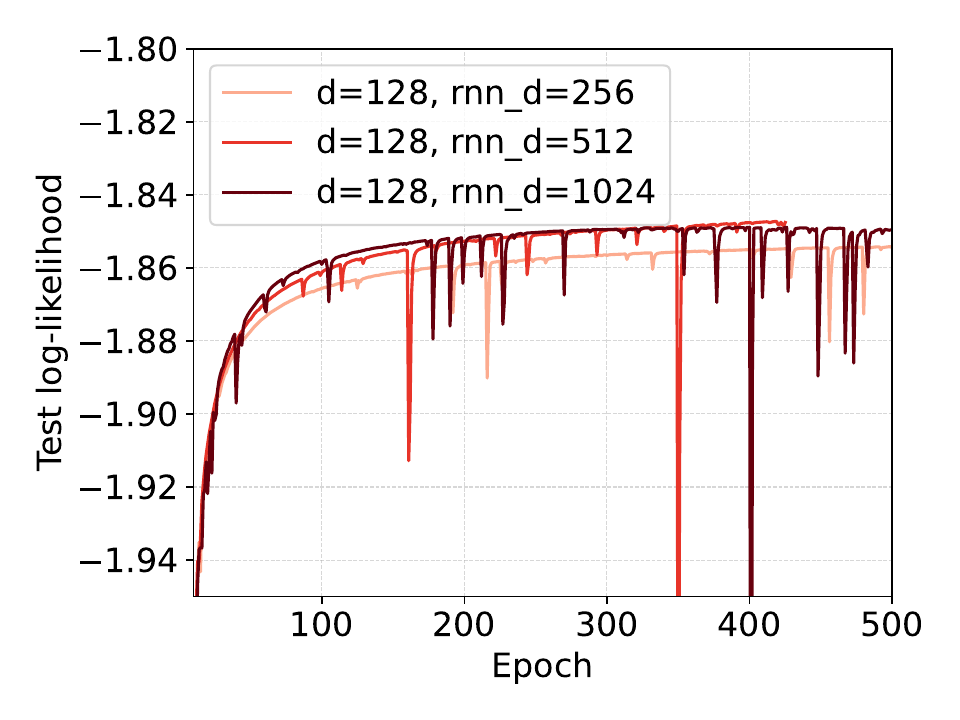}
        \caption{Effect of the RG-LRU dimension}
    \end{subfigure}
    \caption{\textbf{Effect of model size} (SC09). Increasing the model dimension results in overfitting, while increasing the recurrence dimension improves performance.}
    \label{fig:model_size}
\end{figure}

\section{Comparison with the State of the Art}
Table \ref{tab:sota_sc09_comparison} presents a comparison between Poolformer and state-of-the-art models such as SaShiMi and Mamba on the SC09 dataset. Poolformer achieves notably strong performance on perceptual metrics like FID and IS, which we attribute to its ability to capture long-range dependencies through pooling. In terms of negative log-likelihood, our baseline performs comparably to Mamba. Notably, several Poolformer variants, such as those with all ResBlocks after pooling, attention at the deepest layer, or architectural modifications like decreased group count ($G=4$) or increased dimensionality ($D=512$ and $D=1024$), surpass Mamba's NLL figure.

\begin{table}[h]
    \centering
    \begin{tabular}{lccccc}
        \hline
        \textbf{Model}             & \textbf{Test NLL $\downarrow$} & \textbf{FID $\downarrow$} & \textbf{IS $\uparrow$} & \textbf{mIS $\uparrow$} & \textbf{AM $\downarrow$} \\
        \hline
        Poolformer baseline (ours) & 1.854                          & \textbf{0.46}             & \textbf{6.46}          & \textbf{102.10}         & \textbf{0.45}            \\
        SaShiMi                    & 1.891                          & 1.81                      & 3.89                   & 21.90                   & 0.95                     \\
        Mamba                      & \textbf{1.852}                 & 0.94                      & 6.26                   & 88.54                   & 0.52                     \\
        \hline
    \end{tabular}
    \caption{\textbf{Comparison with state-of-the-art models} (SC09).} We don't use rejection sampling to compute the perceptual metrics.
    \label{tab:sota_sc09_comparison}
\end{table}

Table \ref{tab:sota_comparison} compares the negative log-likelihood of these models on the Beethoven and YouTubeMix datasets. For the Beethoven dataset we used a pooling configuration of $[2,4,4,5]$, a layer configuration of $[8,4,4,4,4]$, and a recurrence dimension of 512. We train the model for 66K steps, and got a negative log-likelihood of 0.915, which is significantly better than SaShiMi, which was trained for 1M steps.

For YouTubeMix, we used a pooling configuration of $[2, 2, 4, 4]$, a layer configuration of $[8, 4, 4, 4, 4]$, and a recurrence dimension of 512. We trained the model for 10K steps and obtained a negative log-likelihood of 1.391. While this is worse than SaShiMi's result, it's worth noting that their model was trained for 600K steps. Additionally, our model operates on full 1-minute sequences, whereas SaShiMi is trained on shorter 8-second chunks.

\begin{table}[h]
    \centering
    \begin{tabular}{lcc}
        \hline
        \textbf{Model}    & \textbf{Beethoven Test NLL $\downarrow$} & \textbf{YouTubeMix Test NLL $\downarrow$} \\
        \hline
        Poolformer (ours) & \textbf{0.915}                           & 1.391                                     \\
        SaShiMi           & 0.946                                    & \textbf{1.294}                            \\
        \hline
    \end{tabular}
    \caption{\textbf{Comparison with state-of-the-art models} (Beethoven and YouTubeMix). Our YouTubeMix model was trained on full 1-minute sequences for 10K steps, while SaShiMi was trained on 8-second chunks for 600K steps.}
    \label{tab:sota_comparison}
\end{table}

\chapter{Conclusions, Limitations, and Future Work}
\section{Conclusions}
We have introduced Poolformer, an autoregressive sequence-to-sequence model suitable for extremely long sequences. It is based on the transformer architecture, but replaces self-attention with recurrent layers. In addition, it uses pooling to decrease the sequence length, while keeping skip connections for appropriate gradient flow. Inference is implemented sequentially, while training can be done both sequentially and with a parallel scan.

In our ablations, we show that several techniques are crucial for training stability: normalization layers, careful initialization of those layers feeding into the residual stream, gating, and restricting the expressivity of the pooling layers.

In our pooling experiments, we show that pooling greatly accelerates training, improves perceptual metrics (FID and IS), and prevents overfitting. Studying the magnitude of the $a$ parameter, we speculate that long-range dependencies are handled in the deepest layers (where the sequence is shortest), and that shallower layers take care of short-term features (especially toward the final layers).

Our scaling study shows that increasing the hidden dimension of the recurrent layers results in better performance. This is to be expected, as they must compress all context into a fixed-size state, so a larger dimension helps. Increasing the model dimension significantly raises the parameter count, leading to overfitting.

Finally, we show that our model beats state-of-the-art models such as SaShiMi and Mamba in terms of log-likelihood, and especially in terms of perceptual metrics such as FID and IS. We attribute this to Poolformer's ability to capture long-range dependencies through pooling.

\section{Limitations}
We did not attempt to prevent overfitting in our experiments. However, in practical applications, data is often abundant, making overfitting less of a concern. Additionally, data augmentation techniques could be employed to further mitigate overfitting.

Another limitation is that we only evaluated Poolformer on raw audio data. While this domain naturally features long sequences, further work is needed to assess the model's effectiveness on other modalities.

\section{Future Work}
We evaluated Poolformer exclusively on raw audio data, demonstrating its capacity to handle long sequences. However, we believe the potential applications of Poolformer extend far beyond this setting. In particular, future research could explore its use on other data modalities such as text, images, and video. These domains also suffer from sequence-length challenges, especially when detailed, fine-grained information must be retained over extended contexts.

One particularly promising direction lies in multi-modal learning. For example, vision-language models (VLMs) often use CLIP to provide image embeddings that capture rich semantic content. However, these embeddings often abstract away local details critical for fine-grained reasoning. An alternative is to encode an image into hundreds of patch-level tokens using a pre-trained vision transformer, and pass the dense patch-level representation directly to a language model. Processing such extended token sequences is a natural fit for Poolformer, capable of efficiently ingesting long token streams.

\bibliographystyle{plain}
\bibliography{bibentries}

\begin{thebibliography}{10}

\bibitem{layernorm}
Jimmy~Lei Ba, Jamie~Ryan Kiros, and Geoffrey~E Hinton.
\newblock Layer normalization.
\newblock {\em arXiv preprint arXiv:1607.06450}, 2016.

\bibitem{xlstm7b}
Maximilian Beck, Korbinian P{\"o}ppel, Phillip Lippe, Richard Kurle, Patrick~M Blies, G{\"u}nter Klambauer, Sebastian B{\"o}ck, and Sepp Hochreiter.
\newblock xlstm 7b: A recurrent llm for fast and efficient inference.
\newblock {\em arXiv preprint arXiv:2503.13427}, 2025.

\bibitem{associative_scan}
Guy~E Blelloch.
\newblock Prefix sums and their applications.
\newblock 1990.

\bibitem{gpt3}
Tom Brown, Benjamin Mann, Nick Ryder, Melanie Subbiah, Jared~D Kaplan, Prafulla Dhariwal, Arvind Neelakantan, Pranav Shyam, Girish Sastry, Amanda Askell, et~al.
\newblock Language models are few-shot learners.
\newblock {\em Advances in neural information processing systems}, 33:1877--1901, 2020.

\bibitem{gru}
Kyunghyun Cho, Bart Van~Merri{\"e}nboer, Caglar Gulcehre, Dzmitry Bahdanau, Fethi Bougares, Holger Schwenk, and Yoshua Bengio.
\newblock Learning phrase representations using rnn encoder-decoder for statistical machine translation.
\newblock {\em arXiv preprint arXiv:1406.1078}, 2014.

\bibitem{cormen2022introduction}
Thomas~H Cormen, Charles~E Leiserson, Ronald~L Rivest, and Clifford Stein.
\newblock {\em Introduction to algorithms}.
\newblock MIT press, 2022.

\bibitem{rglru}
Soham De, Samuel~L Smith, Anushan Fernando, Aleksandar Botev, George Cristian-Muraru, Albert Gu, Ruba Haroun, Leonard Berrada, Yutian Chen, Srivatsan Srinivasan, et~al.
\newblock Griffin: Mixing gated linear recurrences with local attention for efficient language models.
\newblock {\em arXiv preprint arXiv:2402.19427}, 2024.

\bibitem{griffin}
Soham De, Samuel~L Smith, Anushan Fernando, Aleksandar Botev, George Cristian-Muraru, Albert Gu, Ruba Haroun, Leonard Berrada, Yutian Chen, Srivatsan Srinivasan, et~al.
\newblock Griffin: Mixing gated linear recurrences with local attention for efficient language models.
\newblock {\em arXiv preprint arXiv:2402.19427}, 2024.

\bibitem{deepmind2020jax}
DeepMind, Igor Babuschkin, Kate Baumli, Alison Bell, Surya Bhupatiraju, Jake Bruce, Peter Buchlovsky, David Budden, Trevor Cai, Aidan Clark, Ivo Danihelka, Antoine Dedieu, Claudio Fantacci, Jonathan Godwin, Chris Jones, Ross Hemsley, Tom Hennigan, Matteo Hessel, Shaobo Hou, Steven Kapturowski, Thomas Keck, Iurii Kemaev, Michael King, Markus Kunesch, Lena Martens, Hamza Merzic, Vladimir Mikulik, Tamara Norman, George Papamakarios, John Quan, Roman Ring, Francisco Ruiz, Alvaro Sanchez, Laurent Sartran, Rosalia Schneider, Eren Sezener, Stephen Spencer, Srivatsan Srinivasan, Milo\v{s} Stanojevi\'{c}, Wojciech Stokowiec, Luyu Wang, Guangyao Zhou, and Fabio Viola.
\newblock The {D}eep{M}ind {JAX} {E}cosystem, 2020.

\bibitem{youtubemix}
DeepSound.
\newblock Samplernn, 2017.

\bibitem{sc09}
Chris Donahue, Julian McAuley, and Miller Puckette.
\newblock Adversarial audio synthesis.
\newblock {\em arXiv preprint arXiv:1802.04208}, 2018.

\bibitem{elman}
Jeffrey~L Elman.
\newblock Finding structure in time.
\newblock {\em Cognitive science}, 14(2):179--211, 1990.

\bibitem{SaShiMi}
Karan Goel, Albert Gu, Chris Donahue, and Christopher R{\'e}.
\newblock It's raw! audio generation with state-space models.
\newblock In {\em International conference on machine learning}, pages 7616--7633. PMLR, 2022.

\bibitem{mamba}
Albert Gu and Tri Dao.
\newblock Mamba: Linear-time sequence modeling with selective state spaces.
\newblock {\em arXiv preprint arXiv:2312.00752}, 2023.

\bibitem{s4}
Albert Gu, Karan Goel, and Christopher R{\'e}.
\newblock Efficiently modeling long sequences with structured state spaces.
\newblock {\em arXiv preprint arXiv:2111.00396}, 2021.

\bibitem{he2016deep}
Kaiming He, Xiangyu Zhang, Shaoqing Ren, and Jian Sun.
\newblock Deep residual learning for image recognition.
\newblock In {\em Proceedings of the IEEE conference on computer vision and pattern recognition}, pages 770--778, 2016.

\bibitem{gelu}
Dan Hendrycks and Kevin Gimpel.
\newblock Gaussian error linear units (gelus).
\newblock {\em arXiv preprint arXiv:1606.08415}, 2016.

\bibitem{fid}
Martin Heusel, Hubert Ramsauer, Thomas Unterthiner, Bernhard Nessler, and Sepp Hochreiter.
\newblock Gans trained by a two time-scale update rule converge to a local nash equilibrium.
\newblock {\em Advances in neural information processing systems}, 30, 2017.

\bibitem{lstm}
Sepp Hochreiter and J{\"u}rgen Schmidhuber.
\newblock Long short-term memory.
\newblock {\em Neural computation}, 9(8):1735--1780, 1997.

\bibitem{kingma2021variational}
Diederik Kingma, Tim Salimans, Ben Poole, and Jonathan Ho.
\newblock Variational diffusion models.
\newblock {\em Advances in neural information processing systems}, 34:21696--21707, 2021.

\bibitem{mnist}
Yann LeCun, Corinna Cortes, and Christopher J.~C. Burges.
\newblock {The MNIST database of handwritten digits}, 1998.
\newblock Accessed: 2025-05-22.

\bibitem{adamw}
Ilya Loshchilov and Frank Hutter.
\newblock Decoupled weight decay regularization.
\newblock {\em arXiv preprint arXiv:1711.05101}, 2017.

\bibitem{beethoven}
Soroush Mehri, Kundan Kumar, Ishaan Gulrajani, Rithesh Kumar, Shubham Jain, Jose Sotelo, Aaron Courville, and Yoshua Bengio.
\newblock Samplernn: An unconditional end-to-end neural audio generation model.
\newblock {\em arXiv preprint arXiv:1612.07837}, 2016.

\bibitem{orvieto2023resurrecting}
Antonio Orvieto, Samuel~L Smith, Albert Gu, Anushan Fernando, Caglar Gulcehre, Razvan Pascanu, and Soham De.
\newblock Resurrecting recurrent neural networks for long sequences.
\newblock In {\em International Conference on Machine Learning}, pages 26670--26698. PMLR, 2023.

\bibitem{o1987speech}
Douglas O’Shaughnessy.
\newblock Speech communication, human and machine addison wesley.
\newblock {\em Reading MA}, 40:150, 1987.

\bibitem{is}
Tim Salimans, Ian Goodfellow, Wojciech Zaremba, Vicki Cheung, Alec Radford, Xi~Chen, and Xi~Chen.
\newblock Improved techniques for training gans.
\newblock In D.~Lee, M.~Sugiyama, U.~Luxburg, I.~Guyon, and R.~Garnett, editors, {\em Advances in Neural Information Processing Systems}, volume~29. Curran Associates, Inc., 2016.

\bibitem{shazeer2019fast}
Noam Shazeer.
\newblock Fast transformer decoding: One write-head is all you need.
\newblock {\em arXiv preprint arXiv:1911.02150}, 2019.

\bibitem{s5}
Jimmy~TH Smith, Andrew Warrington, and Scott~W Linderman.
\newblock Simplified state space layers for sequence modeling.
\newblock {\em arXiv preprint arXiv:2208.04933}, 2022.

\bibitem{dropout}
Nitish Srivastava, Geoffrey Hinton, Alex Krizhevsky, Ilya Sutskever, and Ruslan Salakhutdinov.
\newblock Dropout: a simple way to prevent neural networks from overfitting.
\newblock {\em The journal of machine learning research}, 15(1):1929--1958, 2014.

\bibitem{mel}
Stanley~Smith Stevens, John Volkmann, and Edwin~B Newman.
\newblock A scale for the measurement of the psychological magnitude pitch.
\newblock {\em The journal of the acoustical society of america}, 8(3):185--190, 1937.

\bibitem{inceptionv3}
Christian Szegedy, Vincent Vanhoucke, Sergey Ioffe, Jon Shlens, and Zbigniew Wojna.
\newblock Rethinking the inception architecture for computer vision.
\newblock In {\em Proceedings of the IEEE conference on computer vision and pattern recognition}, pages 2818--2826, 2016.

\bibitem{attention}
Ashish Vaswani, Noam Shazeer, Niki Parmar, Jakob Uszkoreit, Llion Jones, Aidan~N Gomez, {\L}ukasz Kaiser, and Illia Polosukhin.
\newblock Attention is all you need.
\newblock {\em Advances in neural information processing systems}, 30, 2017.

\bibitem{warden2018speech}
Pete Warden.
\newblock Speech commands: A dataset for limited-vocabulary speech recognition.
\newblock {\em arXiv preprint arXiv:1804.03209}, 2018.

\bibitem{prelayernorm}
Ruibin Xiong, Yunchang Yang, Di~He, Kai Zheng, Shuxin Zheng, Chen Xing, Huishuai Zhang, Yanyan Lan, Liwei Wang, and Tieyan Liu.
\newblock On layer normalization in the transformer architecture.
\newblock In {\em International conference on machine learning}, pages 10524--10533. PMLR, 2020.

\bibitem{rmsnorm}
Biao Zhang and Rico Sennrich.
\newblock Root mean square layer normalization.
\newblock {\em Advances in Neural Information Processing Systems}, 32, 2019.

\end{thebibliography}

\end{document}